\documentclass[final,nopreprintline,1p,times,authoryear]{elsarticle}
\usepackage{amsmath,amssymb,mathtools,amsthm}
\usepackage{booktabs,array,tabularx}
\usepackage[dvipsnames]{xcolor}
\usepackage{graphicx}
\usepackage{microtype}
\usepackage{natbib}
\usepackage{hyperref}
\usepackage[nameinlink,capitalise]{cleveref}
\graphicspath{{graphics/}{paper/graphics/}}
\hypersetup{colorlinks=true,linkcolor=MidnightBlue,citecolor=MidnightBlue,urlcolor=MidnightBlue}
\newcommand{\R}{\mathbb{R}}
\newcommand{\E}{\mathbb{E}}
\newcommand{\ind}{\mathbb{I}}
\newtheorem{proposition}{Proposition}[section]

\crefname{proposition}{proposition}{propositions}
\crefname{lemma}{lemma}{lemmas}
\crefname{corollary}{corollary}{corollaries}
\title{Relational Compression\\[0.35ex]
{\large\normalfont A Framework for Relational Fidelity in Constrained Representations}}
\begin{document}
\begin{frontmatter}
\author{Yaniv Shulman}
\ead{yaniv@shulman.info}
\begin{abstract}
What should a compressed representation preserve when the information of interest lies in relationships among elements rather than in the elements themselves?
We formulate relational compression in the classical source--description--reconstruction sense, but with relational structure itself as the fidelity-bearing content.
Each instance specifies the source relation, retained description, reconstructed or evaluated relation, fidelity criterion, and constrained resource.
We use this interface to situate selected methods from graph summarization, spectral sparsification, similarity-preserving representation, and relational distillation within a common formulation while keeping their different reconstruction and resource assumptions explicit.

We develop finite-codeword collision as one concrete realization.
Same-codeword probability yields a relational geometry linking pair-specific alignment and separation to aggregate R\'enyi-2 occupancy and the spherical geometry of categorical assignments, with exact objective correspondences to squared-Euclidean centroid reconstruction and normalized graph association and cut.
Graph and image studies illustrate complementary routes within the finite-codeword family: graph- and teacher-defined relational requirements act directly on equality or collision, while reconstruction acts through a joint decoder.
Together, these results illustrate how distinct relational requirements can be formulated and tested within a common constrained-representation framework.
\end{abstract}
\end{frontmatter}
\section{Introduction}
\label{sec:introduction}

Compression asks which distinctions in a source should survive a constrained representation and which can be discarded.
In a reconstruction problem, this choice is expressed through the cost of reproducing the source inaccurately: a finite description is useful when it preserves the features that matter at an acceptable cost~\citep{gray1998quantization,cover2006elements}.
For many sources, however, the structure of interest lies in relationships among elements.
A graph specifies connections among vertices, a distance or similarity matrix records geometry among observations, and a teacher representation induces affinities among examples.
Preserving these relationships gives a natural compression question: how can their structure be represented more economically while retaining the distinctions selected by a fidelity criterion?

Several seemingly unrelated methods approach this question, some explicitly and others through representation learning.
Graph summaries retain reduced descriptions of adjacency structure~\citep{riondato2017graph}; similarity-preserving hashing represents observations by compact codes~\citep{weiss2008spectral}; and relational distillation transfers relationships from teacher to student representations~\citep{park2019relational,tung2019similarity}.
Their descriptions, fidelity criteria, and constrained resources differ, but they can be examined through the same set of choices: the source relation, the retained description, the relation recovered from it, the fidelity of that recovery, and its resource cost.
Making these choices explicit separates representation, decoder, fidelity, and resource: the same retained assignments can support different relational decoders, while similar nominal capacity or fidelity under one criterion need not imply the same code organization or preservation under another.

We formulate \emph{relational compression} in the classical source--description--reconstruction sense, with relational structure as the fidelity-bearing source content.
For a fixed or shared carrier set, its basic interface is
\begin{equation}
S\longrightarrow m_S\longrightarrow\widetilde S,
\label{eq:relational-interface-intro}
\end{equation}
where $S$ is the source relation, $m_S$ is its retained description, and $\widetilde S$ is the relation reconstructed or evaluated from that description.
A relational distortion specifies how preservation of source structure is assessed, while a resource functional measures the cost of retaining it.
Together, these components turn relational preservation into a constrained compression problem: what relational fidelity can be retained at a given description resource?
Specific instances of this pattern occur across established literatures, including explicit relational and graph-compression formulations discussed in \cref{sec:methods-map}.
The contribution here is to make the source relation, retained description, reconstructed or evaluated relation, fidelity criterion, and resource convention explicit as a common cross-domain interface while retaining the assumptions specific to each construction.

Within this framework, we develop finite-codeword equality as a representation mechanism.
A finite assignment groups elements according to their codewords; probabilistic assignments replace hard equality by the probability of a same-codeword collision.
This gives a setting in which to study how a source relation shapes the representation through alignment and separation, and how aggregate codeword use describes its organization.
The main contributions are:
\begin{itemize}
    \item \textbf{A framework for relational compression.}
    We formalize relational compression as a source--description--reconstruction problem in which relational structure is the fidelity-bearing content.
    The framework explicitly separates the source relation, retained description, reconstructed or evaluated relation, relational fidelity criterion, constrained resource, and shared context.
    This provides a common interface for formulating and comparing relation-preserving methods across domains while retaining the assumptions specific to each construction.

    \item \textbf{Finite-codeword collision as a reusable realization.}
    Within this framework, we develop same-codeword collision as a general representation-side mechanism for finite representations.
    The resulting construction connects pair-specific alignment and separation, aggregate R\'enyi-2 occupancy, reference-weighted affinity, and positive-spherical geometry, and exposes exact correspondences with squared-Euclidean centroid reconstruction and normalized graph association and cut.
    Controlled synthetic, graph, and image studies illustrate how the same collision machinery can be instantiated under different relational and reconstruction-defined fidelity requirements.
\end{itemize}

The remainder of the paper is organized as follows.
\Cref{sec:relational-compression-framework} defines the relational-compression problem and its general fidelity constructions.
\Cref{sec:methods-map} presents the method mappings and worked summary example.
\Cref{sec:collision-geometry,sec:collision-entropy,sec:alignment-separation} develop finite-codeword collision geometry and its objectives, followed by selected realizations in \cref{sec:relational-distortion-realizations} and experimental studies in \cref{sec:experiments}.

\section{A Framework for Relational Compression}
\label{sec:relational-compression-framework}

Relational compression requires identifying the relational structure to preserve, the description retained about it, how that description is decoded or evaluated, and how fidelity is traded against resource.
This section formalizes those components and introduces pairwise, content-based, and operator-based fidelities.
We begin with a simple block-valued summary example to show how they fit together.

Let $V$ be a finite carrier, let $S$ be a pairwise relation on $V$, let $c:V\to\{1,\ldots,K\}$ assign each element to a group, and let $\Xi=(\Xi_{ab})_{a,b=1}^K$ be a $K\times K$ table of reproduction values.
The retained description $m_S=(c,\Xi)$ defines the block-valued relational summary
\begin{equation}
\widetilde S(v,w)=\Xi_{c(v),c(w)}.
\label{eq:block-valued-relational-summary}
\end{equation}
The assignments determine which pairs share a reproduction value, while the table determines the value reproduced for those pairs.
A fidelity criterion measures how well $\widetilde S$ preserves $S$, and a resource measure accounts for the retained assignments and table values.
The same finite assignments could support a different decoder, so they do not by themselves uniquely determine the reconstructed relation.
This is a compression formulation because the full relation is represented through the constrained description $m_S=(c,\Xi)$, trading the resource required to retain that description against the fidelity of the relation reconstructed from it by a decoder.

\subsection{Relational sources}

A relational source consists of elements together with structure defined among them.
For a graph, the carrier elements are vertices and the relation records adjacency or edge weights; for a collection of observations, the carrier elements are the observations and the relation may record pairwise distances, similarities, or neighborhood structure.
Separating the elements from their relation identifies the content whose preservation will be assessed.

Let $V$ be a finite carrier set.
A pairwise relational object is
\begin{equation}
\mathfrak R=(V,S),
\qquad
S:V\times V\to\mathcal Y,
\end{equation}
where $S(v,w)$ is the relation value assigned to the pair $(v,w)$.
The codomain $\mathcal Y$ specifies the type of relation value, and the admissible class $\mathcal S_V\subseteq\mathcal Y^{V\times V}$ specifies its structural properties, such as symmetry or nonnegativity.
Reconstructed relations belong to a class $\widehat{\mathcal S}_V\subseteq\widehat{\mathcal Y}^{V\times V}$.
Thus a reconstructed object is $\widetilde{\mathfrak R}=(V,\widetilde S)$, with the same carrier and a reproduced relation.

We treat the carrier identities as fixed or shared between encoder and decoder.
The compression target is therefore $S$ conditional on $V$.
Additional shared context $C$ may contain a reproduction alphabet, a carrier ordering, or model parameters used across source objects.
The retained description contains the source-specific information required for reconstruction beyond this context.
We suppress $C$ in the notation when it is fixed.

In learned realizations, an element $v$ may be presented to the encoder through an observation $x_v$, with $X_V=(x_v)_{v\in V}$ denoting the collection of inputs.
These observations provide the information from which the retained description is constructed, while the relational structure $S$ specifies the relationships among carrier elements that are to be represented.
For example, a graph encoder may receive node features $x_v$ while the source relation $S$ is the graph adjacency or edge-weight structure.
The same per-object formulation applies to datasets of relational objects $(V_i,S_i)$ with varying carrier sets.

\subsection{Descriptions and reconstruction}

The block-valued summary above already separates the stored description from the relation reconstructed from it.
We now formalize that distinction.

A compression scheme specifies a description space $\mathcal M_V$, an encoder $\mathsf{Enc}_V$, and a decoder $\mathsf{Dec}_V:\mathcal M_V\to\widehat{\mathcal S}_V$.
For a source relation $S$, write
\begin{equation}
m_S=\mathsf{Enc}_V(S),
\qquad
\widetilde S=\mathsf{Dec}_V(m_S).
\label{eq:relational-reconstruction-interface}
\end{equation}
Additional encoder-side observations can be included through $\mathsf{Enc}_V(S,X_V)$.
The description may contain stored relation values, element assignments, coordinates, or parameters of a relational reconstruction rule.
The decoder can materialize $\widetilde S$ or evaluate its entries on demand through $(m_S,v,w)\mapsto\widetilde S(v,w)$.
Either form gives a relation determined by the description and shared context.

With a common source and reconstruction codomain, a scheme is \emph{lossless} on $\mathcal S_V$ when
\begin{equation}
\mathsf{Dec}_V(\mathsf{Enc}_V(S))=S
\qquad
\text{for every }S\in\mathcal S_V.
\label{eq:lossless-relational-compression}
\end{equation}
In the lossy setting, the reconstructed relation approximates the source according to a specified fidelity criterion.
For the block-valued summary in \cref{eq:block-valued-relational-summary}, the table reproduces exactly those source relations that are constant on its evaluated blocks; variation within a block becomes approximation error.
The description and decoder therefore determine the family of relational structures available for reconstruction.

This is the familiar encoder--description--decoder interface of classical compression applied to a relational source~\citep{cover2006elements}.
\Cref{tab:framework-components} summarizes its components together with the fidelity and resource measures introduced next.

\begin{table}[t]
\centering
\small
\setlength{\tabcolsep}{4pt}
\renewcommand{\arraystretch}{1.12}
\setlength{\extrarowheight}{1pt}
\begin{tabularx}{\linewidth}{>{\raggedright\arraybackslash}p{0.18\linewidth}
                            >{\raggedright\arraybackslash}X
                            >{\raggedright\arraybackslash}X}
\toprule
Role & Classical compression & Relational compression \\
\midrule
Source
& Source object $x$
& Relational object $(V,S)$, with $V$ fixed or shared \\
Description
& Retained message $m$
& Retained relational description $m_S$ \\
Reconstruction
& Reproduced object $\widehat x$
& Reconstructed relation $\widetilde S$ \\
Decoder
& Rule mapping $m$ to $\widehat x$
& Rule constructing $\widetilde S$ or evaluating its entries \\
Fidelity
& Distortion $d(x,\widehat x)$
& Relational distortion $\Delta_V(S,\widetilde S)$ \\
Resource
& Cost of the description
& Cost $\Omega(m_S)$ in specified units \\
Shared context
& Information available to both sides
& Carrier set and shared context $C$ \\
\bottomrule
\end{tabularx}
\caption{The compression interface for ordinary and relational source objects.}
\label{tab:framework-components}
\end{table}

Structured descriptions can determine many pair relations from comparatively few retained components; subsequent sections instantiate this principle with block summaries and finite codes.

\subsection{Resources and relational fidelity}
\label{sec:relational-distortion}

Given a description and decoder, compression becomes a trade-off between description cost and relational distortion: the resource measure records the former and the fidelity criterion the latter.

A resource functional
\begin{equation}
\Omega:\mathcal M_V\to\R_{\ge0}
\end{equation}
measures description cost in specified units, such as encoded bits, stored relation entries, or scalar coordinates.
A relational distortion
\begin{equation}
\Delta_V:\mathcal S_V\times\widehat{\mathcal S}_V\to\R_{\ge0}
\end{equation}
measures distortion of the source relation according to the chosen fidelity criterion.
Whenever exact source reconstruction is admissible, we take $\Delta_V(S,S)=0$.
More generally, zero distortion means that the chosen fidelity criterion assigns no distortion to the reconstruction; it need not imply literal equality between $S$ and $\widetilde S$.
The decoder determines $\widetilde S$ from the retained description and any shared context $C$, while fidelity is assessed against the source $S$.
Thus $\Delta_V$ specifies how well the reconstructed or evaluated relation preserves the source according to the declared criterion.
A training objective may equal this distortion, include it as one term, or optimize a surrogate for it.

For a source relation $S$, the best distortion attainable within the declared description space and decoder under a resource budget $B$ is
\begin{equation}
D_V^\star(B;S)
=
\inf_{\substack{m\in\mathcal M_V\\\Omega(m)\le B}}
\Delta_V\bigl(S,\mathsf{Dec}_V(m)\bigr).
\label{eq:lossy-relational-compression}
\end{equation}
An encoder optimized for this criterion selects, or approximates, a minimizing admissible description for each source relation.
As the budget increases, the feasible set expands, so the best attainable distortion is nonincreasing.
Thus $D_V^\star(B;S)$ describes the resource--distortion trade-off for the source: how much relational distortion remains at each available description budget.

The definition of $\Delta_V$ is deliberately general.
We next give several concrete fidelity constructions used in subsequent sections, beginning with a pairwise criterion.

\subsection{Pairwise relational fidelity}

For pairwise relational sources, a natural construction is to evaluate relation values locally and average their distortion over a declared distribution of carrier pairs:
\begin{equation}
\Delta_V(S,\widetilde S)
=
\E_{(\mathsf V,\mathsf V')\sim P_{\mathrm{pair}}}
\bigl[d_{\mathrm{rel}}(S(\mathsf V,\mathsf V'),\widetilde S(\mathsf V,\mathsf V'))\bigr].
\label{eq:pairwise-relational-fidelity}
\end{equation}
Here $P_{\mathrm{pair}}$ is a probability distribution on carrier pairs, $\mathsf V,\mathsf V'$ denote a random pair drawn according to this distribution, and $d_{\mathrm{rel}}:\mathcal Y\times\widehat{\mathcal Y}\to\R_{\ge0}$ is a local relation-value distortion function.
Uniform pair weights with squared error give average relation-value reconstruction error.
Pair distributions concentrated on graph edges, neighboring observations, or selected queries emphasize those portions of the source instead.
Thus $P_{\mathrm{pair}}$ determines which source relations receive weight, while $d_{\mathrm{rel}}$ determines how relational reconstruction error is measured locally.

This construction is useful when relational fidelity is naturally assessed through individual relation values.
Other applications evaluate larger-scale structural properties of the relation; we consider two such constructions next.

\subsection{Selected structural fidelity constructions}

Depending on which properties of the source relation are important, a criterion may instead evaluate retained structural content, the action of the relation on a family of probes, or another source-dependent property.
We develop two related constructions here because they are used in the graph realizations later in the paper, while other source-dependent fidelity constructions remain possible.

\paragraph{Content-based fidelity}

Some fidelity criteria are naturally expressed in terms of relational content retained during compression.
For example, deleting graph edges can remove direct connections and the paths supported by them.
A content functional assigns a value to the structure that remains, allowing fidelity to be measured by the decrease in that value.

Let $\preceq_{\mathrm{rel}}$ be a compatible, composable simplification relation on the family under consideration.
A nonnegative structural-content functional $\mathcal I$ is monotone when
\begin{equation}
S_2\preceq_{\mathrm{rel}}S_1
\quad\Longrightarrow\quad
\mathcal I(S_2)\le\mathcal I(S_1).
\label{eq:relational-content-monotone}
\end{equation}
For $\widetilde S\preceq_{\mathrm{rel}}S$, define
\begin{equation}
D_{\mathcal I}(S,\widetilde S)
=\mathcal I(S)-\mathcal I(\widetilde S).
\label{eq:relational-content-distortion}
\end{equation}
The resulting distortion is nonnegative and increases or stays constant along a sequence of admissible simplifications.
For $\mathcal I(S)>0$, the normalized version is
\begin{equation}
\overline D_{\mathcal I}(S,\widetilde S)
=1-\frac{\mathcal I(\widetilde S)}{\mathcal I(S)}.
\label{eq:normalized-relational-content-distortion}
\end{equation}
It measures the fraction of source content lost: exact retention gives zero, and a null reconstruction with zero content gives one.

Choosing $\mathcal I$ specifies the structural property to be preserved.
For example, total retained edge weight gives direct relation distortion, while path- or walk-based content values connections supported over several steps, and nonnegative weighted sums combine several such properties.
The graph realizations later in the paper use this construction to connect simple relational deletions with their effects on source structure.

\paragraph{Operator-based fidelity}

Given a fixed carrier $V$, we use a relation-derived operator to mean a linear map $T_S:\R^V\to\R^V$ determined by the source relation $S$.
For a graph, familiar examples include adjacency, transition, and Laplacian operators; the symmetric fidelity construction below and the graph realizations later in the paper primarily use the Laplacian.
A relation can also be evaluated through the behavior it supports on its carrier.
For a graph, a signal assigns a value to each vertex, and the graph Laplacian measures how that signal varies across connections~\citep{shuman2013emerging}.
Such signals act as probes: comparing their energy before and after compression measures which aspects of the source geometry have been retained.
This leads to a fidelity construction that will supply several of the graph objectives used later.

Let $T_S$ and $T_{\widetilde S}$ be finite-dimensional symmetric operators on signals over $V$, and suppose
\begin{equation}
0\preceq T_{\widetilde S}\preceq T_S,
\label{eq:operator-compression-order}
\end{equation}
where $\preceq$ is the Loewner order.
For a fixed probe $f$, the removed energy is
\begin{equation}
D_f(S,\widetilde S)
=f^\top(T_S-T_{\widetilde S})f.
\label{eq:operator-probe-distortion}
\end{equation}
The operator order ensures that this quantity is nonnegative.
It measures fidelity along the particular signal direction selected by $f$.

To evaluate a family of directions, let $F$ be a random probe with finite second moment $M=\E[FF^\top]$.
The average distortion is
\begin{equation}
\begin{aligned}
D_M(S,\widetilde S)
&=\E\bigl[F^\top(T_S-T_{\widetilde S})F\bigr]\\
&=\operatorname{tr}\bigl[(T_S-T_{\widetilde S})M\bigr].
\end{aligned}
\label{eq:operator-expected-distortion}
\end{equation}
The matrix $M$ determines how signal directions are weighted.
A rank-one choice $M=ff^\top$ recovers the fixed-probe distortion, while $M=I$ weights coordinate directions equally.
Source-derived choices can emphasize modes of the original relation; these are computed from that source and held fixed while candidate reconstructions are compared.
For fixed $M\succeq0$, the average operator distortion is itself a content-based distortion: defining $\mathcal I_M(S)=\operatorname{tr}(T_SM)$ makes \cref{eq:operator-expected-distortion} exactly the corresponding content difference.
The operator viewpoint remains useful because it provides a structured way to specify which signal directions should be preserved and supports, for example, a worst-case criterion that asks how much relative energy can be lost in any source-supported direction:
\begin{equation}
D_{\mathrm{wc}}(S,\widetilde S)
=\sup_{f:f^\top T_Sf>0}
\frac{f^\top(T_S-T_{\widetilde S})f}{f^\top T_Sf}.
\label{eq:operator-worst-case-distortion}
\end{equation}
Under \cref{eq:operator-compression-order}, this lies in $[0,1]$ for $T_S\ne0$; we set it to zero when $T_S=0$.
The average criterion expresses fidelity under a chosen probe distribution, while the worst-case criterion measures the largest relative distortion.
Spectral sparsification gives a related all-probe graph objective, using reweighted sparse reconstructions to preserve Laplacian quadratic forms across signal directions~\citep{spielman2011effective}.
These constructions illustrate several ways in which $\Delta_V$ can be specified.
Other fidelity criteria may be used when different relational properties are important.

\section{Existing Methods Through the Framework}
\label{sec:methods-map}

This section identifies the framework components in representative existing methods and then develops the block-valued summary of \cref{eq:block-valued-relational-summary} as a worked example.
Readers interested primarily in the finite-codeword construction may proceed directly to \cref{sec:collision-geometry}.

\subsection{Representative mappings}

Relational compression has explicit precedents, including the fuzzy-relational image-compression construction of \citet{hirota1999fuzzy} and graph-specific rate--distortion formulations~\citep{lynn2021compressibility,wafula2023rate}.
\Cref{tab:method-framework-map} places these alongside graph approximation, compact representation, and relational transfer while retaining each construction's own fidelity and resource convention.

\begin{table}[t]
\centering
\small
\setlength{\tabcolsep}{3pt}
\renewcommand{\arraystretch}{1.12}
\setlength{\extrarowheight}{1pt}
\begin{tabularx}{\linewidth}{>{\raggedright\arraybackslash}p{0.19\linewidth}
                            >{\raggedright\arraybackslash}X
                            >{\raggedright\arraybackslash}X
                            >{\raggedright\arraybackslash}X}
\toprule
Construction & Source and fidelity & Description and reconstruction & Resource \\
\midrule
Graph summaries~\citep{riondato2017graph}
& Adjacency structure; reconstruction or cut-norm error
& Vertex groups and superedge values; block-valued relation
& Assignments and summary values \\

Spectral sparsification~\citep{spielman2011effective}
& Laplacian quadratic forms; relative approximation
& Sparse weighted edges; reconstructed Laplacian
& Edge count or encoded weighted-edge description \\

Spectral hashing~\citep{weiss2008spectral}
& Source similarities; similarity-weighted code-distance criterion
& Binary codes; Hamming geometry
& Bits per element \\

$t$-SNE~\citep{vandermaaten2008visualizing}
& Pair-affinity distribution; KL discrepancy
& Low-dimensional coordinates; normalized pair affinity
& Coordinate dimension \\

Distance-wise relational distillation~\citep{park2019relational}
& Teacher pair distances; relational discrepancy
& Student features or student model; induced pair distances
& Feature or model representation \\

Centroid quantization~\citep{gray1998quantization}
& Squared source distances; within-code pairwise distortion
& Assignments and centroid reproductions
& Assignments and reproduction vectors \\

Normalized cut~\citep{shi2000normalized,dhillon2004kernel}
& Graph affinities; volume-normalized partition criterion
& Finite assignments; partition equality
& Finite partition cardinality \\
\bottomrule
\end{tabularx}
\caption{Representative methods expressed through the core components of the relational-compression framework: source relation, fidelity criterion, retained description, reconstruction, and resource.}
\label{tab:method-framework-map}
\end{table}

\paragraph{Graph descriptions}
Graph summarization is a broad literature on constructing smaller or more interpretable descriptions of graph structure, with objectives including storage reduction, query support, visualization, and structural pattern discovery~\citep{liu2018graph,shabani2024comprehensive}.
In the family studied by \citet{riondato2017graph}, vertex groups and superedge densities reconstruct a block-valued approximation to the source graph.
A pair's group indices select a stored reproduction value; the worked example below makes this decoder explicit.

Spectral sparsification emphasizes the behavior supported by a graph rather than the accuracy of individual adjacency entries.
\citet{spielman2011effective} construct sparse weighted subgraphs whose Laplacian quadratic forms approximate those of the source for every signal.
For a nonzero source Laplacian $L$ and reconstructed Laplacian $\widetilde L$ on the same carrier, with the same nullspace, a corresponding fidelity is
\begin{equation}
D_{\mathrm{spec}}(L,\widetilde L)
=\sup_{f:f^\top Lf>0}
\frac{\bigl|f^\top(L-\widetilde L)f\bigr|}{f^\top Lf}.
\end{equation}
The absolute value measures relative deformation in either direction, accommodating the reweighting of retained edges.
For deletion or attenuation, where $0\preceq\widetilde L\preceq L$, the operator distortion in \cref{eq:operator-worst-case-distortion} measures relative energy removed.
The sparse edge set and its weights determine the reconstructed Laplacian and the description cost.

\paragraph{Codes and coordinates}
Compact element representations allow relations to be evaluated without storing a value for every pair.
Spectral hashing~\citep{weiss2008spectral} uses binary codes whose Hamming geometry reflects source similarities.
The finite description is the collection of codes, and the number of bits per code specifies its per-element capacity.
A hash function used to encode new observations has a separate storage or shared-model convention.
The relational decoder here is Hamming distance: distinct codewords can represent nearby elements as well as distant ones.

The $t$-SNE construction~\citep{vandermaaten2008visualizing} gives a continuous-coordinate example.
Its source is a normalized pair-affinity distribution $S$ on distinct ordered pairs.
Coordinates $y_i$ determine the reconstructed distribution
\begin{equation}
\widetilde S_{ij}
=\frac{(1+\|y_i-y_j\|_2^2)^{-1}}
{\sum_{a\ne b}(1+\|y_a-y_b\|_2^2)^{-1}},
\qquad i\ne j,
\end{equation}
and the final objective is $D_{\mathrm{KL}}(S\|\widetilde S)$.
The stored coordinates suffice to evaluate the affinities, including their normalization.
This mapping uses coordinate dimension as the structural resource.
A finite-bit version additionally specifies coordinate precision and evaluates the relations produced by those quantized coordinates.

\paragraph{Relational transfer}
A teacher can supply the source relation even when the original observations remain available.
The distance-wise component of relational knowledge distillation~\citep{park2019relational} uses normalized teacher pair distances to supervise the relations induced by student features.
For a fixed evaluation collection, the student features and the prescribed distance-normalization rule determine the reconstructed relation.
An inductive interpretation instead retains a student model and computes the features from available observations.
The constrained representation is feature storage in the first case and the student model in the second; its size and precision specify the corresponding resource.
The method also includes an angle-wise term defined on triples of examples, providing a concrete higher-arity relational example.
\ref{app:higher-order-collision-geometry} develops higher-order collision constructions, while noting that source-relation arity and collision arity are distinct; the main development here focuses on the pairwise case.

\paragraph{Classical objective correspondences}
Relational structure can also emerge from reformulating an objective that begins with a different description of fidelity.
Squared-error centroid reconstruction admits an exact within-code pairwise form, while normalized cut evaluates how a partition preserves graph affinities after accounting for group volumes.
\Cref{sec:vq,sec:normalized-cut-realization} develop these correspondences using inverse-mass-weighted code affinity.
Their role is to expose common mathematical structure: the full quantizer retains reproduction vectors as well as assignments, whereas the partition can be evaluated against the source graph using its equality relation alone.
These examples illustrate why specifying a relational source and its fidelity can be informative even when the original objective is introduced through reconstruction or partitioning.

\subsection{A worked example: block-valued relational summaries}
\label{sec:block-summary-walkthrough}

We now return to the block-valued relational summary introduced in \cref{eq:block-valued-relational-summary} and make its fidelity and resource accounting explicit.
Following this description idea from graph summarization~\citep{riondato2017graph}, we derive the optimal block values under squared relational error and then account for their finite-precision storage.
The example shows how element assignments, together with a reproduction table, support a reconstructed relation richer than same-group equality.
By contrast, beginning with \cref{sec:collision-geometry}, the subsequent finite-codeword development focuses on equality and its probabilistic collision relaxation as the primitive representation-side relation.

Let $V=\{1,\ldots,n\}$ with $n\ge2$, and let $S$ be a symmetric real-valued relation with a fixed zero diagonal.
We evaluate the unordered distinct pairs
\begin{equation}
\mathcal P=\{(i,j):1\le i<j\le n\}.
\end{equation}
Choose an assignment $c:V\to\{1,\ldots,K\}$ and a symmetric table $\Xi\in\R^{K\times K}$.
The retained description is $m_S=(c,\Xi)$, and its decoder is
\begin{equation}
\widetilde S_{ij}=\Xi_{c(i),c(j)}\quad(i\ne j),
\qquad
\widetilde S_{ii}=0.
\label{eq:block-summary-decoder}
\end{equation}
An entry $\Xi_{aa}$ reproduces relations between distinct elements assigned to group $a$.
Thus the assignments determine which pairs share a reproduction, and the table determines the value reproduced.

The squared relational distortion is
\begin{equation}
D_{\mathrm{block}}(S;c,\Xi)
=\frac{1}{|\mathcal P|}
\sum_{(i,j)\in\mathcal P}
\bigl(S_{ij}-\Xi_{c(i),c(j)}\bigr)^2.
\label{eq:block-summary-distortion}
\end{equation}
For $a\le b$, define
\[
B_{ab}(c)
:=
\{(i,j)\in\mathcal P:
\min\{c(i),c(j)\}=a,\;
\max\{c(i),c(j)\}=b\},
\]
so $B_{ab}(c)$ contains exactly the evaluated pairs whose two assigned groups are $a$ and $b$, irrespective of their order.
Once the assignment is fixed, choosing the table becomes a collection of independent reproduction problems, one for each nonempty block.
The variance identity supplies both the optimal reproduction and its error decomposition.

\begin{proposition}[Optimal block reproduction]
\label{prop:block-means}
For a fixed assignment $c$, each block with $|B_{ab}(c)|>0$ is optimally reproduced by
\begin{equation}
\Xi_{ab}^\star
=\frac{1}{|B_{ab}(c)|}
\sum_{(i,j)\in B_{ab}(c)}S_{ij}.
\label{eq:block-optimal-mean}
\end{equation}
For any symmetric table $\Xi$,
\begin{equation}
\begin{aligned}
D_{\mathrm{block}}(S;c,\Xi)
={}&D_{\mathrm{block}}(S;c,\Xi^\star)\\
&+\frac{1}{|\mathcal P|}
\sum_{\substack{a\le b\\|B_{ab}(c)|>0}}
|B_{ab}(c)|(\Xi_{ab}-\Xi_{ab}^\star)^2.
\end{aligned}
\label{eq:block-mean-decomposition}
\end{equation}
Empty blocks may be assigned a fixed default value.
\end{proposition}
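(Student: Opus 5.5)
The plan is to use the fact that the blocks $B_{ab}(c)$, $a\le b$, partition the evaluated pair set $\mathcal P$. Every $(i,j)\in\mathcal P$ has a well-defined unordered pair of groups $\{c(i),c(j)\}$. Because $\Xi$ is symmetric, $\Xi_{c(i),c(j)}=\Xi_{ab}$ whenever $(i,j)\in B_{ab}(c)$, regardless of the order of $c(i)$ and $c(j)$. Regrouping the sum in \cref{eq:block-summary-distortion} by block therefore gives
\[
D_{\mathrm{block}}(S;c,\Xi)=\frac{1}{|\mathcal P|}\sum_{a\le b}\ \sum_{(i,j)\in B_{ab}(c)}\bigl(S_{ij}-\Xi_{ab}\bigr)^2 .
\]
Empty blocks contribute nothing, so their table values are irrelevant; this justifies the closing remark that they may be set to any fixed default.

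Next, fix a nonempty block and write $\Xi_{ab}^\star$ for its mean, as in \cref{eq:block-optimal-mean}. Inside each term, add and subtract the mean: $S_{ij}-\Xi_{ab}=(S_{ij}-\Xi^\star_{ab})+(\Xi^\star_{ab}-\Xi_{ab})$. Expanding the square gives three pieces. The cross term is $2(\Xi^\star_{ab}-\Xi_{ab})\sum_{(i,j)\in B_{ab}(c)}(S_{ij}-\Xi^\star_{ab})$, and it vanishes by the definition of the mean. The remaining two pieces are $\sum_{B_{ab}}(S_{ij}-\Xi^\star_{ab})^2+|B_{ab}(c)|(\Xi_{ab}-\Xi^\star_{ab})^2$. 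This is the variance identity, applied block by block.

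Summing over blocks and dividing by $|\mathcal P|$ gives \cref{eq:block-mean-decomposition}. Here the first pieces sum to exactly $D_{\mathrm{block}}(S;c,\Xi^\star)$, after noting that $\Xi^\star$ is symmetric when extended by $\Xi^\star_{ba}:=\Xi^\star_{ab}$, and takes its default value on empty blocks. Optimality is then immediate: the correction term is a nonnegative weighted sum of squares. It vanishes exactly when $\Xi_{ab}=\Xi^\star_{ab}$ on every nonempty block, so $\Xi^\star$ is a minimizer and is unique on the nonempty blocks.

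None of these steps is substantive. The only point needing care is the bookkeeping in the first step. One has to check that the blocks, indexed by $a\le b$, really partition $\mathcal P$. One also has to check that the symmetry of $\Xi$, together with the $\min/\max$ definition of $B_{ab}(c)$, makes the reproduced value constant on each block; this covers the diagonal blocks $a=b$ as well as the off-diagonal ones. Finally, the table $\Xi^\star$ must be a legitimate symmetric table, so that $D_{\mathrm{block}}(S;c,\Xi^\star)$ is well defined.
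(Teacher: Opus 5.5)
Your proposal is correct and follows the paper's proof: add and subtract the block mean in each nonempty block, note that the cross term vanishes, and sum the remaining squares over blocks. Your extra bookkeeping (the blocks $B_{ab}(c)$ partition $\mathcal P$, symmetry of $\Xi$ makes the reproduced value constant on each block, and $\Xi^\star$ is a legitimate symmetric table) only makes explicit what the paper leaves implicit.
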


\begin{proof}
In each nonempty block, expand
$S_{ij}-\Xi_{ab}=(S_{ij}-\Xi_{ab}^\star)+(\Xi_{ab}^\star-\Xi_{ab})$.
The cross term vanishes because
$\sum_{(i,j)\in B_{ab}(c)}(S_{ij}-\Xi_{ab}^\star)=0$.
Summing the remaining squared terms over blocks gives \cref{eq:block-mean-decomposition} and the minimizing table.
\end{proof}

The optimal table averages the relation values that the assignment has placed together.
For binary adjacency, each reproduction is the edge density of its block.
For a weighted source, it is the corresponding mean relation value.
The first term in \cref{eq:block-mean-decomposition} measures variation within these blocks; the second measures the additional error from using any other table.
The same calculation extends directly to a nonuniform pair distribution $P_{\mathrm{pair}}$ from \cref{eq:pairwise-relational-fidelity}, yielding weighted block means.

For example, consider
\begin{equation}
S=\begin{pmatrix}
0&0&1&3\\
0&0&3&1\\
1&3&0&0\\
3&1&0&0
\end{pmatrix},
\qquad
c=(1,1,2,2).
\label{eq:block-toy-source}
\end{equation}
The two within-group relation values are zero, while the four cross-group values alternate between one and three.
Averaging uniformly over the six unordered pairs, the optimal table and reconstruction are
\begin{equation}
\Xi^\star=\begin{pmatrix}0&2\\2&0\end{pmatrix},
\qquad
\widetilde S=\begin{pmatrix}
0&0&2&2\\
0&0&2&2\\
2&2&0&0\\
2&2&0&0
\end{pmatrix}.
\label{eq:block-toy-reconstruction}
\end{equation}
Each cross-group pair has squared error one, giving total average distortion $4/6=2/3$.
The summary retains the source's cross-group organization while averaging the variation within it.
By contrast, masking the source to retain only pairs with $c(i)=c(j)$ gives the zero matrix for the same assignment and has distortion $20/6=10/3$.
The table decoder preserves the strong relations between groups that this mask removes.

\paragraph{Description cost and relation-value quantization}
To describe the summary with finitely many bits, we also specify how its table values are represented.
Choose a shared reproduction alphabet containing at most $2^t$ values, where $t\ge1$ is an integer.
Conditional on $n$, $K$, the carrier ordering, and this alphabet, a fixed-length scheme that stores every assignment and all upper-triangular table entries uses
\begin{equation}
\Omega_t(c,\Xi)
=n\lceil\log_2K\rceil+\frac{K(K+1)}{2}\,t
\label{eq:block-description-cost}
\end{equation}
bits.
The first term describes which group contains each element; the second describes the relations between groups.
This makes the allocation of resources explicit: a larger group alphabet permits a finer block structure but requires more assignment or table capacity.

For a fixed assignment, \cref{eq:block-mean-decomposition} also identifies the optimal quantized table.
Each active entry is a nearest permitted value to $\Xi_{ab}^\star$, and its contribution above the unquantized optimum is $|B_{ab}(c)|/|\mathcal P|$ times the squared quantization error.
Grouping and table precision therefore contribute separately to the relational distortion within one complete description.
Allowing at most $K$ groups gives nested unquantized approximation families as $K$ increases, since a finer table can reproduce a coarser one.
Combining this flexibility with \cref{eq:block-description-cost} yields a concrete instance of the resource--distortion problem in \cref{eq:lossy-relational-compression}.

\section{Finite-Codeword Collision Geometry}
\label{sec:collision-geometry}

Having defined relational fidelity independently of representation, we now study finite codewords as one concrete description family.
A hard code induces same-codeword equality and hence a partition into at most $K$ classes; probabilistic assignments relax this to the probability that independently drawn codewords collide.
This collision probability is the representation-side geometry used below, linking smooth optimization to hard partitions, structured binary parameterizations, and codeword-prevalence corrections.

\subsection{Probabilistic finite encoders}

A hard finite assignment is discrete, whereas learned representations are often optimized through continuous parameters.
We therefore begin by replacing each hard codeword with a categorical distribution over the same finite alphabet.
This retains the finite set of available states while providing a smooth representation of assignment uncertainty.

Let $\mathcal Z=\{1,\ldots,K\}$ and let an encoder map an input $x$ to
\begin{equation}
q_\theta(\cdot\mid x)\in\Delta^{K-1},\qquad
\Delta^{K-1}=\{q\in\R_{\ge0}^K:\textstyle\sum_z q_z=1\}.
\label{eq:discrete-encoder}
\end{equation}
The map $x\mapsto q_\theta(\cdot\mid x)$ may itself be deterministic.
Stochasticity enters only when a codeword is drawn from the resulting categorical distribution.

Consider inputs $x,x'$ associated with two carrier elements, and take conditionally independent draws
$Z\sim q_\theta(\cdot\mid x)$ and
$Z'\sim q_\theta(\cdot\mid x')$.
Their probability of receiving the same codeword is
\begin{equation}
k_\theta(x,x')
=\sum_zq_\theta(z\mid x)q_\theta(z\mid x')
=\Pr[Z=Z'\mid x,x'].
\label{eq:collision-kernel}
\end{equation}
Thus collision is a soft version of the equality relation supplied by a finite code.
It is large when the two assignment distributions place mass on the same codewords and becomes exact same-class membership when the distributions become deterministic.
Because this collision event involves two independently encoded inputs, it is the order-two, or pairwise, collision probability.
We write $k_{\theta,2}$ when the collision order needs to be explicit, but abbreviate it as $k_\theta$ throughout the main text.
More generally, an integer-order extension requires $\alpha\ge2$ independently encoded inputs to receive the same codeword; it is developed in \ref{app:higher-order-collision-geometry}.

The same expression also has a useful geometric interpretation.
It is the ordinary inner product of two categorical probability vectors, and is therefore a discrete probability-product kernel~\citep{jebara2004probability}.
In particular, it is symmetric, lies in $[0,1]$, and is positive semidefinite.
For any inputs $x_i$ and real coefficients $a_i$,
\begin{equation}
\sum_{i,j}a_i a_j k_\theta(x_i,x_j)
=\left\|\sum_i a_iq_\theta(\cdot\mid x_i)\right\|_2^2\ge0.
\end{equation}

Viewing collision as an inner product also shows what kinds of pairwise geometry this representation can express.
On a finite sample, let $Q$ contain the categorical assignment vectors as its rows, so the collision Gram matrix is $\Gamma_\theta=QQ^\top$.
Hence $\operatorname{rank}(\Gamma_\theta)\le K$: a $K$-codeword collision representation can express at most $K$ independent spectral directions of the pairwise relation.
Raw collision therefore represents a restricted family of relational structures.
The broader framework also includes decoders such as the block-valued construction in \cref{sec:block-summary-walkthrough}, which need not be limited to this particular form.

There is one further distinction specific to probabilistic assignments.
The diagonal of the collision kernel is
\begin{equation}
k_\theta(x,x)=\sum_zq_\theta(z\mid x)^2
=\|q_\theta(\cdot\mid x)\|_2^2\in[1/K,1].
\label{eq:self-collision}
\end{equation}
For a fixed input $x$, let $Z_1,Z_2\overset{\mathrm{iid}}{\sim}q_\theta(\cdot\mid x)$.
Then $k_\theta(x,x)=\Pr[Z_1=Z_2\mid x]$, so the diagonal of the collision kernel measures the concentration of the conditional assignment distribution.
A uniform assignment gives $k_\theta(x,x)=1/K$, while a deterministic assignment gives $k_\theta(x,x)=1$.
Hence the diagonal entries of a soft collision matrix lie in $[1/K,1]$, and become one in the deterministic hard-partition limit.

\subsection{Hard partitions and their relaxation}

The probabilistic formulation is useful for optimization, but the finite representation of primary interest is often a hard code.
The connection between the two is direct.
A deterministic assignment $c_\theta:\mathcal X\to\mathcal Z$ is represented by the one-hot distribution
$q_\theta(z\mid x)=\ind[z=c_\theta(x)]$.
Substituting this distribution into \cref{eq:collision-kernel} gives
\begin{equation}
k_\theta(x,x')=\ind[c_\theta(x)=c_\theta(x')].
\label{eq:hard-collision}
\end{equation}
The collision relation is then exactly membership in the same part of the partition induced by $c_\theta$.

The same conclusion holds as a limit.
Let $\{q_n(\cdot\mid x)\}_{n\ge1}$ be a sequence of categorical assignment distributions that, for every input under consideration, converges to the one-hot distribution at $c_\theta(x)$.
Then
\[
\sum_z q_n(z\mid x)q_n(z\mid x')
\longrightarrow
\ind[c_\theta(x)=c_\theta(x')].
\]
Collision can therefore be viewed as a continuous relaxation of hard same-partition membership.

At nonzero assignment uncertainty, collision is a graded affinity and generally lacks the transitivity of an equivalence relation.
The distinction between the soft assignment geometry and the hard representation is consequential: a soft encoder can distribute mass broadly across the alphabet even when a chosen hardening rule assigns many or all inputs to the same codeword.
Soft and hard relational statistics should therefore be interpreted separately.

\subsection{Factorized binary codes}

A categorical alphabet with $K$ states becomes expensive to represent explicitly when $K$ is large.
A useful specialization obtains an exponentially large finite alphabet from a small number of binary coordinates.
Let the code contain $b$ conditionally independent bits, so that
$\mathcal Z=\{0,1\}^b$ and $K=2^b$.
Writing
$p_r(x)=\Pr[Z_r=1\mid x]$ for the conditional probability that bit $r$ takes value one at input $x$, the probability of a complete binary word
$a=(a_1,\ldots,a_b)$ is
\begin{equation}
q_\theta(a\mid x)
=\prod_{r=1}^bp_r(x)^{a_r}(1-p_r(x))^{1-a_r},
\qquad a\in\{0,1\}^b.
\label{eq:factorized-code}
\end{equation}

The main computational benefit is that collision can be evaluated without enumerating all $2^b$ complete words.
Summing the equality probability over the factorized code distribution gives
\begin{equation}
k_\theta(x,x')
=\prod_{r=1}^b\left[p_r(x)p_r(x')+(1-p_r(x))(1-p_r(x'))\right].
\label{eq:bit-collision}
\end{equation}
Each factor is the probability that the two independently sampled values of bit $r$ agree.
Their product is therefore the probability that the \emph{entire} binary code agrees.
As every $p_r(x)$ approaches zero or one, the expression reduces to deterministic equality of the complete codewords.

The factorization assumes that the bits are independent conditional on each fixed input $x$, which parameterizes $q_\theta(\cdot\mid x)$ as a product distribution.
Across the input population, the probabilities $p_r(x)$ may co-vary, so the aggregate codeword distribution need not factorize across bit coordinates.

The binary parameterization provides a compact factorization of a categorical distribution over $K=2^b$ complete codewords.
Collision treats each complete word as one categorical state, so its categorical representation lives in $K$ dimensions.
The spherical construction developed later applies to these categorical probability vectors.

\subsection{Reference-weighted codeword affinities}

Raw collision treats agreement on every codeword equally.
Some relational objectives instead require agreement to be weighted by codeword prevalence.
Agreement on a codeword used by almost every element may carry less distinguishing information than agreement on a rarely occupied word.
This motivates measuring a shared assignment relative to a reference prevalence for that codeword.

Let $r=(r(z))_{z\in\mathcal Z}$ be a fixed positive reference distribution on $\mathcal Z$.
Define the reference-weighted affinity
\begin{equation}
a_r(x,x')
=\sum_z\frac{q_\theta(z\mid x)q_\theta(z\mid x')}{r(z)}
=\left\langle\frac{q_\theta(\cdot\mid x)}{\sqrt r},
\frac{q_\theta(\cdot\mid x')}{\sqrt r}\right\rangle.
\label{eq:reference-affinity}
\end{equation}
The square root and division in the vector expression are taken coordinatewise.
Like raw collision, this quantity is positive semidefinite because it is an inner product after coordinate rescaling.
Inverse-reference weighting can make this value exceed one, so we treat it as an affinity rather than a probability.

The hard-code interpretation makes the weighting especially transparent.
If two inputs receive the same deterministic word $z$, their affinity is $1/r(z)$; if they receive different words, it is zero.
Agreement on a word with small reference mass is therefore weighted more strongly than agreement on a common word.
For the uniform reference $U_K(z)=1/K$, every word receives the same correction and
$a_{U_K}=Kk_\theta$, so the weighting changes only the global scale.

A particularly important reference is supplied by the encoder's own aggregate codeword distribution.
Fix a probability measure $\mu$ on carrier-element inputs and define
\begin{equation}
q_Z(z)
:=\Pr(Z=z)
=\E_{X\sim\mu}[q_\theta(z\mid X)].
\label{eq:aggregate-code}
\end{equation}
This distribution records how frequently each codeword is used on average under the declared population.
Taking the reference to be this aggregate gives
\begin{equation}
a_{q_Z}(x,x')
=\sum_{z:q_Z(z)>0}\frac{q_\theta(z\mid x)q_\theta(z\mid x')}{q_Z(z)}.
\label{eq:qz-affinity}
\end{equation}
The affinity now corrects agreement by the actual prevalence of the shared codeword: collisions through frequently occupied states are downweighted, while collisions through less common states receive greater weight.

Inactive codewords are omitted from the sum.
Indeed, $q_Z(z)=0$ implies
$q_\theta(z\mid X)=0$ almost surely under $\mu$, so this support convention is well defined on the declared population.
Inputs outside the support of that population would require a separate convention.

This endogenous inverse-mass correction will be important later.
The same form appears in the exact centroid-reconstruction and normalized-association correspondences, where codeword mass determines how within-code agreement should be normalized.
An \emph{external} reference can separately specify a desired marginal organization when used in a prior-matching criterion.

\section{Codeword Organization and Spherical Geometry}
\label{sec:collision-entropy}

Collision also describes how a finite code is organized across a population: averaging it yields aggregate occupancy and R\'enyi-2 effective code count.
We then distinguish external-reference organization from endogenous mass correction and use radial normalization to expose the spherical geometry of categorical assignments.

\subsection{Population collision and effective occupancy}

For $X,X'\overset{\mathrm{iid}}{\sim}\mu$, the population collision probability depends only on the aggregate codeword distribution:
\begin{align}
\E[k_\theta(X,X')]
&=\sum_z\E[q_\theta(z\mid X)]\E[q_\theta(z\mid X')]\\
&=\sum_zq_Z(z)^2.
\label{eq:population-collision}
\end{align}
The right-hand side is the collision probability of the aggregate codeword distribution itself.
It is large when population mass is concentrated on a few codewords and smaller when usage is spread more broadly.
The iid population sampling is essential to this occupancy interpretation: independence makes the average collision depend only on the aggregate codeword probabilities $q_Z(z)$.
For pairs drawn from a relation-specific joint distribution, such as graph edges, the average collision also reflects dependence between the paired assignments and is therefore a different relational statistic.
The measure $\mu$ must therefore be part of the interpretation: it may be uniform over the carrier of a single object, degree-weighted over the vertices of a graph, or defined by a declared pooling of carrier elements across a collection of relational objects.

Because population collision is exactly the squared $\ell_2$ mass of the aggregate codeword distribution, it has a natural information-theoretic representation through order-two R\'enyi entropy~\citep{renyi1961measures}:
\begin{equation}
H_2(q_Z)=-\log\sum_zq_Z(z)^2
=-\log\E[k_\theta(X,X')].
\label{eq:collision-entropy}
\end{equation}
We write $R_2=H_2(q_Z)$ for this collision-occupancy measure.
Entropy is useful analytically, but the same quantity can be expressed as an effective number of equally occupied codewords:
\begin{equation}
K_{\mathrm{eff}}=\exp(H_2(q_Z))
=\frac{1}{\sum_zq_Z(z)^2},\qquad 1\le K_{\mathrm{eff}}\le K.
\label{eq:effective-code-count}
\end{equation}
Uniform use of exactly $M$ words gives $K_{\mathrm{eff}}=M$.
For nonuniform occupancy, the same expression gives the inverse-collision, or inverse-Simpson, effective count, which decreases as probability mass becomes more concentrated.

Collision-effective occupancy is only one summary of code use.
It is useful to place it alongside nominal alphabet size, active support, and Shannon entropy because these quantities coincide only in special cases:
\begin{equation}
\begin{aligned}
R_{\mathrm{nom}}&=\log K,&
R_0&=\log|\operatorname{supp}(q_Z)|,\\
R_1&=-\sum_zq_Z(z)\log q_Z(z),&
R_2&=-\log\sum_zq_Z(z)^2.
\end{aligned}
\label{eq:finite-state-rate-hierarchy}
\end{equation}
They satisfy $R_2\le R_1\le R_0\le R_{\mathrm{nom}}$.
All logarithms are natural unless specified otherwise, and dividing these quantities by $\log 2$ expresses them in bits.
These quantities summarize different aspects of codeword use.
When codewords are encoded under an appropriate probabilistic coding model, the Shannon entropy $H_1(q_Z)$ can also characterize the asymptotic average coding cost per codeword.
Any such encoded payload that forms part of the retained description contributes to the resource functional $\Omega$.

Aggregate occupancy also differs from conditional assignment uncertainty.
If every input has a deterministic assignment, self-collision is one even when population collision is small.
Conversely, the input-independent encoder $q_\theta(\cdot\mid x)=U_K$ has maximal aggregate $H_2$ but conveys no input dependence through its sampled codeword.
For soft encoders, broad aggregate occupancy can therefore coexist with a concentrated hard partition or weak input dependence.
The associated conditional-collision and divergence-based dependence quantities are recorded in \ref{app:conditional-collision-dependence}.

\subsection{External references and endogenous mass correction}

Aggregate collision measures concentration without specifying what pattern of codeword use is preferred.
In some settings, organization is meaningful only relative to a reference distribution.
An external reference can encode a desired marginal pattern, such as equal use of exchangeable codewords, whereas choosing the reference masses as $r(z)=q_Z(z)$ serves a different purpose: it compensates local agreement for the prevalence actually produced by the encoder.
For fixed positive reference masses $r(z)$, independent population averaging gives
\begin{equation}
\E[a_r(X,X')]=\sum_z\frac{q_Z(z)^2}{r(z)},
\qquad
D_2(q_Z\|r)=\log\E[a_r(X,X')].
\label{eq:renyi-reference-affinity}
\end{equation}
Thus reference-weighted affinities have a population interpretation through R\'enyi divergence~\citep{renyi1961measures,vanerven2014renyi}.
When the corresponding divergence is used as an organization penalty, the external reference specifies the target marginal organization of the alphabet.
For exchangeable words, uniform matching gives
\begin{equation}
D_2(q_Z\|U_K)=\log K-H_2(q_Z).
\label{eq:uniform-renyi}
\end{equation}
Minimizing this divergence encourages broader effective occupancy.
Adding $+\lambda H_2(q_Z)$ to a minimized objective does the opposite: it encourages concentration.
The sign of the regularizer therefore determines whether it encourages broad occupancy or concentration.

For comparison, $D_{\mathrm{KL}}(q_Z\|U_K)=\log K-H_1(q_Z)$ also has the uniform optimum.
Although $H_2\le H_1$ for each distribution, the two entropies can rank candidate distributions differently and induce distinct level sets around their common optimum.
The general identity $D_\alpha(q\|U_K)=\log K-H_\alpha(q)$ and the sign conventions are collected with the higher-order material in \ref{app:higher-order-collision-geometry}.

For the endogenous choice $r(z)=q_Z(z)$, inverse-mass normalization gives
\begin{equation}
\E[a_{q_Z}(X,X')]=1.
\label{eq:endogenous-reference-expectation}
\end{equation}
For almost every fixed $x$, even $\E_{X'}[a_{q_Z}(x,X')]=1$.
This unit expectation results from correcting local agreement by the occupancy induced by the encoder.
The aggregate masses $q_Z(z)$ therefore supply endogenous inverse-mass normalization when used as the reference and can separately enter a prior-matching penalty when compared with an external reference.

\subsection{Radial normalization and cosine similarity}

The raw collision kernel is the Euclidean inner product of categorical probability vectors, and therefore combines two geometric effects: the angular agreement between their directions and the magnitudes of their Euclidean norms.
For categorical distributions, these norms are themselves meaningful, since their squared values are self-collision probabilities and hence encode R\'enyi-2 concentration.
It is therefore useful to separate concentration from angular agreement when studying the geometry induced by collision.
Radial normalization does exactly this, mapping categorical probability vectors to the positive unit sphere, where normalized collision becomes cosine similarity.
For $q\in\Delta^{K-1}$, define
\begin{equation}
u(q)=\frac{q}{\|q\|_2}\in\mathbb S_+^{K-1},\qquad
\mathbb S_+^{K-1}=\{u\in\R_{\ge0}^K:\|u\|_2=1\}.
\label{eq:sphere-map}
\end{equation}
The map is bijective, with inverse
\begin{equation}
q=\frac{u}{\|u\|_1}.
\label{eq:sphere-map-inverse}
\end{equation}
A more standard spherical representation of probability distributions is the elementwise square-root map $q\mapsto\sqrt q$, for which the spherical inner product becomes the Bhattacharyya coefficient and which underlies Hellinger and information-geometric constructions~\citep{amari2016information}.
The radial normalization used here serves a different purpose: because collision is the ordinary inner product $q^\top q'$, normalizing by the Euclidean norm isolates the angular component of that collision geometry while preserving a bijective representation of the simplex.

Writing $k_2(q,q')=q^\top q'$, normalized collision is
\begin{equation}
u(q)^\top u(q')
=\frac{k_2(q,q')}{\sqrt{k_2(q,q)k_2(q',q')}}.
\label{eq:normalized-collision}
\end{equation}
Thus normalized collision is exactly cosine similarity on the nonnegative spherical orthant: it keeps the angular agreement between assignments, while raw collision also depends on the concentration of each assignment.\footnote{The negative logarithm of this normalized inner product is the discrete analogue of the Cauchy--Schwarz divergence of \citet{jenssen2006cauchy}.}
This connects the finite-codeword realization to hyperspherical alignment~\citep{wang2020alignment}, while retaining the categorical constraints.
The one-hot representations of hard words occupy the fixed coordinate directions $e_z$.
Relabeling complete codewords permutes these directions; it does not create an arbitrary rotationally invariant embedding family.
A factorized bit parameterization can impose further restrictions on which codeword relabelings preserve its parameterized family.

The normalized similarity factors out the explicit self-collision multipliers from the pairwise score.
\begin{equation}
\|q\|_2^2
=k_2(q,q)
=\sum_z q(z)^2
=\exp[-H_2(q)],
\qquad
\|q\|_2
=\exp[-H_2(q)/2].
\label{eq:collision-norm-entropy}
\end{equation}
If $\cos\vartheta=u(q)^\top u(q')$, then
$k_2(q,q')=q^\top q'=\|q\|_2\|q'\|_2\cos\vartheta$, and therefore
\begin{equation}
k_2(q,q')
=\exp\!\left[-\tfrac12\bigl(H_2(q)+H_2(q')\bigr)\right]\cos\vartheta.
\label{eq:collision-angular-decomposition}
\end{equation}
This decomposition shows that raw collision combines angular agreement with the R\'enyi-2 concentration of each assignment.
The spherical map itself is lossless on the simplex, since $q$ can be recovered from $u(q)$ through \cref{eq:sphere-map-inverse}, while cosine similarity isolates only the angular component.
Thus raw and normalized collision emphasize different aspects of the same categorical geometry.

\subsection{Entropy in spherical coordinates}

Having separated raw collision into angular agreement and R\'enyi-2 concentration in \cref{eq:collision-angular-decomposition}, we now express the entropy-based occupancy measures introduced in \cref{eq:finite-state-rate-hierarchy} in the same spherical coordinates.
This provides a common geometric language for two aspects of the finite-codeword representation: angles between spherical vectors describe normalized collision between assignments, while the geometry of an individual spherical vector describes the concentration and organization of the corresponding categorical distribution.
For $u=q/\|q\|_2$, the inverse relation gives
\begin{equation}
H_2(q)=2\log\|u\|_1.
\label{eq:h2-spherical}
\end{equation}
Let $u_U=\mathbf1/\sqrt K$ and let $\vartheta_U$ be its angle to $u$.
Since $\cos\vartheta_U=\|u\|_1/\sqrt K$,
\begin{equation}
D_2(q\|U_K)=-2\log\cos\vartheta_U
=\log\bigl(1+K\|q-U_K\|_2^2\bigr).
\label{eq:renyi-angular}
\end{equation}
These are exact changes of coordinates.
At the level of one input, increasing $H_2(q_\theta(\cdot\mid x))$ makes the assignment less concentrated.
At the level of the aggregate probabilities $q_Z(z)$, increasing $H_2(q_Z)$ spreads aggregate codeword occupancy.
The same algebra describes different objects depending on which distribution is inserted.

R\'enyi-2 has an especially simple angular description, while Shannon entropy provides another organization measure expressible through $u$:
\begin{equation}
H_1(q)=\log\|u\|_1
-\frac{1}{\|u\|_1}\sum_i u_i\log u_i,
\label{eq:shannon-spherical}
\end{equation}
with $0\log0=0$.
Unlike $H_2$, it depends on the full coordinate profile rather than only the angle to the uniform direction.
The two entropies share collapsed and uniform extrema but generally have different level sets.
The broader R\'enyi expression appears in \ref{app:renyi-spherical-coordinates}.

Taken together, these identities explain why order two provides a particularly coherent description of this finite-codeword realization: local equality probability, population collision, and normalized cosine geometry all arise from the same pairwise collision statistic.
The empirical and modelling choice between R\'enyi-2, Shannon, and other regularizers remains application dependent.

\section{Alignment, Separation, and Relational Requirements}
\label{sec:alignment-separation}

We now use the collision geometry to express source-specific relational requirements.
Favored and disfavored pairs control alignment and separation, while an optional marginal term controls how the alphabet is used; these distinct roles lead to the learning objectives below.

\subsection{From the source relation to a learning objective}

Let $(\mathsf V,\mathsf V')\sim P_{\mathrm{pair}}$ be a declared distribution on carrier pairs, and let $x_{\mathsf V},x_{\mathsf V'}$ denote their corresponding encoder-side observations.
A code-induced pair statistic $\psi_\theta$ may be raw collision, reference-weighted affinity, normalized collision, or a dissimilarity such as $-\log k_\theta$.
A generic pairwise learning objective is
\begin{equation}
\mathcal L_{\mathrm{rel}}(\theta)
=\E_{(\mathsf V,\mathsf V')\sim P_{\mathrm{pair}}}\left[\ell\bigl(S(\mathsf V,\mathsf V'),\psi_\theta(x_{\mathsf V},x_{\mathsf V'})\bigr)\right].
\label{eq:generic-relational-objective}
\end{equation}
The statistic may itself be the reconstructed relation, or it may control a decoder or selector used to form that relation.
Whether this objective equals the declared relational distortion depends on that reconstruction or evaluation convention; otherwise it may serve as a training surrogate for the desired fidelity.

For a favored-pair distribution $P_+$ on carrier pairs, an alignment loss decreases with the desired affinity:
\begin{equation}
\mathcal L_{\mathrm{align}}
=\E_{(\mathsf V,\mathsf V')\sim P_+}[\ell_+(\psi_\theta(x_{\mathsf V},x_{\mathsf V'}))].
\label{eq:general-alignment}
\end{equation}
For example, $\ell_+(k)=-\log k$ promotes collision, with infinite loss at zero overlap unless a numerical surrogate is explicitly substituted.
Using normalized collision instead promotes angular agreement without the same concentration factors.
For dissimilarity-valued statistics, the monotonic directions reverse.

\subsection{Pair-specific separation and aggregate organization}

A source can also identify pairs whose collision would erase a required distinction.
For an appropriate disfavored-pair distribution $P_-$ on carrier pairs,
\begin{equation}
\mathcal L_{\mathrm{sep}}^{\mathrm{rel}}
=\E_{(\mathsf V,\mathsf V')\sim P_-}[\ell_-(\psi_\theta(x_{\mathsf V},x_{\mathsf V'}))],
\label{eq:relational-separation}
\end{equation}
where $\ell_-$ increases with undesirable affinity.
This is a pair-specific condition.
By contrast,
\begin{equation}
\mathcal L_{\mathrm{org}}=D_2(q_Z\|r)
\label{eq:marginal-organization}
\end{equation}
compares population occupancy with an external reference.
Two partitions can have identical occupancy and different pairwise equivalences: marginal matching organizes population use, while pair-specific terms specify which source relations should survive.

A combined objective can therefore be written as
\begin{equation}
\mathcal L
=\mathcal L_{\mathrm{align}}
+\lambda_{\mathrm{rel}}\mathcal L_{\mathrm{sep}}^{\mathrm{rel}}
+\lambda_{\mathrm{org}}D_2(q_Z\|r),
\qquad\lambda_{\mathrm{rel}},\lambda_{\mathrm{org}}\ge0.
\label{eq:alignment-relational-marginal}
\end{equation}
Only the terms required by the learning problem need be present.
Uniform prior matching supplies one possible population-level noncollapse mechanism, while relational separation acts on specified pairs.
This clarifies the analogy with alignment and uniformity on a hypersphere~\citep{wang2020alignment}: the finite categorical case has a fixed available frame and an explicitly chosen marginal reference, rather than an unrestricted continuous geometry.

\subsection{Occupancy, fidelity, and resource constraints}
\label{sec:occupancy-frontiers}

Occupancy is a property of realized code use, not by itself the complete description resource.
We therefore study occupancy constraints alongside the relational distortion defined in \cref{sec:relational-distortion} within the chosen finite-codeword family.
Let $m_{S,\theta}$ denote the description induced by parameter setting $\theta$, and define $D_{\mathrm{rel}}(\theta):=\Delta_V\!\left(S,\mathsf{Dec}_V(m_{S,\theta})\right)$ as the relational distortion within this family, following the complete-description formulation in \cref{eq:lossy-relational-compression}.
Then
\begin{equation}
D_{\mathrm{rel}}^\star(K)=\inf_{\theta:|\mathcal Z|\le K}D_{\mathrm{rel}}(\theta).
\end{equation}
The other parts of the description and decoder family must be held fixed or explicitly controlled; otherwise an auxiliary payload could carry arbitrary source information while the code alphabet remains small.

For $R_2(\theta)=H_2(q_Z)$, define
\begin{equation}
D_{\le}^\star(R)=\inf_{\theta:R_2(\theta)\le R}D_{\mathrm{rel}}(\theta),\qquad
D_{\ge}^\star(R)=\inf_{\theta:R_2(\theta)\ge R}D_{\mathrm{rel}}(\theta).
\label{eq:relational-complexity-distortion-frontier}
\end{equation}
An upper occupancy bound and a lower occupancy requirement answer different questions.
The first limits realized code use; the second enforces noncollapse or organization.
An inverse requirement can also be written $R_2^\star(D)=\inf_{\theta:D_{\mathrm{rel}}(\theta)\le D}R_2(\theta)$.
These definitions are occupancy--fidelity relations; an operational coding model can additionally relate them to code length.

For direct positive-relation distortion, the one-codeword solution can have zero distortion, making the upper-bound problem trivial.
A lower bound on $R_2$ instead asks how much relational fidelity can be retained while using a noncollapsed alphabet.
At fixed $K$, an objective
\begin{equation}
D_{\mathrm{rel}}(\theta)+\lambda_{\mathrm{org}}D_2(q_Z\|U_K)
=D_{\mathrm{rel}}(\theta)-\lambda_{\mathrm{org}}R_2(\theta)
+\lambda_{\mathrm{org}}\log K
\label{eq:organization-lagrangian}
\end{equation}
encourages this second direction.
Because this objective encourages occupancy rather than imposing an upper resource budget, its empirical distortion trajectory can increase with $R_2$.
Nonconvex optimization and discrete feasible sets also mean that scalar penalties need not recover every constrained frontier point.

\section{Selected Realizations of Relational Fidelity}
\label{sec:relational-distortion-realizations}

Relational requirements may be supplied directly by a source relation or emerge from reconstruction.
This section instantiates both routes within the finite-codeword geometry, covering signed and baseline-relative pair requirements, graph-operator probes, and exact correspondences to centroid reconstruction and normalized cut.
These examples are representative rather than exhaustive; path, walk, and additional spectral constructions are retained in \ref{app:path-walk-content}.

\subsection{Selected constructions for relational requirements}

When pairwise weights are derived from a single real-valued relation, two useful cases are an intrinsically signed relation and a relation interpreted relative to a source-side baseline.

This role assignment is specific to the collision realization: for example, the block-valued decoder of \cref{sec:block-summary-walkthrough} can reproduce strong relations between different groups through its reproduction table $\Xi$, without interpreting group inequality itself as a separation requirement.

\paragraph{Intrinsically signed relations}
For an intrinsically signed $S$, the nonnegative source weights can be $S_+=[S]_+$ and $S_-=[-S]_+$.

\paragraph{Baseline-relative relations}
Alternatively, compare a real-valued relation with a source-side neutral baseline:
\begin{equation}
S_+=[S-S_{\mathrm{bg}}]_+,\qquad
S_-=[S_{\mathrm{bg}}-S]_+.
\label{eq:signed-relation}
\end{equation}
Relations above the baseline induce alignment and those below it induce separation, including when all original source values are nonnegative.
When a probability-law form is desired, nonzero pair weights can be normalized over the evaluated carrier pairs to obtain $P_+$ and $P_-$; weighted-sum objectives may instead use them directly.

The source-side baseline $S_{\mathrm{bg}}$ identifies unusually strong or weak source relationships, while $r(z)$ specifies a reference for codeword occupancy.
A representation-side collision reference can separately center a learning margin.
These are respectively a source-side relational baseline, a codeword-occupancy reference, and a representation-side collision threshold.

\subsection{Direct relation preservation and signed distinctions}
\label{sec:edge-cut-distortion}

The simplest graph realization treats retained edge weight itself as relational content and then uses collision to decide which source edges survive.

Let $G=(V,E,W)$ be an undirected graph with symmetric nonnegative weights, zero diagonal, and one copy of each undirected edge in $E$, and let $\widetilde G$ denote a reconstructed weighted graph on the same carrier with weight matrix $\widetilde W$.
A reconstruction obtained by deletion or attenuation satisfies $0\le\widetilde W\le W$ entrywise.
The direct edge content and its distortion are
\begin{equation}
\mathcal I_{\mathrm{edge}}(G)=\frac12\sum_{i,j}W_{ij},\qquad
D_{\mathrm{edge}}(G,\widetilde G)=\frac12\sum_{i,j}(W_{ij}-\widetilde W_{ij}).
\label{eq:edge-distortion}
\end{equation}
When $\mathcal I_{\mathrm{edge}}(G)>0$, the normalized direct-edge distortion is
\begin{equation}
D_E(G,\widetilde G)
:=\frac{D_{\mathrm{edge}}(G,\widetilde G)}{\mathcal I_{\mathrm{edge}}(G)}.
\label{eq:normalized-edge-distortion}
\end{equation}
The factor $1/2$ counts each undirected edge once.
This is a monotone content-based fidelity in the sense of \cref{eq:relational-content-distortion}.

A hard partition can retain only within-codeword edges,
\begin{equation}
\widetilde W_{ij}=W_{ij}\ind[c(i)=c(j)],
\label{eq:partition-retained-relation}
\end{equation}
so its distortion is the cut weight $\sum_{\{i,j\}\in E}W_{ij}\ind[c(i)\ne c(j)]$.
For probabilistic assignments, define $q_i:=q_\theta(\cdot\mid x_i)$ and $k_{\theta,ij}:=k_\theta(x_i,x_j)=q_i^\top q_j$.
Collision attenuation then gives
\begin{equation}
\widetilde W_{\theta,ij}=W_{ij}k_{\theta,ij},\qquad
D_{\mathrm{edge}}(\theta)=\frac12\sum_{i,j}W_{ij}(1-k_{\theta,ij}).
\label{eq:soft-cut-distortion}
\end{equation}
Its normalized form is
\[
D_E(\theta)
=\frac{
\sum_{\{i,j\}\in E}W_{ij}(1-k_{\theta,ij})
}{
\sum_{\{i,j\}\in E}W_{ij}
}.
\]
When used as a learning objective, minimizing this distortion is equivalent to maximizing collision on positively weighted source edges.
For independent sampled assignments, the soft expression also equals the expected hard cut weight on distinct vertices because this distortion is linear in the retained edge indicators; this equivalence does not generally extend to nonlinear graph fidelities.

The decoder convention matters here.
A reconstruction of the retained weighted graph includes its weights as stored or shared information.
An equality-only description can instead be evaluated by its cut distortion against the source graph.
These distinct descriptions can lead to the same scalar distortion.

Positive edge preservation alone allows complete collapse.
For nonnegative source weights $W_+$ and $W_-$ describing favored and disfavored collisions, a signed relational objective is
\begin{equation}
\mathcal L_{\mathrm{signed}}(\theta)
=\frac12\sum_{i,j}\left[W_{+,ij}(1-k_{\theta,ij})
+\lambda_{\mathrm{rel}}W_{-,ij}k_{\theta,ij}\right].
\label{eq:signed-relational-objective}
\end{equation}
The weights may be supplied directly or derived from $W-W_{\mathrm{bg}}$ as in \cref{eq:signed-relation}.
Although collision attenuation still gives $0\le\widetilde W\le W$ for a nonnegative source graph, the signed objective is not a monotone retained-content distortion: decreasing collision worsens fidelity on favored pairs but improves it on disfavored pairs.
With a hard equality decoder, it defines a relational fidelity criterion on the partition relation, closely related to the positive/negative disagreement perspective of correlation clustering~\citep{bansal2004correlation}.

\subsection{Graph probes and operator fidelity}
\label{sec:graph-fourier-distortion}

Direct edge fidelity asks which relations remain; operator fidelity instead asks which behaviors supported by those relations remain.

For the same undirected graph, let $d_i=\sum_jW_{ij}$ and $L=\operatorname{Diag}(d)-W$.
Its Dirichlet energy is
\begin{equation}
f^\top Lf=\frac12\sum_{i,j}W_{ij}(f_i-f_j)^2.
\label{eq:dirichlet-energy-vertex}
\end{equation}
With $L=U\Lambda U^\top$ and graph-Fourier coefficients $\widehat f=U^\top f$, the same scalar is
\begin{equation}
f^\top Lf=\sum_r\lambda_r|\widehat f_r|^2.
\label{eq:dirichlet-energy-fourier}
\end{equation}
This is the standard graph-signal/Fourier interpretation~\citep{shuman2013emerging}.
The relation determines which signal variations carry energy.

Deleting or attenuating edges gives $0\preceq\widetilde L\preceq L$, since $L-\widetilde L$ is the Laplacian of the removed weights.
A fixed probe therefore loses
\begin{equation}
D_f(G,\widetilde G)
=f^\top(L-\widetilde L)f
=\frac12\sum_{i,j}(W_{ij}-\widetilde W_{ij})(f_i-f_j)^2.
\label{eq:dirichlet-distortion-vertex}
\end{equation}
If $\widetilde L=\widetilde U\widetilde\Lambda\widetilde U^\top$, this is equally
\begin{equation}
D_f(G,\widetilde G)
=\sum_r\lambda_r|(U^\top f)_r|^2
-\sum_r\widetilde\lambda_r|(\widetilde U^\top f)_r|^2.
\label{eq:dirichlet-distortion-fourier}
\end{equation}
The vertex and Fourier expressions evaluate the same fidelity in two coordinate systems, using each operator's own eigenbasis.

Rather than selecting a single probe signal, we can specify a distribution over signals whose behavior we wish to preserve.
Let $F$ be such a random probe with second moment $M=\E[FF^\top]$.
Specializing \cref{eq:operator-expected-distortion} to the graph Laplacian gives
\[
D_M(G,\widetilde G)=\operatorname{tr}[(L-\widetilde L)M].
\]
For an edge $e=\{i,j\}$, let $b_e=e_i-e_j$ be an oriented incidence vector, with either orientation.
The Laplacian decomposes into individual edge contributions:
\[
L
=
\sum_{e=\{i,j\}\in E}W_{ij}b_eb_e^\top.
\]
The orientation is arbitrary because replacing $b_e$ by $-b_e$ leaves $b_eb_e^\top$ unchanged.
Using the rank-one trace identity $\operatorname{tr}(uv^\top A)=v^\top Au$ gives
\begin{align*}
\operatorname{tr}(LM)
&=
\sum_{e=\{i,j\}\in E}
W_{ij}\operatorname{tr}(b_eb_e^\top M)\\
&=
\sum_{e=\{i,j\}\in E}
W_{ij}b_e^\top M b_e.
\end{align*}
Because $M=\E[FF^\top]$,
\[
b_e^\top M b_e
=
\E[(F_i-F_j)^2].
\]
Define the source importance of edge $e$ by $s_e=W_{ij}b_e^\top M b_e\ge0$.
It is the contribution of that edge to the expected source Dirichlet energy under the chosen probe family.
Since $\sum_e s_e=\operatorname{tr}(LM)$, collision attenuation gives the normalized distortion
\begin{equation}
\overline D_M(\theta)
=\sum_{e=\{i,j\}\in E}\rho_e(1-k_{\theta,ij}),\qquad
\rho_e=\frac{s_e}{\sum_{e'\in E}s_{e'}},
\label{eq:probe-edge-family}
\end{equation}
provided the total expected source energy $\operatorname{tr}(LM)$ is positive.
Thus $M$ determines which source signal variations, and therefore which edges, matter most to the fidelity criterion, while the finite-code representation itself remains unchanged.

\paragraph{Isotropic and all-mode probes}
For $M=I$, $D_I=\operatorname{tr}(L-\widetilde L)=2D_{\mathrm{edge}}$.
Since $\operatorname{tr}(L)=2\mathcal I_{\mathrm{edge}}(G)$, the normalized isotropic-probe criterion is $D_I/\operatorname{tr}(L)=D_E$.
For a source graph with $n$ vertices and $\kappa_G$ connected components, suppose $\operatorname{rank}(L)=n-\kappa_G>0$.
A source-normalized alternative then uses $M=L^+/(n-\kappa_G)$, where $L^+$ is the Moore--Penrose pseudoinverse.
We denote this source-normalized all-mode criterion by $D_F$.
The spectral expansion
\[
L^+
=
\sum_{\lambda_r>0}\frac1{\lambda_r}u_ru_r^\top
\]
gives
\begin{align}
D_F(G,\widetilde G)
&=\frac{\operatorname{tr}[(L-\widetilde L)L^+]}{n-\kappa_G}\\
&=\frac1{n-\kappa_G}
\sum_{\lambda_r>0}
\frac{u_r^\top(L-\widetilde L)u_r}{\lambda_r}.
\label{eq:source-normalized-dirichlet-distortion}
\end{align}
The factor $1/\lambda_r$ normalizes each positive-eigenvalue source Laplacian mode by its source Dirichlet energy, while division by $n-\kappa_G$ averages over those modes.
Zero modes are excluded because they have zero source Dirichlet energy; for a disconnected graph, they include signals that are constant separately on each connected component.
For an individual positive-eigenvalue source mode $u_r$, its source Dirichlet energy is $u_r^\top Lu_r=\lambda_r$.
Evaluating the same fixed source signal on the reconstructed graph gives the Dirichlet energy $u_r^\top\widetilde Lu_r$, so
\[
\frac{u_r^\top(L-\widetilde L)u_r}{\lambda_r}
=1-\frac{u_r^\top\widetilde Lu_r}{\lambda_r}
\]
is the fraction of that mode's source Dirichlet energy removed by compression, and $D_F$ is the mean of these relative losses.
The quantity $u_r^\top\widetilde Lu_r$ is not generally an eigenvalue of $\widetilde L$, because a source eigenvector need not remain an eigenvector after compression.
Within a repeated positive-eigenvalue source eigenspace, the individual mode losses depend on the chosen orthonormal eigenbasis, but their sum over the entire eigenspace and hence its contribution to $D_F$ is basis invariant.
Because $\operatorname{tr}(LL^+)=\operatorname{rank}(L)=n-\kappa_G$, the normalization $M=L^+/(n-\kappa_G)$ ensures $\operatorname{tr}(LM)=1$, so the probe distribution has unit expected Dirichlet energy on the source graph.

The spectral form describes distortion mode by mode.
To obtain the complementary edgewise interpretation, decompose the removed Laplacian into its deleted or attenuated edge contributions:
\[
L-\widetilde L
=
\sum_{e=\{i,j\}\in E}(W_{ij}-\widetilde W_{ij})b_e b_e^\top.
\]
Substituting this decomposition into the trace form gives
\begin{equation}
D_F(G,\widetilde G)
=\frac1{n-\kappa_G}\sum_{e=\{i,j\}\in E}(W_{ij}-\widetilde W_{ij})R_{\mathrm{eff}}^G(i,j),
\qquad R_{\mathrm{eff}}^G(i,j)=b_e^\top L^+b_e,\quad e=\{i,j\}.
\label{eq:resistance-weighted-dirichlet-distortion}
\end{equation}
All resistances are computed in the original source graph.
For a hard partition,
\begin{equation}
D_F(G,c)=\frac1{n-\kappa_G}
\sum_{\substack{\{i,j\}\in E\\c(i)\ne c(j)}}
W_{ij}R_{\mathrm{eff}}^G(i,j).
\label{eq:hard-partition-dirichlet-distortion}
\end{equation}
Its numerator is the established resistance-weighted cut-size~\citep{cesabianchi2013random,gentile2013online}.
Here it is used as the all-mode source-normalized probe realization of relational fidelity.
Since
\[
\sum_{\{i,j\}\in E}W_{ij}R_{\mathrm{eff}}^G(i,j)
=
\operatorname{tr}(LL^+)
=
n-\kappa_G,
\]
the denominator is the total source resistance-weighted edge mass, giving $0\le D_F\le1$ under attenuation.

\paragraph{Source collision probes}
Beyond isotropic and all-mode choices, a source-derived probe family can emphasize distinctions already present in the graph's local transition structure.
Viewing each vertex's normalized edge weights as a distribution over destinations, two vertices have greater probability-product overlap when independent draws from their destination distributions tend to coincide.
For positive degrees, let $P_{ij}=W_{ij}/d_i$ be the source transition matrix.
Its rows are distributions over relational destinations, and
\begin{equation}
\Gamma_{\mathrm{src}}=PP^\top,\qquad
(\Gamma_{\mathrm{src}})_{ij}=\sum_rP_{ir}P_{jr}
\end{equation}
is the probability that independent destination draws from rows $i$ and $j$ coincide.
It is positive semidefinite and can be used as a source-frozen probe second moment.
For $\operatorname{tr}(L\Gamma_{\mathrm{src}})>0$,
\begin{equation}
D_C(G,\widetilde G)
=\frac{\operatorname{tr}[(L-\widetilde L)\Gamma_{\mathrm{src}}]}
{\operatorname{tr}(L\Gamma_{\mathrm{src}})}.
\label{eq:source-collision-covariance-distortion}
\end{equation}
For $e=\{i,j\}$, $b_e^\top PP^\top b_e=\|P_i-P_j\|_2^2$, so collision attenuation gives
\begin{equation}
D_C(\theta)=
\frac{\sum_{\{i,j\}\in E}W_{ij}\|P_i-P_j\|_2^2(1-k_{\theta,ij})}
{\sum_{\{i,j\}\in E}W_{ij}\|P_i-P_j\|_2^2}.
\label{eq:source-collision-edge-distortion}
\end{equation}
This fidelity emphasizes differences in source transition profiles across edges.
Here $\Gamma_{\mathrm{src}}=PP^\top$ is a source-derived probe second moment, encoding collision probabilities between transition distributions, whereas $k_{\theta,ij}=q_i^\top q_j$ is a representation-side collision probability that controls edge attenuation.
The two have the same probability-product form but act on different distributions and play different roles.

The full transition-profile construction compares the destination distributions themselves.
A complementary source-derived probe can instead summarize each vertex's local transition structure by a scalar measure of its effective relational diversity, and preserve how that quantity varies across the graph.
Specifically, a rank-one source probe uses local transition collision entropy,
\begin{equation}
h_i^{(2)}=-\log\sum_jP_{ij}^2.
\end{equation}
The source field $h_G^{(2)}$ is held fixed while the graph is compressed.
When its source Dirichlet energy is positive, define
\begin{equation}
D_{H_2}(G,\widetilde G)
=\frac{(h_G^{(2)})^\top(L-\widetilde L)h_G^{(2)}}
{(h_G^{(2)})^\top Lh_G^{(2)}}.
\label{eq:entropy-probe-distortion}
\end{equation}
Here $D_{H_2}$ measures distortion of the source transition-entropy field $h_G^{(2)}$ under graph compression.
Under collision attenuation this has edge weights proportional to $W_{ij}(h_i^{(2)}-h_j^{(2)})^2$.
It measures the lost gradient energy of the selected source-intrinsic field.
If the field is constant on connected components, the denominator is zero and this normalized fidelity is undefined.

For an unweighted simple graph, each transition row is uniform over $d_i$ neighbors, so $h_i^{(2)}=\log d_i$; its Shannon row entropy is identical.
Thus on such graphs the transition-entropy field reduces to the log-degree field, while weighted or otherwise nonuniform relations allow Shannon and R\'enyi-2 transition entropies to distinguish different local structure.

\paragraph{Worst-case fidelity}
The relative worst-case graph distortion is
\begin{equation}
D_{\mathrm{wc}}(G,\widetilde G)
=\sup_{f:f^\top Lf>0}
\frac{f^\top(L-\widetilde L)f}{f^\top Lf}.
\label{eq:graph-worst-case-spectral-distortion}
\end{equation}
If a connected source becomes disconnected, a signal constant on each new component but nonconstant on the original graph has zero reconstructed energy and positive source energy.
Hence $D_{\mathrm{wc}}=1$.
In particular, every nontrivial hard partition of a connected source that removes all cross-class edges has maximal distortion under this criterion.
That makes worst-case fidelity a useful boundary of the representation, but an uninformative ranking among such hard partition reconstructions.

\subsection{Centroid reconstruction as an exact objective correspondence}
\label{sec:vq}

Classical squared-error centroid reconstruction admits an exact pairwise relational formulation.
Consider $N$ observations $x_1,\ldots,x_N\in\R^d$, and define the source relation
\[
S(i,j)=\|x_i-x_j\|_2^2.
\]
For the probabilistic finite encoder of \cref{eq:discrete-encoder}, write
\[
q_{iz}:=q_\theta(z\mid x_i),
\qquad
q_{iz}\ge0,
\qquad
\sum_zq_{iz}=1,
\]
for the soft assignment weight of observation $x_i$ to codeword $z$.
Define the empirical aggregate codeword distribution by
\[
\hat q_Z(z)
:=
\frac1N\sum_{i=1}^Nq_{iz}.
\]
This is the finite-sample analogue of the population aggregate codeword distribution $q_Z$ in \cref{eq:aggregate-code}.
A codeword is active when $\hat q_Z(z)>0$.
For each active codeword, the assignment-weighted centroid is
\begin{equation}
\bar x_z
=
\frac{1}{N\hat q_Z(z)}
\sum_{i=1}^Nq_{iz}x_i,
\qquad
\hat q_Z(z)>0.
\label{eq:finite-soft-centroid}
\end{equation}
Define the average centroid reconstruction error by
\[
D_{\mathrm{cent}}
:=
\frac1N
\sum_{z:\hat q_Z(z)>0}
\sum_iq_{iz}\|x_i-\bar x_z\|_2^2.
\]
The decoder associates one reproduction vector $\widehat x_z\in\R^d$ with each codeword $z\in\mathcal Z$.
For arbitrary choices of these reproduction vectors, define the average reconstruction error by
\[
D_{\mathrm{dec}}
:=
\frac1N
\sum_{i,z}q_{iz}\|x_i-\widehat x_z\|_2^2.
\]
The key correspondence is that centroid reconstruction error can be written exactly as an inverse-marginally weighted pairwise squared-distance distortion under the same assignments.
More generally, reconstruction with arbitrary codeword reproductions decomposes into this centroid error plus the excess incurred when the reproduction vectors differ from their assignment-weighted centroids.
The following proposition collects the established weighted variance identities that yield these correspondences~\citep{gray1998quantization,dhillon2004kernel}.

\begin{proposition}[Centroid and pairwise forms]
\label{prop:centroid-pairwise}
For the assignments and reconstruction errors above,
\begin{equation}
D_{\mathrm{cent}}
=\frac1{2N^2}
\sum_{z:\hat q_Z(z)>0}\frac1{\hat q_Z(z)}
\sum_{i,j}q_{iz}q_{jz}\|x_i-x_j\|_2^2.
\label{eq:finite-soft-pairwise}
\end{equation}
The affinity $a_{\hat q_Z}$ is the empirical finite-sample specialization of the inverse-mass affinity in \cref{eq:qz-affinity}, so equivalently
\begin{equation}
D_{\mathrm{cent}}=\frac1{2N^2}\sum_{i,j}
\|x_i-x_j\|_2^2a_{\hat q_Z}(x_i,x_j).
\label{eq:finite-soft-qz-normalized}
\end{equation}
The arbitrary-reproduction error decomposes as
\begin{equation}
D_{\mathrm{dec}}
=D_{\mathrm{cent}}
+\sum_{z:\hat q_Z(z)>0}
\hat q_Z(z)\|\bar x_z-\widehat x_z\|_2^2.
\label{eq:decoder-centroid-decomposition}
\end{equation}
\end{proposition}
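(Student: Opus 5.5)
The plan is to prove all three identities one active codeword at a time, using the weighted variance identity, and then sum over active codewords. Fix an active $z$ and write $w_i:=q_{iz}\ge0$, so that $W_z:=\sum_iw_i=N\hat q_Z(z)>0$. By \cref{eq:finite-soft-centroid}, $\bar x_z=W_z^{-1}\sum_iw_ix_i$ is the $w$-weighted mean. Inactive codewords can be set aside from the start: $\hat q_Z(z)=0$ together with $q_{iz}\ge0$ forces $q_{iz}=0$ for every $i$. They therefore contribute nothing to $D_{\mathrm{dec}}$ or to any of the pairwise sums, which matches the support convention used after \cref{eq:qz-affinity}.

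For \cref{eq:finite-soft-pairwise}, expand $\|x_i-x_j\|_2^2=\|(x_i-\bar x_z)-(x_j-\bar x_z)\|_2^2$ inside $\sum_{i,j}w_iw_j(\cdot)$. The two squared terms each contribute $W_z\sum_iw_i\|x_i-\bar x_z\|_2^2$. The cross term equals $-2\bigl\|\sum_iw_i(x_i-\bar x_z)\bigr\|_2^2$, which vanishes by the definition of the weighted mean; this is the same cancellation used in the proof of \cref{prop:block-means}. This gives
\[
\sum_{i,j}w_iw_j\|x_i-x_j\|_2^2=2W_z\sum_iw_i\|x_i-\bar x_z\|_2^2 .
\]
Dividing by $2N^2\hat q_Z(z)=2NW_z$ and summing over active $z$ yields $D_{\mathrm{cent}}$. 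Equation \cref{eq:finite-soft-qz-normalized} then follows by exchanging the finite sums over $z$ and $(i,j)$ and recognizing
\[
\sum_{z:\hat q_Z(z)>0}\frac{q_{iz}q_{jz}}{\hat q_Z(z)}=a_{\hat q_Z}(x_i,x_j),
\]
which is \cref{eq:qz-affinity} with the empirical aggregate $\hat q_Z$ in place of $q_Z$.

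For \cref{eq:decoder-centroid-decomposition}, fix an active $z$ and write $x_i-\widehat x_z=(x_i-\bar x_z)+(\bar x_z-\widehat x_z)$. Squaring and summing with weights $w_i$, the cross term $2\bigl\langle\sum_iw_i(x_i-\bar x_z),\,\bar x_z-\widehat x_z\bigr\rangle$ is again zero. What remains is
\[
\sum_iw_i\|x_i-\widehat x_z\|_2^2=\sum_iw_i\|x_i-\bar x_z\|_2^2+W_z\|\bar x_z-\widehat x_z\|_2^2 .
\]
Dividing by $N$ converts $W_z/N$ into $\hat q_Z(z)$. Summing over active $z$, and adding the zero contributions of inactive codewords to $D_{\mathrm{dec}}$, gives the decomposition.

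No step is a real obstacle, since everything reduces to the vanishing of the first weighted moment about $\bar x_z$. The only point needing care is the bookkeeping of normalizations, namely the factor $2$ from the double sum and the conversion $W_z=N\hat q_Z(z)$ that turns $1/(NW_z)$ into $1/(N^2\hat q_Z(z))$. I would verify these constants on a hard one-hot assignment, where the identities reduce to the classical within-cluster variance formulas.
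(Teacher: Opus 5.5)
Your proof is correct and follows the paper's argument. Both apply a per-codeword weighted variance identity, sum it over active codewords, and prove the decoder decomposition with the same centroid-plus-excess expansion, where the weighted cross term vanishes; the only cosmetic difference is that the paper compares the expanded double sum with the raw-moment form $\sum_i q_{iz}\|x_i\|^2-N\hat q_Z(z)\|\bar x_z\|^2$, whereas you center at $\bar x_z$ first and cancel the cross term directly (your explicit handling of inactive codewords in $D_{\mathrm{dec}}$ also fills in a step the paper leaves implicit).
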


\begin{proof}
For each active codeword,
\begin{equation}
\sum_iq_{iz}\|x_i-\bar x_z\|^2
=\sum_iq_{iz}\|x_i\|^2
-N\hat q_Z(z)\|\bar x_z\|^2.
\end{equation}
Expanding the weighted double sum of $\|x_i-x_j\|^2$ yields twice $N\hat q_Z(z)$ times the same within-codeword variance expression.
Summing over codewords proves the first identity, and division by the empirical aggregate mass gives the inverse-marginal affinity form.
For the decoder identity, expand $x_i-\widehat x_z=(x_i-\bar x_z)+(\bar x_z-\widehat x_z)$; the weighted cross term vanishes, and $\sum_iq_{iz}=N\hat q_Z(z)$ gives the stated excess-error term.
\end{proof}

For a hard assignment $c$, let $V_z:=\{i:c(i)=z\}$.
For each nonempty class, the familiar special case is
\begin{equation}
\sum_{i\in V_z}\|x_i-\bar x_z\|^2
=\frac1{2|V_z|}\sum_{i,j\in V_z}\|x_i-x_j\|^2,
\label{eq:vq-pairwise}
\end{equation}
The variance identity therefore determines the inverse empirical-marginal factor.
Where the assignments are differentiable in the encoder parameters and active aggregate masses remain positive, the two forms also have identical encoder gradients.

The population form requires a finite second moment.
Let $X\sim\mu$ with $\E\|X\|^2<\infty$, let $Z\mid X\sim q_\theta(\cdot\mid X)$, and define $\bar x_z=\E[X\mid Z=z]$ for active words.
Then
\begin{equation}
\E\|X-\bar x_Z\|^2
=\frac12\E_{X,X'\overset{\mathrm{iid}}{\sim}\mu}
\left[\|X-X'\|^2a_{q_Z}(X,X')\right].
\label{eq:stochastic-vq-pairwise}
\end{equation}
To see this, condition on each active $z$, apply the two-independent-copies variance identity, and average with weight $q_Z(z)$.
The conditional law has density $q_\theta(z\mid x)/q_Z(z)$ with respect to $\mu$, producing exactly the inverse-mass factor.

This correspondence concerns a scalar fidelity and its optimal per-codeword reproduction.
Pointwise reconstruction uses the reproduction vectors, while within-code relational distortion can use the partition or affinity as its reconstructed statistic.
The identity relies on this per-codeword decoder structure; a decoder that jointly interprets multiple latent variables need not admit the same decomposition.

\subsection{Normalized association and normalized cut}
\label{sec:normalized-cut-realization}

Normalized cut is a classical graph-partitioning objective that penalizes edges crossing between groups relative to the total edge volume of those groups~\citep{shi2000normalized}.
Equivalently, normalized association rewards edge weight retained within each group after normalization by group volume.
We show here that the endogenous inverse-mass affinity of the finite-codeword construction recovers these objectives under graph-local degree weighting.

For an undirected nonnegative graph with positive total volume, let $d_i=\sum_jW_{ij}$, $\operatorname{vol}(G)=\sum_i d_i$, and $\pi_i=d_i/\operatorname{vol}(G)$.
This defines a degree-weighted distribution over the vertices.
For assignments $q_i(z)$, the aggregate codeword distribution under this degree weighting is
\begin{equation}
\bar q_G(z)=\sum_i\pi_iq_i(z)
=\frac{\sum_i d_iq_i(z)}{\operatorname{vol}(G)}.
\label{eq:graph-aggregate-code}
\end{equation}
Zero-degree vertices receive zero weight under this distribution.
Write $\mathcal Z_G^+=\{z:\bar q_G(z)>0\}$.
The endogenous inverse-mass affinity $a_{q_Z}$ of \cref{eq:qz-affinity}, built from the aggregate codeword distribution $q_Z(z)$ in \cref{eq:aggregate-code}, specializes here by replacing $q_Z(z)$ with the graph-local aggregate $\bar q_G(z)$ induced by the degree-weighted vertex distribution $\pi$.
\begin{equation}
a_{\bar q_G}(i,j)
:=\sum_{z\in\mathcal Z_G^+}\frac{q_i(z)q_j(z)}{\bar q_G(z)}.
\end{equation}

Let $\vec E$ contain both directions of every undirected edge.
The graph-local inverse-mass affinity gives
\begin{align}
\operatorname{NAssoc}_{\mathrm{soft}}(G)
&=\frac1{\operatorname{vol}(G)}\sum_{(i,j)\in\vec E}W_{ij}a_{\bar q_G}(i,j)\\
&=\sum_{z\in\mathcal Z_G^+}
\frac{\sum_{(i,j)\in\vec E}W_{ij}q_i(z)q_j(z)}{\sum_i d_iq_i(z)}.
\label{eq:soft-nassoc-affinity}
\end{align}
The second expression follows by substituting \cref{eq:graph-aggregate-code}, showing that the graph-local inverse-mass affinity recovers a familiar soft normalized-association surrogate.
To obtain the corresponding fixed-$K$ normalized-cut criterion, we must also specify how empty classes are treated.
We define
\begin{align}
\operatorname{Ncut}_{K,\mathrm{soft}}(G)
&=K-\operatorname{NAssoc}_{\mathrm{soft}}(G)\\
&=K-|\mathcal Z_G^+|
+\sum_{z\in\mathcal Z_G^+}
\frac{\sum_{(i,j)\in\vec E}W_{ij}q_i(z)(1-q_j(z))}
{\sum_i d_iq_i(z)}.
\label{eq:soft-ncut-relational-distortion}
\end{align}
The correction $K-|\mathcal Z_G^+|$ is zero when every soft class is active.
For strictly positive soft assignments every class is active; the correction extends the criterion to hard or limiting assignments with empty classes.

\begin{proposition}[Hard normalized-cut specialization]
\label{prop:normalized-cut}
For a hard assignment $c$ with $K$ positive-volume classes $V_z:=\{i:c(i)=z\}$, \cref{eq:soft-nassoc-affinity} becomes
\begin{equation}
\operatorname{NAssoc}(G)=\sum_{z=1}^K
\frac{\operatorname{assoc}(V_z,V_z)}{\operatorname{vol}(V_z)},
\qquad
\operatorname{Ncut}(G)=K-\operatorname{NAssoc}(G).
\label{eq:hard-ncut}
\end{equation}
These are classical normalized association and normalized cut.
Under the fixed-$K$ extension, an empty class contributes zero association and a unit penalty to normalized cut.
\end{proposition}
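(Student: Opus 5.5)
The plan is to substitute one-hot assignments into the second expression of \cref{eq:soft-nassoc-affinity} and evaluate numerator and denominator class by class. For a hard assignment, $q_i(z)=\ind[c(i)=z]$.

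\emph{Denominator.} It becomes
\[
\sum_i d_i\ind[c(i)=z]=\operatorname{vol}(V_z).
\]
This is positive by hypothesis for every $z$, so $\mathcal Z_G^+=\{1,\ldots,K\}$ and the correction term $K-|\mathcal Z_G^+|$ in \cref{eq:soft-ncut-relational-distortion} vanishes.

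\emph{Numerator.} It becomes
\[
\sum_{(i,j)\in\vec E}W_{ij}\ind[c(i)=z]\ind[c(j)=z].
\]
This is the weight of directed edges with both endpoints in $V_z$, that is, $\operatorname{assoc}(V_z,V_z)$ under the Shi--Malik convention $\sum_{i,j\in V_z}W_{ij}$. The identification needs two facts: $\vec E$ lists both orientations, and the zero diagonal removes self-pairs.

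\emph{Assembling the first identity.} Dividing and summing over $z$ gives the stated $\operatorname{NAssoc}$. The equivalence to the affinity form follows because $a_{\bar q_G}(i,j)=\ind[c(i)=c(j)]/\bar q_G(c(i))$ and $\bar q_G(z)=\operatorname{vol}(V_z)/\operatorname{vol}(G)$.

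\emph{Normalized cut.} Since $\operatorname{Ncut}_{K,\mathrm{soft}}$ is defined as $K-\operatorname{NAssoc}_{\mathrm{soft}}$, the identity $\operatorname{Ncut}=K-\operatorname{NAssoc}$ is immediate. To confirm that this is the classical $\sum_z\operatorname{cut}(V_z,\overline{V_z})/\operatorname{vol}(V_z)$, I would use
\[
\operatorname{vol}(V_z)=\operatorname{assoc}(V_z,V_z)+\operatorname{cut}(V_z,\overline{V_z}).
\]
This holds because $d_i$ splits into the weight going to neighbors inside $V_z$ and the weight going to neighbors outside it. Equivalently, it matches the hard evaluation of the last sum in \cref{eq:soft-ncut-relational-distortion}, where $q_i(z)(1-q_j(z))$ indicates $i\in V_z$, $j\notin V_z$.

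\emph{Empty classes.} Here I would argue directly from the fixed-$K$ definition. An empty class $z$ has $\bar q_G(z)=0$, so $z\notin\mathcal Z_G^+$ and it contributes nothing to $\operatorname{NAssoc}_{\mathrm{soft}}$. In \cref{eq:soft-ncut-relational-distortion} it is counted once in $K-|\mathcal Z_G^+|$, which is a unit penalty. The same convention covers positive-degree-free classes made up only of zero-degree vertices, since they also have zero volume.

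\emph{Main obstacle.} The only step needing care is the bookkeeping of edge conventions: the factor of two between $\vec E$ and undirected edges, the zero diagonal, and the definition of $\operatorname{assoc}$. The plan is to fix the convention that $\operatorname{assoc}$ sums over ordered pairs, which makes $\operatorname{vol}(V_z)=\sum_{i\in V_z}d_i$ consistent with it. The rest is direct substitution.
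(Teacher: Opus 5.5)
Your proposal is correct and follows the paper's argument. One-hot substitution turns the class-$z$ numerator into its directed internal edge weight and the denominator into $\operatorname{vol}(V_z)$; the identity $\operatorname{cut}(V_z,V\setminus V_z)=\operatorname{vol}(V_z)-\operatorname{assoc}(V_z,V_z)$ gives the classical cut form; and the empty-class unit penalty comes directly from the fixed-$K$ definition $K-\operatorname{NAssoc}$. Your added care about both orientations in $\vec E$, the zero diagonal, and nonempty zero-volume classes matches the paper's appendix conventions and only spells out what the paper leaves implicit.
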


\begin{proof}
One-hot membership makes the numerator for class $z$ its directed internal edge weight and the denominator its volume.
The identity $\operatorname{cut}(V_z,V\setminus V_z)=\operatorname{vol}(V_z)-\operatorname{assoc}(V_z,V_z)$ gives the cut form.
The empty-class extension follows from the definition $K-\operatorname{NAssoc}$.
\end{proof}

The hard objective and its clustering correspondences are established~\citep{shi2000normalized,dhillon2004kernel}.
The soft surrogate is also used by neural partitioners such as GAP and the method of Gatti et al.~\citep{nazi2019gap,gatti2022deep}.
Here these established objectives are positioned within the common description of source fidelity and codeword affinity.

The soft ratio uses a deterministic soft mass in its denominator, whereas the volume of a randomly sampled hard class is itself random; it therefore differs from the expectation of hard normalized cut under sampled assignments.
The distinction between ratios of expectations and expected ratios motivates complementary probabilistic-cut formulations~\citep{ghriss2026beyond}.

\subsection{Common structure of the exact correspondences}

The centroid and normalized-association correspondences expose a common representation-side structure.
\begin{equation}
\begin{aligned}
D_{\mathrm{cent}}
&=\frac1{2N^2}\sum_{i,j}\|x_i-x_j\|^2a_{\hat q_Z}(x_i,x_j),\\
\operatorname{NAssoc}_{\mathrm{soft}}
&=\frac1{\operatorname{vol}(G)}\sum_{(i,j)\in\vec E}W_{ij}a_{\bar q_G}(i,j).
\end{aligned}
\label{eq:vq-ncut-relational-correspondence}
\end{equation}
Both use an endogenous inverse-aggregate-mass affinity, while the source-side weighting determines what agreement means: squared Euclidean distance penalizes grouping distant observations, whereas graph edge weight rewards grouping strongly linked vertices.
Thus the same inverse-aggregate-mass affinity construction can realize distinct relational fidelities without identifying their source geometries.

\section{Experimental Realizations of the Framework}
\label{sec:experiments}

The preceding theory leaves several components deliberately modular.
The five studies below vary those components while keeping finite-codeword assignments as the common representation family; \cref{tab:experimental-framework-summary} summarizes the roles used in the experiments.
Code for the experimental studies is available at \url{https://github.com/yaniv-shulman/relational-compression}.

\begin{table}[t]
\centering
\scriptsize
\setlength{\tabcolsep}{2.5pt}
\renewcommand{\arraystretch}{1.08}
\setlength{\extrarowheight}{1pt}
\begin{tabularx}{\linewidth}{>{\raggedright\arraybackslash}p{0.13\linewidth}
                            >{\raggedright\arraybackslash}p{0.18\linewidth}
                            >{\raggedright\arraybackslash}p{0.25\linewidth}
                            >{\raggedright\arraybackslash}X}
\toprule
Role & Classical compression & Relational compression & Finite-codeword realization used here \\
\midrule
Source / fidelity-bearing content
& Source object $x$ and the content reconstructed from it
& Relational object $(V,S)$, with $S$ the fidelity-bearing structure conditional on shared $V$
& Explicit point geometry, graph structure, or teacher-defined relations; Study 4 instead uses source-image reconstruction to constrain the finite representation through a joint spatial decoder \\
Encoder-side observations
& Usually the source observation itself
& $X_V=(x_v)_{v\in V}$ when encoder-side observations are present
& Point coordinates, graph or node structural features, or image pixels mapped to finite assignments \\
Retained description
& Retained message or code $m$
& Retained relational description $m_S$
& The source-specific $m_S$ comprises declared finite assignments, codes, or tokens plus any required auxiliary payload; $q_\theta(\cdot\mid x)$ is an optimization parameterization where applicable \\
Reconstruction / evaluation
& Reconstructed source $\widehat x$
& $\widetilde S=\mathsf{Dec}_V(m_S)$, materialized or evaluated on demand
& Same-code equality or collision may supply the evaluated relation directly or feed a richer structural or reconstruction decoder \\
Primitive representation-side relation
& No universal analogue is prescribed
& Depends on the chosen description mechanism
& Hard equality $\ind[c_\theta(x)=c_\theta(x')]$ and soft collision $k_\theta(x,x')$ \\
Fidelity
& Distortion $d(x,\widehat x)$
& Relational distortion $\Delta_V(S,\widetilde S)$
& Study-specific source fidelity acting on or through collision, including centroid/pairwise, graph, normalized-cut, reconstruction-defined, and signed-teacher criteria \\
Resource / representation constraint
& Encoded bits, rate, alphabet size, or another declared resource
& Description cost $\Omega(m_S)$
& Alphabet bound $K$, binary width, or nominal latent capacity constrain the declared description convention but need not equal the complete resource $\Omega(m_S)$; any additional retained payload must also be counted \\
Realized organization
& Realized code distribution or entropy under the coding convention
& Separate from description resource unless explicitly incorporated into $\Omega$
& $H_2(q_Z)$, $K_{\mathrm{eff}}$, and $D_2(q_Z\|r)$ when used \\
Shared context
& Codebook or model information available to encoder and decoder
& Shared carrier $V$ and context $C$, including reusable model parameters
& Shared neural encoder or decoder parameters and fixed conventions not included in each source-specific retained description \\
\bottomrule
\end{tabularx}
\caption{Summary of the compression roles used to interpret the experimental realizations. The final column specializes the general relational-compression interface to the finite-codeword constructions used in this paper. Realized codeword organization is reported separately from operational coding rate unless the declared resource convention identifies them.}
\label{tab:experimental-framework-summary}
\end{table}

The five studies provide a compact progression through these roles.
Study 1 provides a controlled numerical realization of the exact centroid--relation correspondence and decoder decomposition; Study 2 holds the finite representation mechanism fixed while exchanging graph-relational fidelity criteria; and Study 3 carries the normalized-cut realization into a shared inductive encoder evaluated on unseen graphs.
Studies 4 and 5 then contrast reconstruction-defined requirements on a finite image representation with explicit teacher-defined relational requirements.\footnote{We write $\lambda_{\mathrm{org}}$ for marginal organization. In Studies 2--4 it corresponds to \texttt{lambda\_sep} (``separation weight'') in the implementation, with no change to settings or objectives; Study 5's pair-specific separation is distinct.}

\subsection{Study 1: A controlled reconstruction--relational identity}
\label{sec:synthetic-experiment}

The first study provides a controlled numerical realization of \cref{prop:centroid-pairwise}, checking that the implemented centroid and inverse-mass-weighted pairwise forms agree as scalar objectives and in their encoder gradients.
It also isolates what fails when the inverse-mass factor is omitted.

The source consists of 768 points in $\R^2$ from an imbalanced six-component Gaussian mixture.
A finite encoder has $K=8$ possible codewords, whose probabilities are evaluated exactly rather than sampled.
The mixture imbalance encourages nonuniform empirical codeword occupancy, making the inverse-mass normalization consequential.
Full settings and diagnostics are in \ref{app:synthetic-details}.

With $\hat q_Z(z):=\frac{1}{N}\sum_iq_{iz}$, the implemented pairwise objective is the finite-sample inverse-mass identity already given in \cref{eq:finite-soft-pairwise}:
\[
D_{\mathrm{pair}}
:=
\frac{1}{2N^2}
\sum_{z:\hat q_Z(z)>0}
\frac{1}{\hat q_Z(z)}
\sum_{i,j}
q_{iz}q_{jz}\|x_i-x_j\|_2^2.
\]
Thus $D_{\mathrm{pair}}=D_{\mathrm{cent}}$.
The deliberately uncorrected comparison is
\[
D_{\mathrm{raw}}
:=
\frac{1}{2N^2}
\sum_{i,j}
\|x_i-x_j\|_2^2
\sum_zq_{iz}q_{jz}.
\]
Uniform empirical occupancy would give $D_{\mathrm{cent}}=KD_{\mathrm{raw}}$.
The comparison therefore includes this global factor $K=8$; nonuniform occupancy instead requires the code-dependent inverse masses.

Across the optimization and diagnostic checks, the corrected pairwise and centroid objectives agree to approximately single-precision tolerance, with maximum discrepancies below $2\times10^{-6}$ and encoder-gradient cosines of 1.000.
By contrast, $KD_{\mathrm{raw}}$ has mean absolute error 4.79 in the random-state check.
A global scale does not replace the inverse aggregate-mass correction.

With the encoder assignments fixed, learned codeword reproduction vectors numerically realize \cref{eq:decoder-centroid-decomposition}.
The excess term $\sum_{z:\hat q_Z(z)>0}\hat q_Z(z)\|\bar x_z-\widehat x_z\|_2^2$ decreases from 21.05 to $8.15\times10^{-5}$, while the largest decomposition residual is $1.91\times10^{-6}$.
The result separates distortion due to the assignments from distortion due to an imperfect reproduction codebook.
The identity applies to per-codeword centroid reproduction; joint neural decoding of many latent tokens has a different structure.
\Cref{fig:synthetic-collision-centroid} summarizes the exact objective correspondence, the failure of the uncorrected collision control, and the decoder decomposition.

\begin{figure}[t]
\centering
\includegraphics[width=\linewidth]{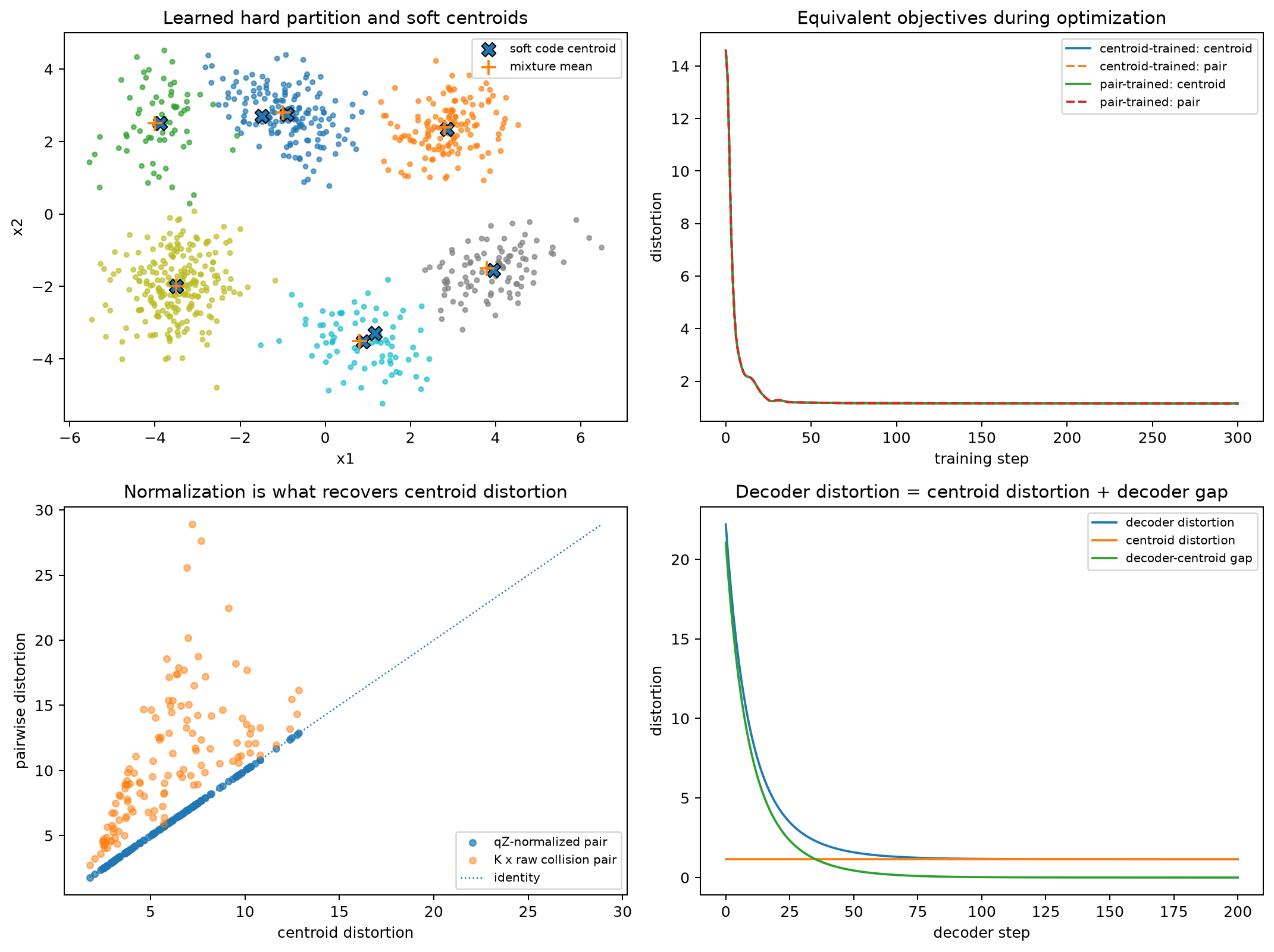}
\caption{Synthetic numerical realization of the centroid--pairwise correspondence. Top left: learned hard partition and soft centroids. Top right: the two exact objectives follow the same trajectory from identical initialization. Bottom left: corrected pairwise distortion lies on the centroid identity line, whereas $KD_{\mathrm{raw}}$ does not. Bottom right: decoder distortion decomposes into centroid distortion plus reproduction mismatch.}
\label{fig:synthetic-collision-centroid}
\end{figure}

The experiment verifies the inverse-mass centroid correspondence and separately exposes decoder reproduction mismatch, numerically illustrating why assignments and decoder reproductions are distinct parts of the description.

\subsection{Study 2: Exchanging relational fidelity on a fixed representation}
\label{sec:transductive-relational-distortion-experiment}

The second study demonstrates the framework's separation between the finite representation and the relational fidelity used to shape it.
Holding the graph-local $K=8$ representation and marginal-organization mechanism fixed, we exchange four source-derived graph fidelities and compare the resulting occupancy--fidelity behavior and partitions.
Each graph has a free $n\times K$ logit matrix with $K=8$, rather than a learned graph encoder.
There are no node features or labels, and no generalization problem.

For $q_i=\operatorname{softmax}(\ell_i/\tau)$ with $\tau=1$, all four source distortion criteria have the form
\begin{equation}
D_\rho(q)=\sum_{e=\{i,j\}\in E}\rho_e(1-q_i^\top q_j),\qquad
\rho_e\ge0,\quad\sum_{e\in E}\rho_e=1.
\label{eq:transductive-drho}
\end{equation}
The choices are normalized direct-edge distortion $D_E$ from \cref{eq:normalized-edge-distortion}, all-mode effective-resistance distortion $D_F$ from \cref{eq:source-normalized-dirichlet-distortion}, source-collision probe distortion $D_C$ from \cref{eq:source-collision-edge-distortion}, and entropy-field distortion $D_{H_2}$ from \cref{eq:entropy-probe-distortion} when its denominator is positive.
Here $D_{H_2}$ denotes distortion of the source transition-entropy field and is distinct from the code-occupancy statistic $H_2(\bar q_G)$ below.
The graph-local marginal is degree weighted, and each run minimizes
\begin{equation}
\mathcal L=D_\rho(q)+\lambda_{\mathrm{org}}D_2(\bar q_G\|U_K)
=D_\rho(q)-\lambda_{\mathrm{org}}H_2(\bar q_G)+\lambda_{\mathrm{org}}\log K.
\label{eq:transductive-relational-objective}
\end{equation}
Here $\bar q_G$ is the degree-weighted graph-local aggregate defined in \cref{eq:graph-aggregate-code}, and the second equality uses the uniform-reference R\'enyi identity in \cref{eq:uniform-renyi}.
Thus the sweep varies a noncollapse requirement while evaluating relational fidelity.

The source collections are 20 cleaned connected MalNet-Tiny test graphs, 20 cleaned COLLAB graphs, and 20 cleaned PROTEINS graphs~\citep{freitas2021large,morris2020tudataset,yanardag2015deep,borgwardt2005protein}.
To separate sensitivity to source edge weights from differences in topology, we also evaluate a weighted control on the same MalNet topologies, assigning deterministic positive lognormal edge weights.
Labels are ignored.
Because the graph-local objectives are optimized nonconvexly, every graph, fidelity, and organization weight uses the same multi-restart selection protocol, with selection by the target soft objective.
Full restart, refinement, graph-filtering, and sweep details are given in \ref{app:transductive-relational-distortion-details}.

\paragraph{Occupancy--fidelity trajectories}
At zero organization weight, every defined criterion selects the one-codeword solution, with zero positive-relation distortion and $K_{\mathrm{eff}}=1$.
Increasing organization moves the hard solutions toward broader codeword use with nonzero relational distortion.
On unweighted MalNet, $\lambda_{\mathrm{org}}=0.20$ yields hard $K_{\mathrm{eff}}\approx7.8$ across criteria, with own hard distortions 0.238 for $D_E$, 0.153 for $D_F$, 0.127 for $D_C$, and 0.096 for $D_{H_2}$.
These values quantify different source properties on their respective fidelity scales.

\begin{figure}[t]
\centering
\includegraphics[width=\linewidth]{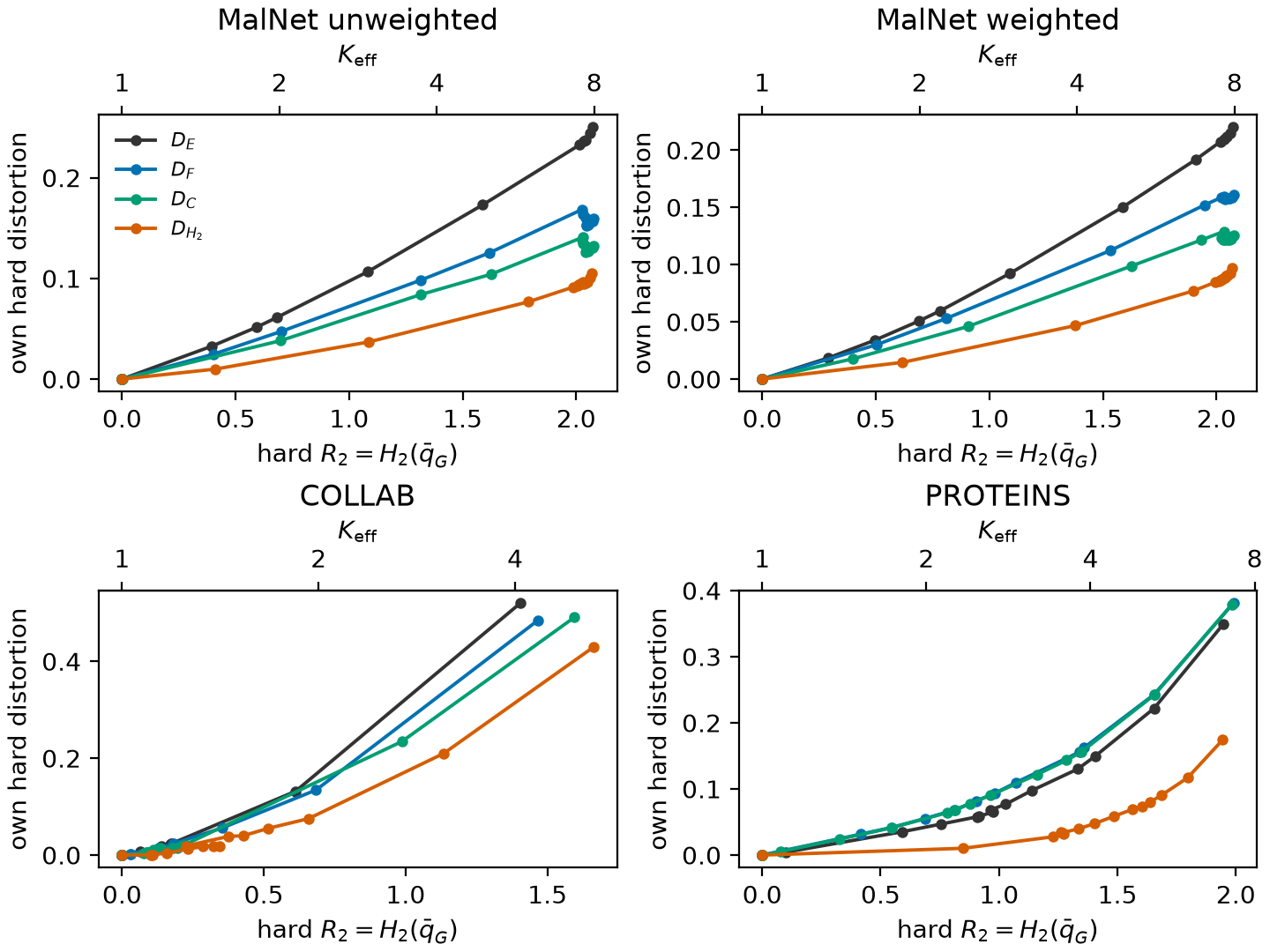}
\caption{Transductive collision-occupancy--fidelity trajectories. The horizontal coordinate is hard $H_2(\bar q_G)$ and the vertical coordinate is each criterion's own hard distortion. Points average the selected graphs at 15 organization weights. The secondary axis transforms the plotted mean entropy by exponentiation; it does not in general equal the arithmetic mean of per-graph effective counts.}
\label{fig:transductive-relational-distortion-sweep}
\end{figure}

In \cref{fig:transductive-relational-distortion-sweep}, complete collapse preserves all positive edges under the masking interpretation, while organization imposes distinctions that sever some of them, so the curves increase rather than follow the classical shape of an upper-rate distortion budget.
The experiment realizes the lower-occupancy-bound direction of \cref{eq:relational-complexity-distortion-frontier} through a penalty.
Its trajectories are empirical optimization results from a nonconvex solver rather than certified constrained frontiers.

\paragraph{Agreement between source criteria}
Some different fidelity constructions give similar edge priorities on particular sources.
\Cref{tab:transductive-distortion-correlations} compares their normalized importance vectors and their rankings of 256 approximately node-balanced random hard partitions per graph.
$D_F$ and $D_C$ are strongly aligned on unweighted MalNet and PROTEINS, with mean rank correlations 0.975 and 0.981.
The weighted MalNet control reduces the correlation to 0.724, showing that source weights materially affect their agreement.

\begin{table}[t]
\centering\small
\begin{tabular}{lrrrr}
\toprule
Source & \shortstack{Importance cosine\\$(D_F,D_C)$} & \shortstack{Spearman\\$(D_F,D_C)$} & \shortstack{Spearman\\$(D_F,D_{H_2})$} & \shortstack{Spearman\\$(D_C,D_{H_2})$} \\
\midrule
MalNet unweighted & 0.979 & 0.975 & 0.666 & 0.616 \\
MalNet weighted & 0.744 & 0.724 & 0.500 & 0.550 \\
COLLAB & 0.833 & 0.814 & 0.523 & 0.774 \\
PROTEINS & 0.983 & 0.981 & 0.616 & 0.608 \\
\bottomrule
\end{tabular}
\caption{Source-fidelity agreement. Importance cosine compares the normalized source-importance vectors for $D_F$ and $D_C$; the remaining columns report Spearman correlations of distortion values over random hard partitions, averaged over graphs. COLLAB correlations involving $D_{H_2}$ use the 18 sources where that normalized fidelity is defined.}
\label{tab:transductive-distortion-correlations}
\end{table}

The entropy-field criterion is generally less aligned with the other probes.
On PROTEINS at $\lambda_{\mathrm{org}}=0.20$, its solutions have hard $K_{\mathrm{eff}}\approx5.59$ and own distortion 0.091, while the other criteria have effective counts near 4.2 and own distortions 0.149--0.163.
Because these differences mix occupancy and fidelity, the occupancy-matched analysis below is needed to compare preserved structure at similar complexity.
COLLAB also has a slower transition, reaching mean hard $K_{\mathrm{eff}}\ge4$ only at weight 0.50, with less sharp soft assignments at high weights.

\paragraph{Occupancy-matched partitions and transfer}
To separate those effects, solutions are selected independently from the existing sweeps at hard $H_2$ targets from 0.25 to 2.0 nats.
A pair is compared only when each solution is within 0.10 nats of the target and their entropies differ by at most 0.10 nats.
Adjusted Rand index (ARI) evaluates label-invariant partition agreement.
Cross-distortion excess uses one evaluation fidelity at a time: $\Delta_{A\to B}=D_B(c_A)-D_B(c_B)$ and $\Delta_{B\to A}=D_A(c_B)-D_A(c_A)$.
These are approximate occupancy matches of heuristic solutions; the reported excess is descriptive and can be slightly negative.
\Cref{fig:transductive-rate-matched-partition-agreement} summarizes the resulting hard-partition agreement across occupancy targets and criterion pairs.

\begin{figure}[t]
\centering
\includegraphics[width=\linewidth]{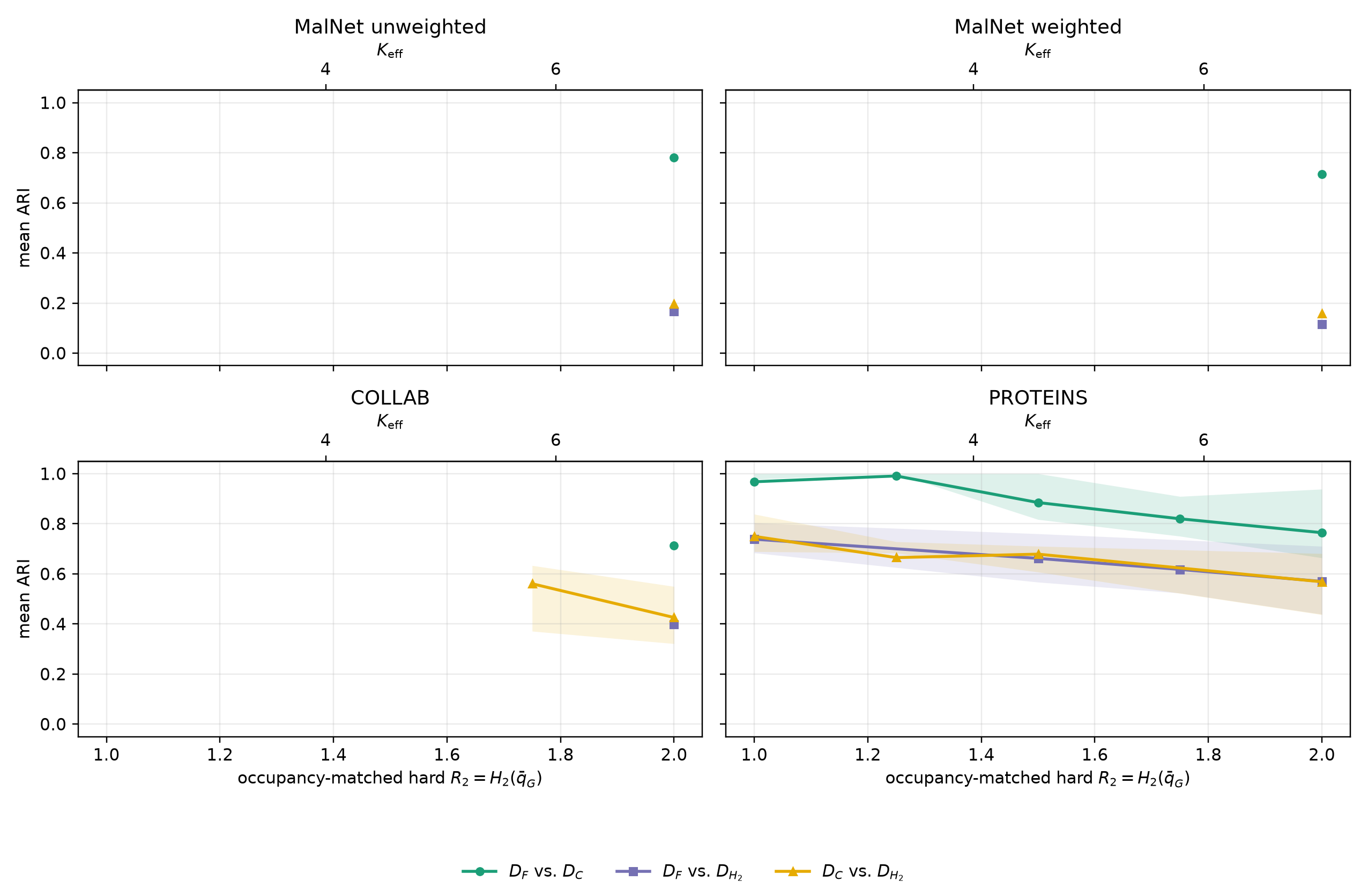}
\caption{Occupancy-matched hard-partition agreement. Mean ARI is shown with interquartile bands for criterion pairs meeting the 0.10-nat matching rule; a point is displayed only when at least five graphs match. Missing low-entropy MalNet points reflect limited sweep coverage.}
\label{fig:transductive-rate-matched-partition-agreement}
\end{figure}

On PROTEINS, $D_F$ and $D_C$ produce mean ARI 0.968 at $H_2=1.0$ and 0.991 at $H_2=1.25$, with near-zero transfer excess.
At $H_2=2.0$, their ARI is 0.764 while the two excesses remain only 0.0046 and 0.0084.
Thus partitions can differ without substantially changing either criterion's preserved content.
By contrast, at $H_2=2.0$ on unweighted MalNet, $D_F$ versus $D_{H_2}$ has ARI 0.166 and $D_C$ versus $D_{H_2}$ has ARI 0.196, compared with 0.781 for $D_F$ versus $D_C$.
The full matched table, six-pair diagnostics, and transfer matrix are retained in \ref{app:transductive-relational-distortion-details}.

Thus, with representation and marginal organization held fixed, changing source fidelity changes source-edge priorities and can change the resulting partition; agreement in priorities, assignments, and cross-fidelity transfer need not coincide.

\subsection{Study 3: An inductive realization of normalized-cut fidelity}
\label{sec:malnet-ncut-experiment}

The third study moves from graph-specific optimization to a shared encoder evaluated on unseen graphs.
It demonstrates an inductive realization of the normalized-cut correspondence: a single finite-codeword encoder is trained with graph-local soft normalized cut while marginal organization is varied, and the resulting hard partitions are evaluated by normalized-cut fidelity, code occupancy, and the post-hoc fidelity $D_F$.
The numerical comparison below uses a per-graph spectral normalized-cut reference.
For related differentiable graph-partitioning constructions, we refer the reader to GAP, the method of Gatti et al., and MinCutPool~\citep{nazi2019gap,gatti2022deep,bianchi2020spectral}.
The probabilistic-cut work of \citet{ghriss2026beyond} addresses expected discrete cut objectives with random denominators, whereas this study uses the deterministic soft-mass surrogate.

\paragraph{Source objects and preprocessing}
MalNet-Tiny function-call graphs are split using the official train, validation, and test assignments~\citep{freitas2021large}.
The inputs, losses, and selection criteria exclude malware labels.
Each directed graph is converted to a simple undirected edge union, self-loops and duplicate edges are removed, and the largest connected component is reindexed.
Requiring at least 512 nodes after cleaning leaves 1995 training, 275 validation, and 584 test graphs.
The exclusions are documented in \ref{app:malnet-ncut-details}.

Node features are structural.
On the cleaned graph they include a constant, degree, $\log(1+d_i)$, graph-normalized degree, and random-walk return estimates for $\operatorname{diag}(P^t)$, where $P$ is the transition matrix of the cleaned undirected graph.
The estimates are cached deterministically for each graph.
Exact return probabilities are permutation-equivariant node features; the finite cached probe realization retains exact samplewise equivariance when its probes are transported with a relabeling.

Directed in-degree, out-degree, total degree, their logarithmic and normalized forms, and PageRank relative to the uniform baseline are computed on the original directed graph before symmetrization and carried to the retained vertices.
Explicit node identifiers and Laplacian eigenvectors are also excluded from the model inputs.
The encoder can use directed structural context even though fidelity is evaluated on the cleaned undirected graph.
This available-input convention is part of the inductive realization.

\paragraph{Encoder and objective}
A GraphGPS-style encoder~\citep{rampasek2022recipe} combines local GIN message passing~\citep{xu2019powerful} with global factorized efficient attention~\citep{shen2021efficient}, followed by a graph-context-conditioned $K=8$ assignment head.
Full architecture and training settings are given in \ref{app:malnet-ncut-details}.
Soft assignments are $q_i=\operatorname{softmax}(\ell_i)$; hard assignments take the maximizing codeword.

For each graph, $\bar q_G$ is the degree-weighted graph-local aggregate defined in \cref{eq:graph-aggregate-code}.
The minibatch loss is
\begin{equation}
\mathcal L_B=\frac1{|B|}\sum_{G\in B}
\left[\operatorname{Ncut}_{K,\mathrm{soft}}(G)
+\lambda_{\mathrm{org}}D_2(\bar q_G\|U_K)\right].
\label{eq:malnet-ncut-loss}
\end{equation}
Here $\operatorname{Ncut}_{K,\mathrm{soft}}$ is the fixed-$K$ soft normalized-cut criterion of \cref{eq:soft-ncut-relational-distortion}, while $D_2(\bar q_G\|U_K)$ uses the uniform-reference organization term of \cref{eq:uniform-renyi}.
The graph-local aggregate supplies the endogenous inverse-mass construction of \cref{eq:soft-nassoc-affinity}, while the external uniform reference specifies marginal organization.
The loss is averaged over graphs only after these graph-local quantities are formed.
When $\lambda_{\mathrm{org}}=0$, the marginal organization term is absent, so training minimizes only the graph-local soft normalized-cut criterion.
Training uses five seeds.

\paragraph{Evaluation}
Validation mean hard fixed-$K$ normalized cut, following \cref{eq:soft-ncut-relational-distortion,eq:hard-ncut}, selects checkpoints.
An empty hard class contributes zero association and hence a unit penalty, so using fewer than eight classes receives the corresponding fixed-$K$ cost.
Hard evaluation uses the induced partition relation against the cleaned source graph.
Post-hoc $D_F$ from \cref{eq:hard-partition-dirichlet-distortion} evaluates a second relational property after training.

A spectral reference computes a normalized-Laplacian embedding and deterministic $K$-means discretization separately for each cleaned graph, following the standard spectral-clustering construction~\citep{ng2001spectral}, and provides a transductive objective comparison.
The learned model instead applies one parameter set after structural preprocessing of each held-out graph.

\begin{table}[t]
\centering\small
\begin{tabular}{lrrrr}
\toprule
$\lambda_{\mathrm{org}}$ & Val Ncut & Test Ncut & Test $D_F$ & Test median Ncut \\
\midrule
0 & $1.129\pm0.063$ & $1.160\pm0.043$ & $0.0606\pm0.0115$ & $1.083\pm0.043$ \\
0.03 & $1.137\pm0.060$ & $1.171\pm0.062$ & $0.0657\pm0.0179$ & $1.093\pm0.061$ \\
0.10 & $1.158\pm0.029$ & $1.186\pm0.033$ & $0.0926\pm0.0055$ & $1.116\pm0.031$ \\
0.20 & $1.136\pm0.039$ & $1.170\pm0.030$ & $0.1117\pm0.0017$ & $1.114\pm0.035$ \\
0.50 & $1.285\pm0.026$ & $1.319\pm0.022$ & $0.1475\pm0.0020$ & $1.270\pm0.017$ \\
\midrule
Spectral reference & 1.143 & 1.159 & 0.0805 & 1.104 \\
\bottomrule
\end{tabular}
\caption{MalNet-Tiny fidelity results. Learned rows show mean and sample standard deviation across five seeds. Lower is better for all displayed fidelities. The deterministic spectral reference is evaluated once on the same cleaned splits. $D_F$ is post-hoc.}
\label{tab:malnet-ncut-results}
\end{table}

The organization diagnostics report soft $K_{\mathrm{eff}}$, the collision-effective code count defined in \cref{eq:effective-code-count}, and soft $D_2$, meaning $D_2(\bar q_G\|U_K)$ with the uniform-reference identity in \cref{eq:uniform-renyi}.

\begin{table}[t]
\centering\small
\begin{tabular}{lrrr}
\toprule
$\lambda_{\mathrm{org}}$ & Largest hard volume fraction & Soft $K_{\mathrm{eff}}$ & Soft $D_2$ \\
\midrule
0 & $0.840\pm0.042$ & $1.42\pm0.14$ & $1.745\pm0.093$ \\
0.03 & $0.807\pm0.072$ & $1.54\pm0.26$ & $1.679\pm0.149$ \\
0.10 & $0.678\pm0.023$ & $2.00\pm0.08$ & $1.416\pm0.043$ \\
0.20 & $0.544\pm0.006$ & $2.45\pm0.03$ & $1.195\pm0.012$ \\
0.50 & $0.375\pm0.012$ & $3.55\pm0.10$ & $0.820\pm0.030$ \\
\midrule
Spectral reference & 0.575 & -- & -- \\
\bottomrule
\end{tabular}
\caption{Organization of the same MalNet-Tiny solutions. Quantities are computed per graph before averaging. Mean effective count and mean divergence need not obey a nonlinear identity after averaging. The reference has a hard effective count of 2.78; it has no soft-assignment counterpart in these columns.}
\label{tab:malnet-organization-results}
\end{table}

\paragraph{Fidelity and organization}
Pure normalized cut attains test hard Ncut $1.160\pm0.043$, close to the spectral mean 1.159.
The learned partition is more concentrated: its largest hard class contains mean volume fraction $0.840\pm0.042$, and soft $K_{\mathrm{eff}}$ is only $1.42\pm0.14$ despite nearly all eight hard classes being active.
The two solutions therefore have similar objective values but different representation organization.

Increasing organization weight to 0.20 reduces the largest hard volume fraction to $0.544\pm0.006$ and raises soft $K_{\mathrm{eff}}$ to $2.45\pm0.03$, while test Ncut remains $1.170\pm0.030$.
At weight 0.50, the volume fraction falls to $0.375\pm0.012$ and effective count rises to $3.55\pm0.10$, but Ncut increases to $1.319\pm0.022$.
The occupancy trend is stable across the sweep, while small Ncut differences among moderate weights should not be overinterpreted relative to seed variation.
\Cref{fig:malnet-ncut-sweep} summarizes this occupancy--fidelity behavior across the organization sweep.

\begin{figure}[t]
\centering
\includegraphics[width=\linewidth]{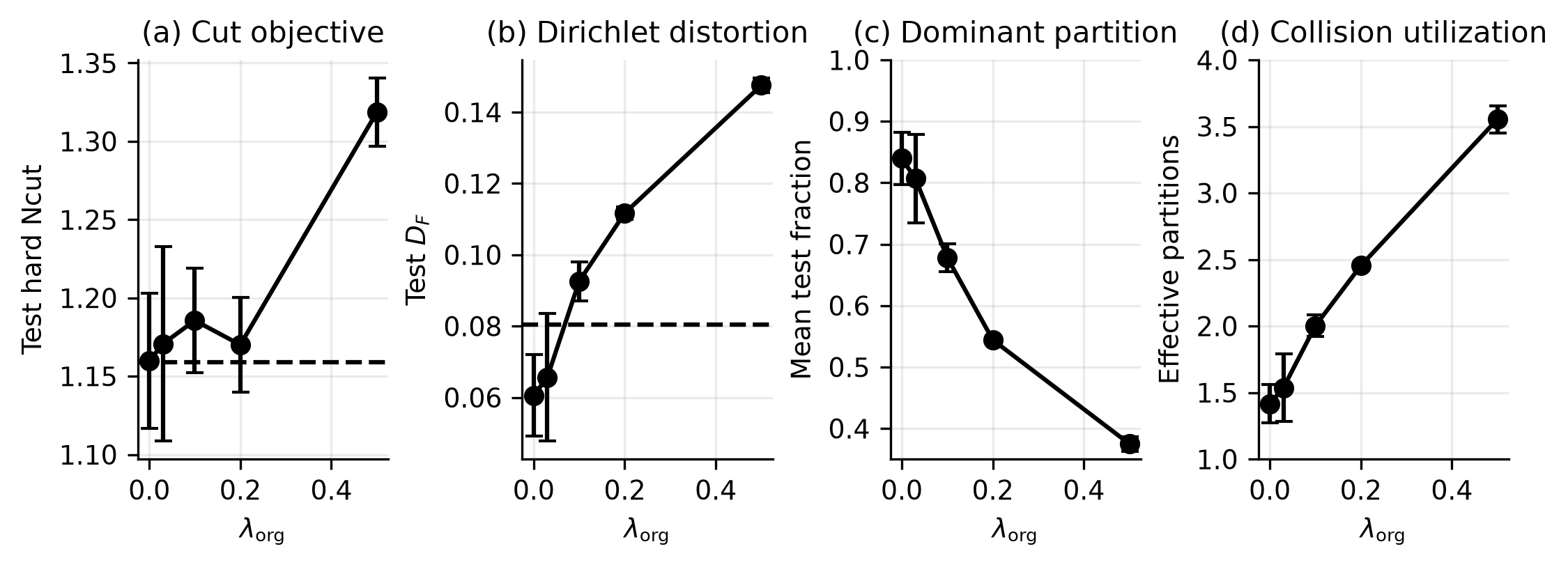}
\caption{MalNet-Tiny organization sweep. Error bars show sample standard deviations over five seeds. Increasing the marginal-organization weight reduces dominant-class volume and broadens soft effective occupancy; post-hoc $D_F$ shows the accompanying change in another relational fidelity. Dashed lines show transductive spectral summaries.}
\label{fig:malnet-ncut-sweep}
\end{figure}

The post-hoc probe also distinguishes partitions that look similar under normalized cut.
Weights 0.03 and 0.20 give almost identical test Ncut, $1.171\pm0.062$ and $1.170\pm0.030$, but $D_F$ values $0.0657\pm0.0179$ and $0.1117\pm0.0017$.
After averaging seeds and pairing by test graph, the latter has higher $D_F$ on 581 of 584 graphs, with mean difference 0.0460; the corresponding mean Ncut difference is $-0.0006$.
Normalized cut and effective-resistance distortion therefore define different fidelities.

Likewise, the pure learned solution has lower mean $D_F$ than the spectral reference, 0.0606 versus 0.0805, but is substantially more concentrated.
The comparison matches nominal assignment capacity but not realized occupancy; the learned and spectral procedures also have different deployment and model-cost conventions.
\Cref{fig:malnet-ncut-partition-example} illustrates cases near the 25th, 50th, 75th, and 95th percentiles of the learned-minus-spectral normalized-cut gap among held-out graphs with at most 900 nodes.

\begin{figure}[t]
\centering
\includegraphics[width=\linewidth]{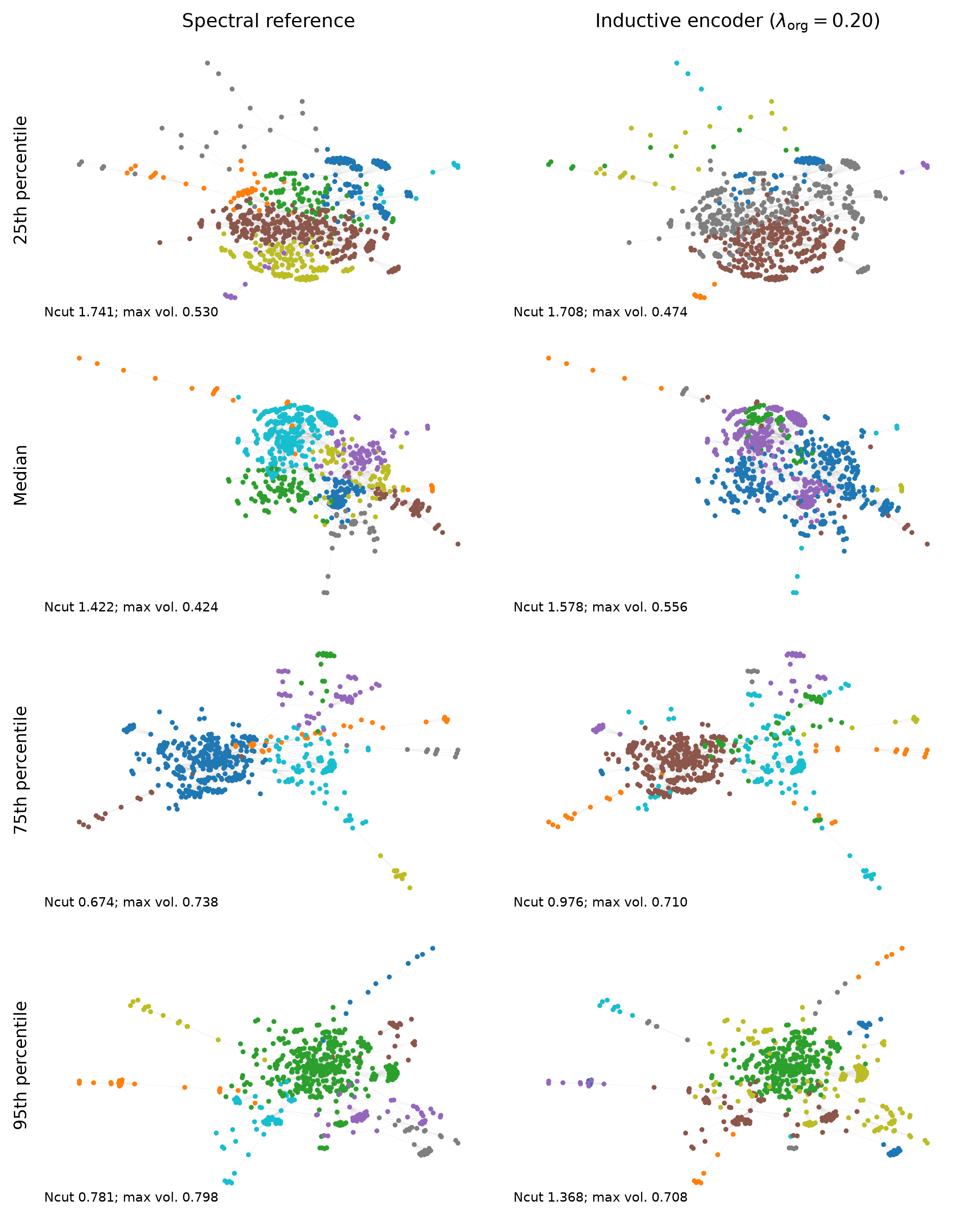}
\caption{Representative MalNet-Tiny partitions at organization weight 0.20, selected near the 25th, 50th, 75th, and 95th percentiles of the learned-minus-spectral hard Ncut gap among held-out graphs with at most 900 nodes. Each row shares a force-directed layout. Partition labels and color assignments are independently permuted for display, so colors do not correspond across panels.}
\label{fig:malnet-ncut-partition-example}
\end{figure}

Together, the study shows that a shared encoder can learn the normalized-cut realization on unseen graphs, while occupancy and post-hoc $D_F$ reveal distinctions not captured by normalized cut alone.

\subsection{Study 4: Reconstruction-defined image representations}
\label{sec:flowers102-experiment}

The fourth study demonstrates a different route by which a source requirement can shape a finite-codeword representation: pixel reconstruction acts through a joint spatial decoder rather than directly on same-code relations.
Holding the 16-bit bottleneck architecture fixed, we vary marginal organization to examine how reconstruction quality and hard-code occupancy interact; a 32-bit run provides a separate capacity check.
Unlike the analytical centroid realization, the synthesis transform jointly interprets neighboring spatial tokens, so no exact per-codeword centroid identity is assumed.
Reconstruction MSE is the source fidelity, while hard collision entropy and effective code count describe representation organization.

Binary and discrete image autoencoders have substantial precedents.
Recurrent binary bottlenecks, VQ-VAE, soft-to-hard vector quantization, and learned transform codecs already combine reconstruction with constrained latent representations~\citep{toderici2016variable,vanDenOord2017neural,agustsson2017soft,balle2017end,balle2018variational}.
Related variational representation-learning work explicitly studies the rate--distortion trade-off between compression and reconstruction accuracy~\citep{alemi2018fixing}.
The Latent Bernoulli Autoencoder is a particularly close precedent: it trains an end-to-end reconstruction autoencoder with a hard $\{-1,+1\}$ latent obtained by zero thresholding and a straight-through gradient surrogate~\citep{fajtl2020latent}.
Codebook-free alternatives include finite scalar quantization, lookup-free quantization, and binary spherical quantization~\citep{mentzer2024finite,yu2024language,zhao2025binary}, while token-prior regularization also precedes this study~\citep{zhang2023regularized}.
The experiment isolates a collision-based organization mechanism in a purpose-limited architecture.

\paragraph{Data and representation}
Oxford Flowers102 contains 8189 images from 102 categories~\citep{nilsback2008automated}; labels are not used.
For this reconstruction study, the official test split supplies the 6149 training images, the official validation split supplies 1020 validation images, and the official training split supplies 1020 test images.
This nonstandard split is reported explicitly as part of the reconstruction protocol.
Training uses image augmentation; validation and test use deterministic $256\times256$ center-crop preprocessing.

The convolutional analysis transform produces $b$ logits at each position of a $32\times32$ latent grid.
Each position is a spatial token with $b$ hard bits, and code statistics pool tokens across images and positions.
The nominal raw latent capacity is $b/64$ bits per input pixel: 0.25 bpp for 16 bits and 0.50 bpp for 32 bits.
These are fixed latent capacities before any entropy coding.

The convolutional synthesis transform jointly interprets the spatial tokens to produce a sigmoid RGB output.
GDN/IGDN follows established nonlinear transform-coding practice~\citep{balle2017end}.
There are no encoder--decoder skip connections, so every reconstruction passes through the bottleneck.
The encoder and decoder are shared across images; the per-image nominal latent bpp counts the latent bits, while model size belongs to the complete coding convention.

At evaluation, bit $r$ at position $u$ is $h_r(x,u)=1$ if its logit $\ell_r(x,u)>0$, and $-1$ otherwise.
Hard tokens therefore use the centered sign convention $\{-1,+1\}^b$, a relabeling of the binary alphabet $\{0,1\}^b$ in \cref{eq:factorized-code}.
Centering the two states symmetrically about the zero threshold is convenient for the relaxation through which gradients are propagated during training, while hard equality and collision statistics are unchanged by the relabeling.
Training uses the mean-group relaxation described in \ref{app:flowers102-details}.
DiffPrune provides earlier work on deterministic approximate binary gates~\citep{shulman2020diffprune}.
The training surrogate used here more directly adapts the sign-partitioned group-mean transformation of \citet{shulman2021exact}, originally introduced for binary network parameters, to spatial latent variables, and combines it with the concentration penalty of \cref{eq:concentration-reduction} rather than reproducing the original continuation formulation unchanged.
Binary latent autoencoders and sign-based discrete tokenizers precede this study; to our knowledge, prior discrete-autoencoder work has not used this group-coupled relaxation to train a hard-sign latent bottleneck.

\paragraph{Collision-based organization}
The training loss has the form
\begin{equation}
\mathcal L=\mathcal L_{\mathrm{mse}}
+\lambda_{\mathrm{conc}}\mathcal L_{\mathrm{conc}}
+\lambda_{\mathrm{org}}\widetilde D_2.
\label{eq:flowers-objective}
\end{equation}
The last term acts on a differentiable threshold-margin surrogate, while evaluation uses literal collisions of the hard sign tokens.
Let $s_r>0$ be a detached per-channel RMS logit scale maintained by an exponential moving average.
For gain $\gamma>0$, define
\begin{equation}
\eta_r(x,u):=\ell_r(x,u)/s_r,\qquad
\tilde p_r(x,u)=\sigma(\gamma\eta_r(x,u)).
\label{eq:flowers-threshold-probability}
\end{equation}
Equivalently, an independent logistic threshold perturbation $T_r$ with location zero and scale $s_r/\gamma$ gives $\tilde p_r=\Pr[T_r<\ell_r]$.
Independence is across bit draws and independently encoded tokens; using one shared random threshold would define a different collision model.

For two tokens, whole-code collision under these factorized probabilities is
\begin{equation}
\widetilde k_{ij}
=\prod_{r=1}^b\left[\tilde p_{ir}\tilde p_{jr}+(1-\tilde p_{ir})(1-\tilde p_{jr})\right].
\label{eq:flowers-threshold-collision}
\end{equation}
This is the factorized binary collision of \cref{eq:bit-collision}, instantiated for the code in \cref{eq:factorized-code} with the smooth threshold probabilities $\tilde p_r$.
The sampled organization loss is
\begin{equation}
\widetilde D_2=b\log2+\log\widehat{\E}[\widetilde k_{ij}],
\label{eq:flowers-separation}
\end{equation}
Since $K=2^b$, the population identities in \cref{eq:population-collision,eq:uniform-renyi} give $D_2(q_Z\|U_K)=b\log2+\log\E[k_\theta(X,X')]$.
The implemented $\widetilde D_2$ replaces that population collision by the smooth threshold model and a finite minibatch pair estimate, so it is an optimization surrogate rather than the hard-code $H_2$ and $K_{\mathrm{eff}}$ statistics reported at evaluation.
Because the logarithm is applied to a finite sampled collision estimate, $\widetilde D_2$ is not generally an unbiased estimator of the corresponding population R\'enyi divergence, nor is a finite-sample realization guaranteed to inherit its nonnegativity.
The reported hard $H_2$ and $K_{\mathrm{eff}}$ are computed from the empirical distribution of complete hard spatial tokens pooled across images and positions in the evaluation split, using \cref{eq:collision-entropy,eq:effective-code-count}; this pooled token collection is the declared evaluation population.

\paragraph{Ablations and results}
The 16-bit sweep uses weights $0$, $10^{-7}$, $10^{-6}$, and $10^{-5}$, plus a 32-bit capacity run at $10^{-6}$.
All settings use the same data split and one seed, 1337; validation hard MSE selects checkpoints.
Large changes therefore illustrate mechanism while small differences remain statistically unresolved.
Full checkpoint, MSE, MS-SSIM, and optimization settings are retained in \ref{app:flowers102-details}.

\begin{table}[t]
\centering\small
\begin{tabular}{lrrrrr}
\toprule
Setting & Bits & Nominal bpp & PSNR & Hard $H_2$ & $K_{\mathrm{eff}}$ \\
\midrule
No organization & 16 & 0.25 & 19.86 & 1.98 & 7.2 \\
Weaker, $10^{-7}$ & 16 & 0.25 & 24.36 & 7.00 & 1\,098 \\
Baseline, $10^{-6}$ & 16 & 0.25 & 24.83 & 9.50 & 13\,388 \\
Stronger, $10^{-5}$ & 16 & 0.25 & 24.56 & 10.29 & 29\,569 \\
Capacity, $10^{-6}$ & 32 & 0.50 & 25.88 & 12.46 & 257\,753 \\
\bottomrule
\end{tabular}
\caption{Flowers102 hard-code reconstruction and organization at validation-selected checkpoints. PSNR is in dB and $H_2$ in nats. Each row is one run. The nominal bpp describes raw latent bits before entropy coding.}
\label{tab:flowers102-results}
\end{table}

Without organization, reconstruction learns a nontrivial partition but uses only 38 distinct codewords, with effective count about 7.2, hard collision $1.38\times10^{-1}$, and 19.86 dB PSNR.
Thus reconstruction supplies a reason to distinguish inputs while allowing concentrated use of the available 16-bit alphabet.
Several bit channels become saturated in this run.

A weight of $10^{-7}$ raises PSNR to 24.36 dB and effective count to about $1.1\times10^3$.
The $10^{-6}$ setting gives the best 16-bit reconstruction among these runs, 24.83 dB and 0.896 MS-SSIM, with effective count 13\,388.
Increasing the weight to $10^{-5}$ further broadens occupancy to 29\,569 effective codes while PSNR decreases to 24.56 dB.
The few-tenths-of-a-decibel difference is unresolved in this single-seed study, while the sweep suggests that utilization can continue increasing after reconstruction quality plateaus.
\Cref{fig:flowers102-dynamics} shows how reconstruction and hard-code utilization evolve across the organization sweep.
\begin{figure}[t]
\centering
\includegraphics[width=\linewidth]{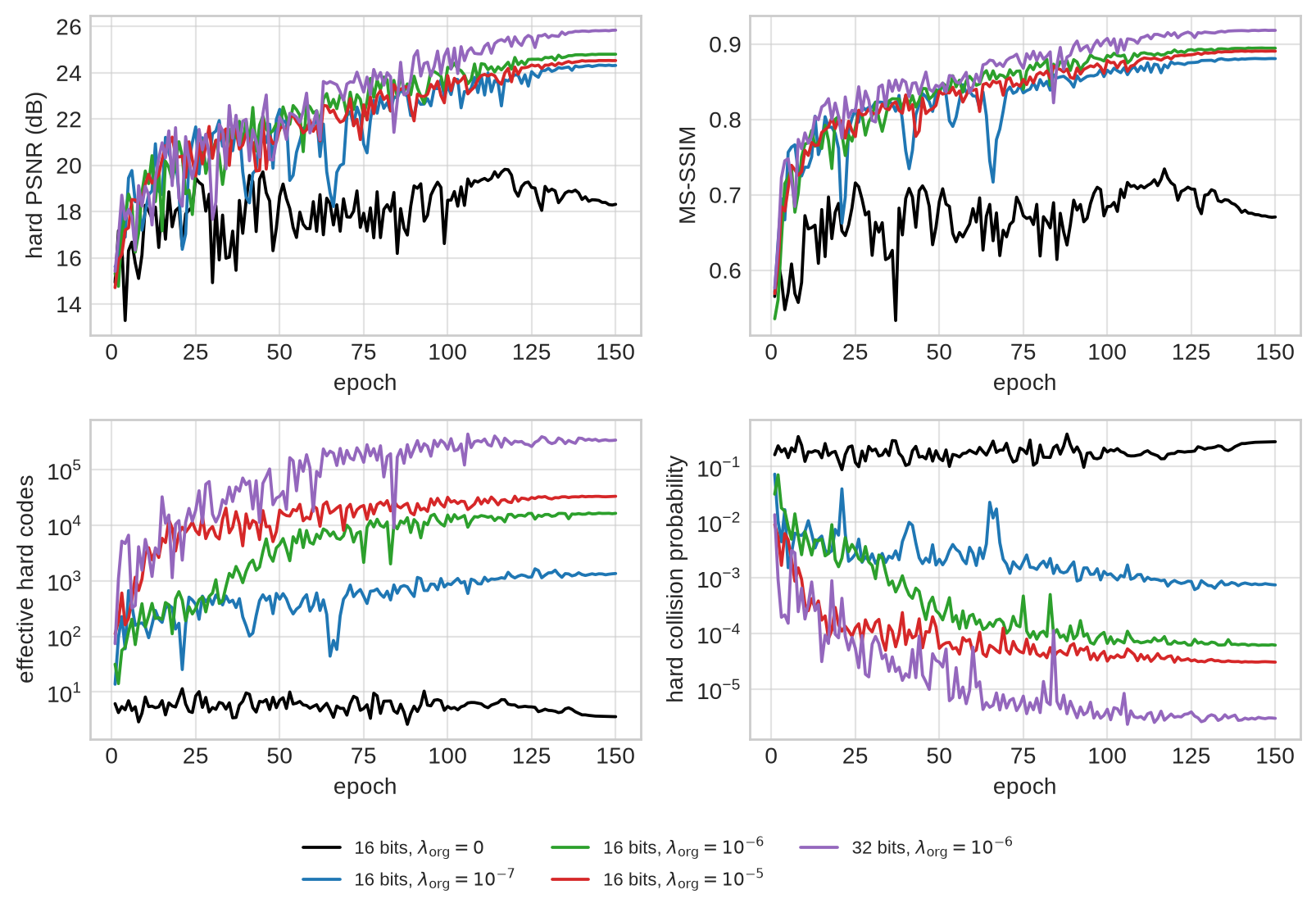}
\caption{Validation dynamics of the image reconstruction ablations. Utilization is measured from hard spatial tokens; smooth probabilities supply the training surrogate. The zero-weight run is highly concentrated; larger weights broaden occupancy. The 32-bit run has a larger nominal bottleneck capacity.}
\label{fig:flowers102-dynamics}
\end{figure}

Support size and effective occupancy tell different stories.
The baseline and stronger 16-bit settings activate 65\,208 and 65\,528 of 65\,536 possible words, but their effective counts are only 13\,388 and 29\,569.
The 32-bit run improves reconstruction to 25.88 dB and 0.920 MS-SSIM with effective count about $2.58\times10^5$.
It serves as a capacity check in which the nominal bpp, code space, and uniform-reference target all change.
Empirical occupancy also remains sample dependent, especially in a very large alphabet.
\Cref{fig:flowers102-16bit-tradeoff} summarizes the four fixed-capacity 16-bit operating points.
\begin{figure}[t]
\centering
\includegraphics[width=0.72\linewidth]{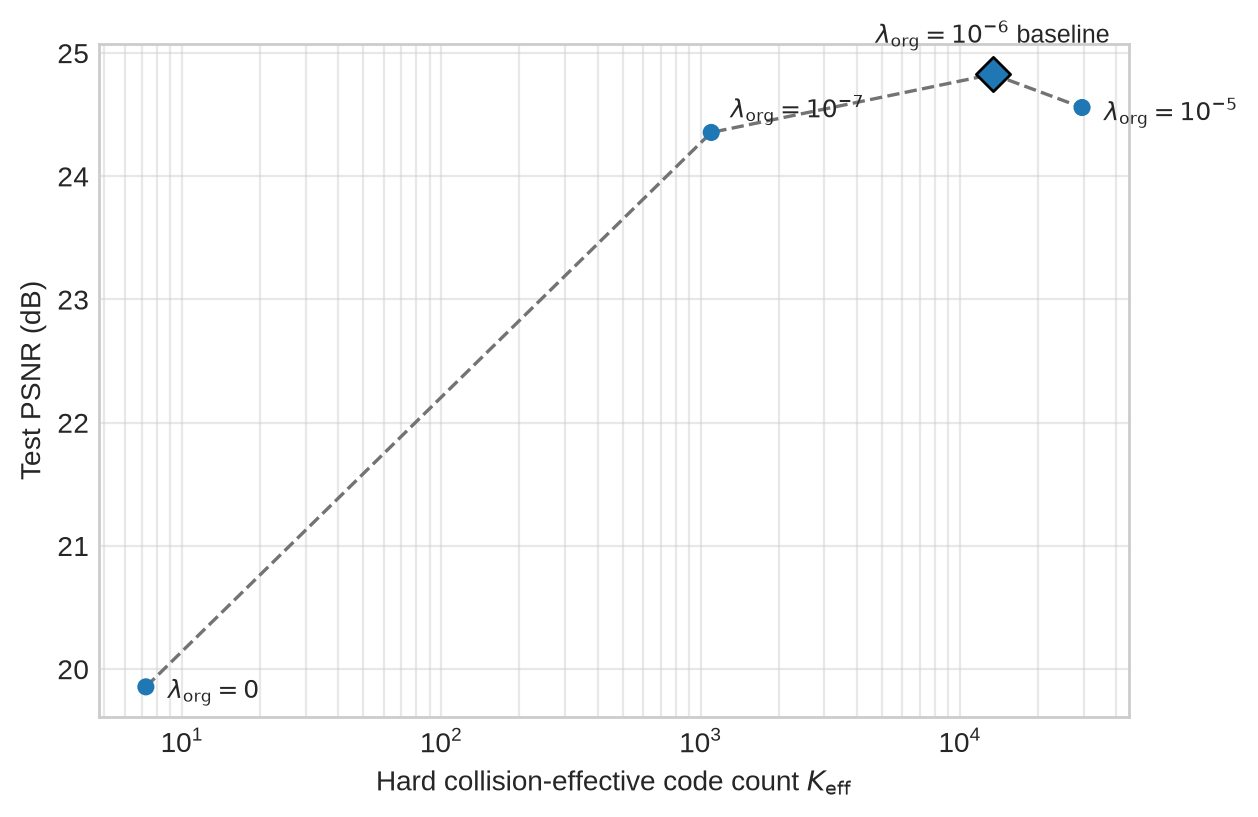}
\caption{The four 16-bit test operating points. Organization initially improves both effective code use and PSNR; beyond the baseline it increases utilization while reconstruction slightly worsens. The single-seed operating points provide a descriptive trajectory rather than a statistically resolved frontier.}
\label{fig:flowers102-16bit-tradeoff}
\end{figure}

Qualitative hard-code reconstructions are retained in \cref{fig:flowers102-reconstruction-examples}.
Taken together, reconstruction can train a hard finite representation through a joint decoder without ensuring broad alphabet use; marginal organization changes that use separately, while token collision remains an organization statistic rather than the reconstruction fidelity itself.
The 32-bit run separately probes increased nominal capacity.

\subsection{Study 5: Teacher-defined relations without reconstruction}
\label{sec:flowers102-teacher-experiment}

The fifth study asks whether an externally supplied source relation can organize a finite hard representation without reconstruction or a marginal-organization penalty.
A frozen visual teacher defines baseline-relative favored and disfavored relations among spatial patches, and an independent student encoder is trained only through pair-specific whole-code collision.
Hard relation-conditioned collision provides the primary evaluation.
A decoder fitted only after freezing the encoder supplies a secondary cross-fidelity probe of recoverable image structure.

Transferring relations rather than individual features is established in relational and similarity-preserving distillation~\citep{park2019relational,tung2019similarity}.
Dense visual precedents include STEGO's distillation of feature correspondences, TokenCut's use of transformer patch affinities, and DeepCut's graph clustering of pretrained image features~\citep{hamilton2022unsupervised,wang2022selfsupervised,aflalo2023deepcut}.
Vector-quantized teacher features also appear in BEiT v2~\citep{peng2022beitv2}.
The selected realization combines an independent image encoder, cross-image baseline-relative relations, and a whole-code collision objective.

\paragraph{Source relation and student}
The data split and image preprocessing match those of \cref{sec:flowers102-experiment}.
A frozen DINO ViT-S/8 teacher~\citep{caron2021emerging} sees the same $256\times256$ image crop as the student, with the teacher's input normalization.
Its final normalized spatial tokens form a $32\times32$ grid, matching the student's latent grid.
Let $e_i$ be the frozen teacher embedding of spatial patch token $i$.

The student uses the same convolutional analysis transform as Study 4, with $b=16$ logits per spatial position.
Hard evaluation uses zero-threshold signs.
During representation learning,
\begin{equation}
p_r(i)=\sigma(\ell_r(i)/\tau_c),\qquad \tau_c=0.25.
\end{equation}
There is no mean-group relaxation, concentration term, marginal organization penalty, entropy model, or reconstruction decoder at this stage.

\paragraph{One teacher relation, two relational roles}
Each minibatch supplies a bounded sample of spatial tokens, and valid candidates for token $i$ form $\mathcal N_i$.
Self-pairs and same-image pairs are excluded in the reported run.
For teacher temperature $\tau_T=0.1$, define
\begin{equation}
S^{\mathrm{patch}}_{ij}
=\frac{\exp(e_i^\top e_j/\tau_T)}
{\sum_{j'\in\mathcal N_i}\exp(e_i^\top e_{j'}/\tau_T)},
\qquad
S_{\mathrm{bg},ij}=\nu_i=|\mathcal N_i|^{-1}.
\label{eq:teacher-row-relation}
\end{equation}
The teacher mapping and relation-construction rule are fixed, but $S^{\mathrm{patch}}$ is row-normalized within each sampled candidate set and is therefore not one immutable affinity matrix over the full dataset.
The source-side baseline is uniform over valid candidates.
The resulting favored and disfavored pair weights are detached constants during optimization:
\begin{equation}
S_{+,ij}=[S^{\mathrm{patch}}_{ij}-S_{\mathrm{bg},ij}]_+,\qquad
S_{-,ij}=[S_{\mathrm{bg},ij}-S^{\mathrm{patch}}_{ij}]_+.
\label{eq:teacher-signed-weights}
\end{equation}
Thus a single teacher relation supplies both favored and disfavored collisions through the baseline-relative construction of \cref{eq:signed-relation}.

The cosine and candidate mask are symmetric, but row normalization makes $S^{\mathrm{patch}}$ directional.
The student collision is symmetric and both directions are included.
Accordingly, each side of the loss aggregates the corresponding directional weight masses on an unordered pair.
This aggregation is performed separately for favored and disfavored weights and can differ from thresholding a single symmetrized signed relation.

The differentiable whole-code probability is
\begin{equation}
\widetilde k_{ij}=\prod_{r=1}^b[p_r(i)p_r(j)+(1-p_r(i))(1-p_r(j))].
\end{equation}
This is the factorized whole-code collision of \cref{eq:bit-collision}, evaluated using the student's smooth Bernoulli probabilities.
The representation-side neutral collision level is the population collision induced by the uniform codeword reference $U_K$:
\[
\sum_z U_K(z)^2=\frac{1}{K}=2^{-b}.
\]
Define
\begin{equation}
\delta_{ij}:=\operatorname{logit}(\widetilde k_{ij})-\operatorname{logit}(1/K).
\label{eq:teacher-centered-margin}
\end{equation}
The implementation clips the probability away from zero and one before evaluating the logit; the numerical convention is described in \ref{app:flowers102-teacher-details}.
The source baseline $S_{\mathrm{bg},ij}$ determines the favored and disfavored requirements, while $1/K$ centers the representation-side margin.
The uniform distribution $U_K$ is used here only to define this neutral whole-code collision level: the objective does not match the aggregate distribution $q_Z$ to $U_K$ and has no $\mathcal L_{\mathrm{org}}$ term.

The two sides of the objective are
\begin{equation}
\begin{aligned}
\mathcal L_{\mathrm{align}}
&=\frac{\sum_{ij}S_{+,ij}\operatorname{softplus}(-\delta_{ij})}{\sum_{ij}S_{+,ij}},\\
\mathcal L_{\mathrm{sep}}^{\mathrm{rel}}
&=\frac{\sum_{ij}S_{-,ij}\operatorname{softplus}(\delta_{ij})}{\sum_{ij}S_{-,ij}},\\
\mathcal L_{\mathrm{teacher}}
&=\tfrac12\left(\mathcal L_{\mathrm{align}}+\mathcal L_{\mathrm{sep}}^{\mathrm{rel}}\right),
\end{aligned}
\label{eq:dino-teacher-loss}
\end{equation}
with zero-mass sides omitted from the average and zero loss when neither side is active.
Favored relations encourage collision above the reference, while disfavored relations encourage it below the reference.
The objective preserves baseline-relative distinctions without reconstructing teacher cosine values.

\paragraph{Evaluation and results}
The representation checkpoint is selected by validation teacher loss.
Hard sign tokens provide the reported occupancy statistics and relation-conditioned collision rates.
Let $c(i)\in\mathcal Z$ denote the complete hard codeword assigned to patch token $i$ by zero-thresholding the student logits.
The relation-conditioned collision rates are
\[
\widehat\kappa_+
=\frac{\sum_{ij}S_{+,ij}\ind[c(i)=c(j)]}{\sum_{ij}S_{+,ij}},
\qquad
\widehat\kappa_-
=\frac{\sum_{ij}S_{-,ij}\ind[c(i)=c(j)]}{\sum_{ij}S_{-,ij}}.
\]
The pooled empirical hard-code collision is
\[
\widehat\kappa_{\mathrm{pop}}
=\sum_z\widehat q_Z(z)^2
=e^{-\widehat H_2}.
\]
Favored-pair enrichment and disfavored-pair suppression are, respectively,
\[
\frac{\widehat\kappa_+}{\widehat\kappa_{\mathrm{pop}}},
\qquad
1-\frac{\widehat\kappa_-}{\widehat\kappa_{\mathrm{pop}}}.
\]
These ratios use the pooled empirical hard-code collision as their baseline and are descriptive rather than calibrated against a candidate-matched null.

\begin{table}[t]
\centering\small
\begin{tabular}{lrr}
\toprule
Metric & Validation & Test \\
\midrule
Representation epoch & \multicolumn{2}{c}{146} \\
Teacher loss & 0.389 & 0.389 \\
Favored-pair hard collision & $1.35\times10^{-2}$ & $1.29\times10^{-2}$ \\
Disfavored-pair hard collision & $1.37\times10^{-4}$ & $1.47\times10^{-4}$ \\
Aggregate hard collision & $1.03\times10^{-3}$ & $1.04\times10^{-3}$ \\
Favored-pair enrichment & 13.13 & 12.40 \\
Disfavored-pair suppression & 0.866 & 0.858 \\
Hard $H_2$ (nats) & 6.88 & 6.87 \\
$K_{\mathrm{eff}}$ & 974 & 965 \\
Active hard words & 43\,036 & 43\,028 \\
\midrule
Secondary decoder epoch & \multicolumn{2}{c}{140} \\
Decoder MSE & 0.01134 & 0.01121 \\
Decoder PSNR (dB) & 19.45 & 19.50 \\
Decoder MS-SSIM & 0.714 & 0.718 \\
\bottomrule
\end{tabular}
\caption{DINO-defined relational learning on Flowers102. Representation and secondary decoder checkpoints use separate validation criteria. The encoder is frozen during decoder training. The study uses one seed.}
\label{tab:flowers102-teacher-results}
\end{table}

On test images, teacher-favored pairs collide with probability $1.29\times10^{-2}$, compared with aggregate collision $1.04\times10^{-3}$.
Teacher-disfavored pairs collide with probability $1.47\times10^{-4}$.
The corresponding enrichment is $12.4\times$ and suppression is 85.8\%.
The weighted mean teacher cosine is 0.579 on favored relations and 0.311 on disfavored relations; both are positive, and their separation confirms that the baseline-relative split behaves as intended.

The hard representation has about 965 collision-effective words across 43\,028 active words, reflecting broad support with concentrated occupancy.
The result shows that pair-specific alignment and relational separation can organize a noncollapsed code without an explicit marginal entropy penalty.
The evaluation covers the declared relational distinctions and their hard-collision diagnostics rather than numerical reconstruction of the full teacher geometry.

\begin{figure}[t]
\centering
\includegraphics[width=0.68\linewidth]{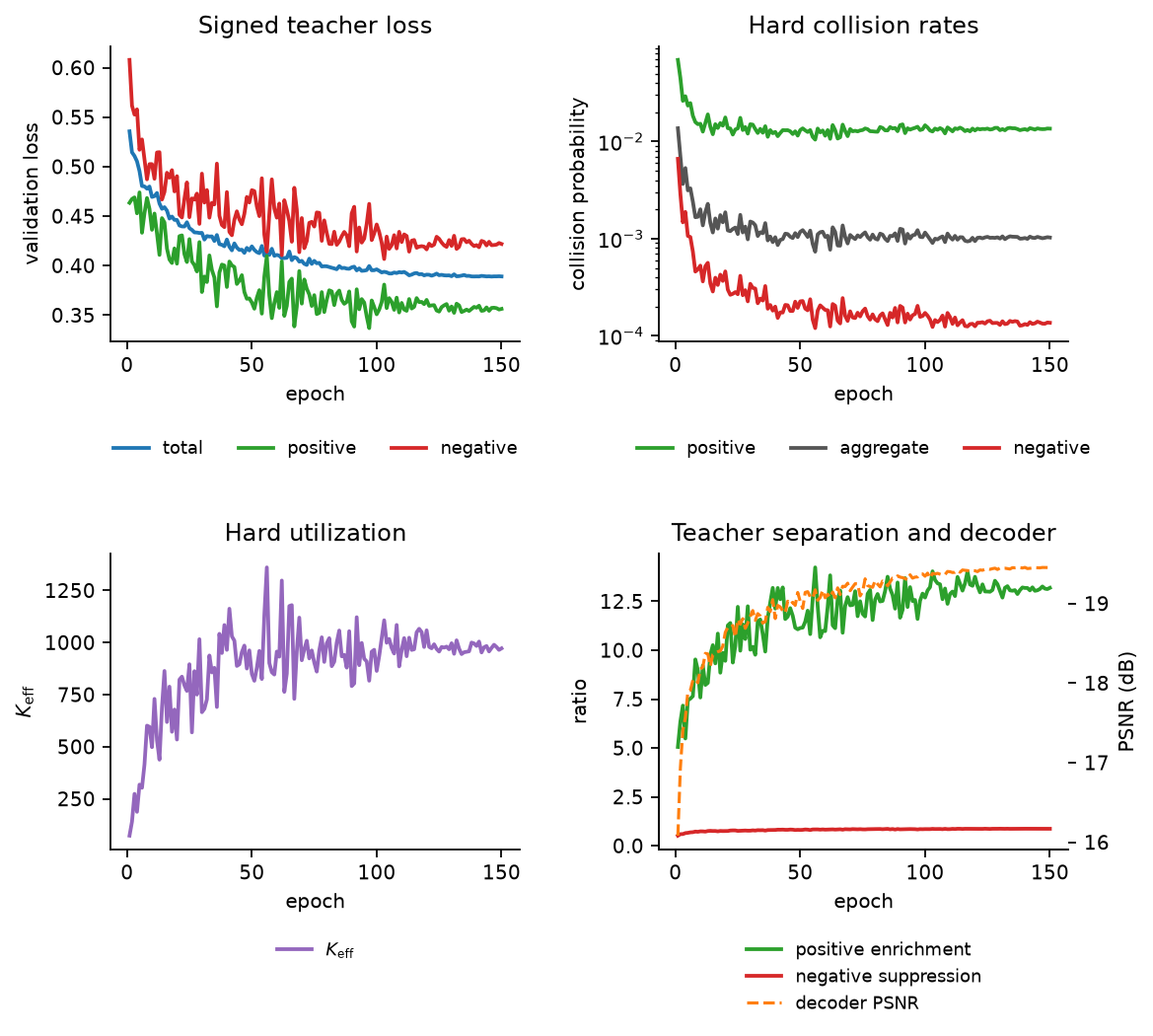}
\caption{Validation dynamics of the baseline-relative teacher study. Hard collisions on favored pairs are enriched and those on disfavored pairs are suppressed relative to the pooled occupancy baseline as the code develops. The secondary decoder curve belongs to a separate stage after the encoder is frozen.}
\label{fig:flowers102-teacher-dynamics}
\end{figure}

After representation learning, a decoder with the synthesis-transform form of Study 4 is trained on frozen hard codes and selected by validation MSE.
It achieves test PSNR 19.50 dB and MS-SSIM 0.718.
This shows that the retained representation supports nontrivial image reconstruction even though pixel fidelity did not shape it directly.
In the absence of random-encoder and shuffled-teacher decoder controls, the result serves as a compatibility probe between two fidelities rather than a causal attribution or comparison with the reconstruction-trained codec.
Qualitative examples are retained in \cref{fig:flowers102-teacher-reconstruction-examples}.

The learned code remains noncollapsed while placing substantially more collisions on teacher-favored pairs and fewer on teacher-disfavored pairs than the pooled occupancy baseline, demonstrating that pair-specific relational requirements can directly organize finite-codeword equality without a decoder in representation learning.

\section{Conclusion}
\label{sec:conclusion}

Relational compression treats relational structure as fidelity-bearing content by making the source relation, retained description, reconstructed or evaluated relation, fidelity criterion, and constrained resource explicit.
Finite-codeword collision provides one developed realization: same-codeword probability connects local relational requirements with aggregate R\'enyi-2 occupancy and positive-spherical geometry; centroid reconstruction and normalized association/cut provide exact correspondences; and controlled synthetic, graph, and image studies show how reconstruction-defined, graph-defined, and teacher-defined requirements can shape finite representations within the same formulation.

The framework also opens opportunities to compare description mechanisms under complete-description budgets rather than occupancy alone, develop richer relation-level decoders when equality is insufficient, and extend relational compression toward broader source classes and operational coding formulations.
More generally, it provides a common way to ask which relational distinctions should be retained by a constrained representation and what description is sufficient to retain them.

\clearpage
\bibliographystyle{plainnat}
\bibliography{refs}
\clearpage
\appendix
\section{Higher-Order Collision Geometry}
\label{app:higher-order-collision-geometry}

The main development uses the pairwise collision kernel $k_\theta\equiv k_{\theta,2}$, but the same event has an integer-order extension.
For integer $\alpha\ge2$ and conditionally independent draws $Z_m\sim q_\theta(\cdot\mid x_m)$,
\begin{equation}
k_{\theta,\alpha}(x_1,\ldots,x_\alpha)
=\sum_z\prod_{m=1}^{\alpha}q_\theta(z\mid x_m)
=\Pr[Z_1=\cdots=Z_\alpha\mid x_1,\ldots,x_\alpha].
\label{eq:higher-order-collision}
\end{equation}
A sequence converging to deterministic assignments gives the indicator that every element of the tuple belongs to the same hard class.
At order two, $k_{\theta,2}(x,x')=k_\theta(x,x')$, recovering \cref{eq:collision-kernel}.
Collision arity is a property of this representation statistic; it need not equal the arity of a source relation.

For even order $\alpha=2m$, define the elementwise product feature
\begin{equation}
\phi_m(x_1,\ldots,x_m)
=q_\theta(\cdot\mid x_1)\odot\cdots\odot q_\theta(\cdot\mid x_m).
\end{equation}
Then
\begin{equation}
k_{\theta,2m}(x_1,\ldots,x_{2m})
=\langle\phi_m(x_1,\ldots,x_m),\phi_m(x_{m+1},\ldots,x_{2m})\rangle.
\label{eq:even-order-kernel}
\end{equation}
It is an ordinary positive-semidefinite kernel on $m$-tuples.
Odd order $2m+1$ gives the analogous inner product of $\phi_m$ and $\phi_{m+1}$, naturally a cross-affinity between tuple spaces of different arities rather than a symmetric kernel on individual inputs.

\paragraph{Triplet-structured relational requirements}
A learning objective may involve three inputs while being composed entirely from pairwise collision geometry.
Writing $q_a:=q_\theta(\cdot\mid x_a)$, $q_p:=q_\theta(\cdot\mid x_p)$, and $q_n:=q_\theta(\cdot\mid x_n)$, and reusing the spherical map $u(q)$ from \cref{eq:sphere-map}, one triplet-structured angular objective is
\begin{equation}
\mathcal L_{\mathrm{trip}}
=
\left[
\zeta-u(q_a)^\top u(q_p)+u(q_a)^\top u(q_n)
\right]_+,
\qquad \zeta>0.
\end{equation}
This objective compares pairwise normalized collisions from \cref{eq:normalized-collision}; it is not a third-order collision statistic.
Angular comparison of high-dimensional graph representations also appears in graph summarization through cosine-distance spherical clustering~\citep{tsalouchidou2020scalable}.
There it acts on graph-derived connectivity-history vectors, whereas here it acts on categorical assignment distributions, for which $1-u(q)^\top u(q')=1-\cos\vartheta$ is cosine distance and the underlying unnormalized inner product has a collision-probability interpretation.

A genuine third-order collision instead requires all three independently encoded inputs to receive the same codeword:
\begin{equation}
k_{\theta,3}(x_a,x_p,x_n)
=
\sum_z q_\theta(z\mid x_a)q_\theta(z\mid x_p)q_\theta(z\mid x_n)
=
\Pr[Z_a=Z_p=Z_n].
\end{equation}
The related anchor--favored--disfavored event is
\begin{equation}
\Pr[Z_a=Z_p\ne Z_n]
=
\sum_z q_\theta(z\mid x_a)q_\theta(z\mid x_p)
\bigl(1-q_\theta(z\mid x_n)\bigr)
=
k_\theta(x_a,x_p)-k_{\theta,3}(x_a,x_p,x_n),
\end{equation}
with hard-code limit
\[
\ind[c_\theta(x_a)=c_\theta(x_p),\;c_\theta(x_a)\ne c_\theta(x_n)].
\]
Thus source-relation arity, learning-objective arity, and collision-event order are distinct modelling choices.

For iid inputs $X_1,\ldots,X_\alpha\sim\mu$,
\begin{equation}
\E[k_{\theta,\alpha}(X_1,\ldots,X_\alpha)]
=\sum_zq_Z(z)^\alpha.
\label{eq:higher-order-population-collision}
\end{equation}
The factorization follows by independence of the $\alpha$ inputs.
At $\alpha=2$, this identity recovers \cref{eq:population-collision}.
Consequently, integer-order R\'enyi entropy~\citep{renyi1961measures} satisfies
\begin{equation}
H_\alpha(q_Z)
=\frac1{1-\alpha}\log\sum_zq_Z(z)^\alpha
=-\frac1{\alpha-1}\log\E[k_{\theta,\alpha}].
\label{eq:renyi-alpha-collision}
\end{equation}
The case $\alpha=2$ recovers \cref{eq:collision-entropy}.
The literal $\alpha$-way same-code event is defined here for integer $\alpha\ge2$, whereas the R\'enyi family extends to appropriate real orders under its usual conventions.
Shannon entropy arises through the continuous limit $\alpha\to1$, not from a nontrivial one-sample collision event.

For a fixed positive reference $r$, define
\begin{equation}
a_{\alpha,r}(x_1,\ldots,x_\alpha)
=\sum_zr(z)^{1-\alpha}\prod_{m=1}^{\alpha}q_\theta(z\mid x_m).
\end{equation}
$a_{2,r}(x,x')=a_r(x,x')$ recovers the reference-weighted affinity in \cref{eq:reference-affinity}.
Its expectation is $\sum_zq_Z(z)^\alpha r(z)^{1-\alpha}$, giving
\begin{equation}
D_\alpha(q_Z\|r)
=\frac1{\alpha-1}\log\E[a_{\alpha,r}].
\label{eq:renyi-alpha-reference-affinity}
\end{equation}
For a uniform reference,
\begin{equation}
D_\alpha(q_Z\|U_K)=\log K-H_\alpha(q_Z).
\label{eq:uniform-renyi-alpha}
\end{equation}
At $\alpha=2$, this reduces to \cref{eq:uniform-renyi}.
This includes the usual limiting and support conventions at orders one and zero.
At fixed $K$, minimizing this divergence as an organization penalty encourages larger occupancy entropy, whereas adding $+\lambda H_\alpha(q_Z)$ to a minimized objective encourages concentration.

\subsection{R\'enyi entropy in spherical coordinates}
\label{app:renyi-spherical-coordinates}

The integer-event interpretation and the real-order entropy family should be distinguished.
For $\alpha>0$, $\alpha\ne1$, let $u=u(q)$ be the spherical map from \cref{eq:sphere-map}.
Substitution of $q_i=u_i/\|u\|_1$ yields
\begin{equation}
H_\alpha(q)
=\frac1{1-\alpha}\left[\log\sum_i u_i^\alpha-\alpha\log\|u\|_1\right].
\label{eq:renyi-spherical}
\end{equation}
At order two, $\sum_i u_i^2=1$ gives $H_2(q)=2\log\|u\|_1$.
Taking the order-one limit gives \cref{eq:shannon-spherical}.
All these quantities can be expressed in the same coordinates, but higher power sums generally depend on more than one ordinary spherical angle.

\section{Conditional Collision and Dependence}
\label{app:conditional-collision-dependence}

The aggregate population collision and $H_2(q_Z)$ developed in \cref{eq:population-collision,eq:collision-entropy} describe how a finite code is occupied and organized across a population, but they do not by themselves determine how strongly the sampled code depends on its input.
In particular, broad aggregate occupancy can arise even from an input-independent encoder.
This appendix records two collision-related dependence functionals that make this distinction explicit: one compares conditional self-collision with aggregate population collision, whereas the other uses R\'enyi divergence of the induced input--code joint distribution from its independent product.
It also distinguishes input--code dependence from dependence between relational endpoints, for which collision alignment observes only the same-codeword event.

Let $X,X'\overset{\mathrm{iid}}{\sim}\mu$.
Using the collision kernel and self-collision in \cref{eq:collision-kernel,eq:self-collision}, together with the aggregate code distribution and population collision identity in \cref{eq:aggregate-code,eq:population-collision}, define
\begin{equation}
\kappa_{\mathrm{self}}
:=\E_X[k_\theta(X,X)]
=\E_X\sum_zq_\theta(z\mid X)^2,
\qquad
\kappa_{\mathrm{pop}}
:=\E_{X,X'\overset{\mathrm{iid}}{\sim}\mu}[k_\theta(X,X')]
=\sum_zq_Z(z)^2.
\end{equation}
The conditional collision-entropy functional is
\begin{equation}
H_2^{\mathrm{coll}}(Z\mid X)=-\log \kappa_{\mathrm{self}},
\end{equation}
Because the logarithm is taken after averaging conditional self-collision, this functional is generally not equal to $\E_X[H_2(q_\theta(\cdot\mid X))]$.
Using the collision entropy $H_2(q_Z)$ from \cref{eq:collision-entropy}, a corresponding difference is
\begin{equation}
I_2^{\mathrm{coll}}(X;Z)
=H_2(q_Z)-H_2^{\mathrm{coll}}(Z\mid X)
=\log\frac{\kappa_{\mathrm{self}}}{\kappa_{\mathrm{pop}}}.
\label{eq:collision-information}
\end{equation}
It is nonnegative because
\begin{equation}
\kappa_{\mathrm{self}}-\kappa_{\mathrm{pop}}
=\sum_z\operatorname{Var}_\mu(q_\theta(z\mid X))\ge0.
\end{equation}
It vanishes exactly when the assignment vector is constant almost surely, equivalently when the sampled code is independent of the input under this population model.
For deterministic assignments, $\kappa_{\mathrm{self}}=1$ and the difference equals $H_2(q_Z)$.
No general data-processing or unique R\'enyi-mutual-information characterization is asserted for this entropy difference.

Let $P_X$ denote the distribution of $X$, so $P_X=\mu$, and let $P_{XZ}$ be the joint distribution induced by $P_X$ and $q_\theta(\cdot\mid X)$; its $Z$-marginal is the aggregate distribution $q_Z$ from \cref{eq:aggregate-code}.
Using the inverse-mass affinity of \cref{eq:qz-affinity}, a divergence-based quantity has a different form:
\begin{equation}
I_2^D(X;Z)
=D_2\!\left(P_{XZ}\,\middle\|\,P_X\otimes q_Z\right)
=\log\E_X[a_{q_Z}(X,X)].
\label{eq:divergence-renyi-information-affinity}
\end{equation}
The expansion follows on the active support from the density ratio $q_\theta(z\mid x)/q_Z(z)$.
For deterministic assignments, its exponential is the number of active words, so $I_2^D=R_0=\log|\operatorname{supp}(q_Z)|$ from \cref{eq:finite-state-rate-hierarchy} rather than $H_2(q_Z)$.
The two functionals illustrate why an unqualified reference to R\'enyi mutual information would be ambiguous here.

There is a further distinction between input--code dependence and dependence of relational endpoints.
For $(\mathsf V,\mathsf V')\sim P_{\mathrm{pair}}$ as in \cref{eq:generic-relational-objective}, with endpoints encoded independently from $q_\theta(\cdot\mid x_{\mathsf V})$ and $q_\theta(\cdot\mid x_{\mathsf V'})$, the full joint distribution of $(Z,Z')$ can contain information beyond the equality event.
Collision alignment measures only
\begin{equation}
\Pr[Z=Z']=\E_{(\mathsf V,\mathsf V')\sim P_{\mathrm{pair}}}[k_\theta(x_{\mathsf V},x_{\mathsf V'})].
\end{equation}
It is an equality-specific statistic, not the full mutual information between the endpoints.

\section{Additional Monotone Content Examples}
\label{app:path-walk-content}

The main text defines content-based relational fidelity by choosing a nonnegative structural-content functional $\mathcal I$ that is monotone under an admissible simplification and measuring the resulting content loss through \cref{eq:relational-content-monotone,eq:relational-content-distortion}.
The direct-edge realization in \cref{eq:edge-distortion} uses retained edge weight, but the same construction can emphasize structure induced by several relations rather than individual edge entries.
This appendix records shortest-path, walk, and natural-connectivity examples to make that latitude concrete; they are illustrative alternatives and are not evaluated in the experiments.

For a graph $G=(V,E)$ with positive edge lengths and an edge-deletion simplification $\widetilde G$ on the same carrier, let $d_G(i,j)$ be shortest-path distance, using infinity when no path exists.
If $\varphi:[0,\infty]\to\R_{\ge0}$ is nonincreasing and $\varphi(\infty)=0$, then
\begin{equation}
\mathcal I_\varphi(G)=\sum_{i<j}\varphi(d_G(i,j))
\end{equation}
is monotone under edge deletion.
Deleting edges cannot shorten a path.
For example, with $\varphi(d)=e^{-d/\tau}$ and $\tau>0$, the corresponding specialization of the generic content distortion in \cref{eq:relational-content-distortion} is
\begin{equation}
D_\varphi(G,\widetilde G)
=\sum_{i<j}\left[e^{-d_G(i,j)/\tau}-e^{-d_{\widetilde G}(i,j)/\tau}\right].
\end{equation}
This values the geometry supported by paths rather than only the removed edge entries.
For weight attenuation, a relationship between weights and lengths would additionally need to be specified.

For the nonnegative weighted adjacency $W$ used in \cref{sec:edge-cut-distortion}, and an admissible attenuation with reconstructed adjacency $\widetilde W$, finite walk content can be written
\begin{equation}
\mathcal I_{\mathrm{walk}}(G)
=\sum_{\ell=1}^{L_0}\alpha_\ell\mathbf1^\top W^\ell\mathbf1,
\qquad\alpha_\ell\ge0.
\end{equation}
Each term counts weighted walk mass of one length.
If $0\le\widetilde W\le W$ entrywise, induction using nonnegative matrix multiplication gives $0\le\widetilde W^\ell\le W^\ell$ entrywise.
Hence the content is monotone under attenuation.
The truncation $L_0$ is used to avoid confusing the walk length limit with the graph Laplacian $L$.

Natural connectivity is an established spectral example~\citep{wu2010natural}.
For a finite undirected unweighted graph $G=(V,E)$ with adjacency $A$ and $n=|V|$,
\begin{equation}
\mathcal I_{\mathrm{NC}}(G)
=\log\left(\frac{\operatorname{tr}(e^A)}{n}\right)
=\log\left(\frac1n\sum_r e^{\lambda_r(A)}\right).
\end{equation}
The expansion $\operatorname{tr}(e^A)=\sum_{\ell\ge0}\operatorname{tr}(A^\ell)/\ell!$ connects it to closed walks.
It is zero for the empty graph and strictly increases when an edge is added in this standard setting.
Natural connectivity supplies an established structural-content functional for this construction.

Direct edge content in \cref{eq:edge-distortion} treats each primitive relation according to its weight, shortest-path content responds to induced connectivity geometry, finite walk content measures multistep structure, and natural connectivity provides an all-orders closed-walk spectral example.
A fidelity criterion may combine monotone contents with nonnegative coefficients, with its interpretation tied to the declared combination of relational properties.

\section{Reported Metrics and Numerical Conventions}
\label{app:metric-definitions}

This appendix collects the definitions of recurring metrics reported in the experiments and the aggregation conventions needed to interpret them.
Metrics defined earlier retain the same notation and are referenced back to their original definitions.

\subsection{Hard finite codes}

For $N$ evaluated hard codewords $c(i)\in\mathcal Z$, let
\begin{equation}
\widehat q_Z(z)
=\frac1N\sum_{i=1}^N\ind[c(i)=z].
\end{equation}
The distribution $\widehat q_Z$ is the empirical codeword distribution over the evaluated hard tokens and is the empirical counterpart of the aggregate distribution $q_Z$ introduced in \cref{eq:aggregate-code}.

Aggregate hard collision is the probability that two independent draws from $\widehat q_Z$ receive the same codeword:
\begin{equation}
\widehat\kappa_{\mathrm{pop}}
=\sum_z\widehat q_Z(z)^2,\qquad
\widehat H_2
=-\log\widehat\kappa_{\mathrm{pop}},\qquad
K_{\mathrm{eff}}
=\widehat\kappa_{\mathrm{pop}}^{-1}.
\end{equation}
Here $\widehat\kappa_{\mathrm{pop}}$ is the empirical same-code collision, $\widehat H_2$ is the hard-code collision entropy, and $K_{\mathrm{eff}}$ is the collision-effective number of codewords.
They are the empirical counterparts of the population quantities in \cref{eq:population-collision,eq:collision-entropy,eq:effective-code-count}.
The reported number of active hard words is $|\{z:\widehat q_Z(z)>0\}|$.

For the $32\times32$ latent grid on $256\times256$ images,
\[
\mathrm{bpp}
=\frac{32^2b}{256^2}
=\frac{b}{64}.
\]
This is the nominal raw latent bit capacity used in Study~4 before entropy coding.

\subsection{Graph partitions}

For a hard assignment $c$, use the classes $V_z=\{i:c(i)=z\}$ from \cref{prop:normalized-cut}, with
\[
\operatorname{vol}(G)=\sum_i d_i,\qquad
\operatorname{vol}(V_z)=\sum_{i\in V_z}d_i.
\]
Using directed symmetric edge incidences,
\[
\operatorname{assoc}(V_z,V_z)
=\sum_{(i,j)\in\vec E:i,j\in V_z}W_{ij}.
\]
The reported hard fixed-$K$ normalized cut is the specialization in \cref{eq:hard-ncut}:
\[
\operatorname{NAssoc}(G)
=\sum_{z:\operatorname{vol}(V_z)>0}
\frac{\operatorname{assoc}(V_z,V_z)}{\operatorname{vol}(V_z)},\qquad
\operatorname{Ncut}(G)
=K-\operatorname{NAssoc}(G).
\]
Under the fixed-$K$ convention, an empty class contributes zero association and therefore one unit to $\operatorname{Ncut}(G)$.
The reported largest hard volume fraction is $\max_z\operatorname{vol}(V_z)/\operatorname{vol}(G)$, and active hard classes are those with positive volume.
The graph-fidelity values reported in Study~2 are the hard specializations of $D_E$, $D_F$, $D_C$, and $D_{H_2}$ defined in \cref{eq:normalized-edge-distortion,eq:source-normalized-dirichlet-distortion,eq:source-collision-edge-distortion,eq:entropy-probe-distortion}.
The post-hoc $D_F$ reported in Study~3 uses the hard-partition form in \cref{eq:hard-partition-dirichlet-distortion}.

For graph-local assignments, $\bar q_G(z)$ is the degree-weighted occupancy defined in \cref{eq:graph-aggregate-code}.
The reported organization metrics are
\[
H_2(\bar q_G)
=-\log\sum_z\bar q_G(z)^2,\qquad
K_{\mathrm{eff}}
=\left(\sum_z\bar q_G(z)^2\right)^{-1},
\]
\[
D_2(\bar q_G\|U_K)
=\log K-H_2(\bar q_G)
=\log\left(K\sum_z\bar q_G(z)^2\right).
\]
Following \cref{eq:collision-entropy,eq:effective-code-count,eq:uniform-renyi}, $H_2(\bar q_G)$ is graph-local collision occupancy, $K_{\mathrm{eff}}$ is the corresponding collision-effective number of classes, and $D_2(\bar q_G\|U_K)$ measures deviation from uniform graph-local occupancy.
These quantities are computed graph-locally before averaging across evaluated graphs.
Consequently, the mean of $K_{\mathrm{eff}}$ is not generally the exponential of the mean $H_2$, and mean $D_2$ cannot be converted to a mean effective count by exponentiation.
The learned MalNet results report the mean and sample standard deviation across five seeds, while the spectral reference is deterministic for the stated solver configuration.

\subsection{Teacher relations and reconstruction}

As introduced in \cref{sec:flowers102-teacher-experiment}, the favored and disfavored weighted hard-code collision rates are
\begin{equation}
\begin{aligned}
\widehat\kappa_+&=\frac{\sum_{ij}S_{+,ij}\ind[c(i)=c(j)]}{\sum_{ij}S_{+,ij}},\\
\widehat\kappa_-&=\frac{\sum_{ij}S_{-,ij}\ind[c(i)=c(j)]}{\sum_{ij}S_{-,ij}}.
\end{aligned}
\end{equation}
The pooled empirical hard-code collision is $\widehat\kappa_{\mathrm{pop}}$ as defined above.
Favored-pair enrichment is $\widehat\kappa_+/\widehat\kappa_{\mathrm{pop}}$, while disfavored-pair suppression is $1-\widehat\kappa_-/\widehat\kappa_{\mathrm{pop}}$.
The reported favored and disfavored teacher-cosine summaries are the corresponding relation-weighted means:
\[
\frac{\sum_{ij}S_{\pm,ij}\,e_i^\top e_j}
{\sum_{ij}S_{\pm,ij}}.
\]
Validation and test teacher loss use the alignment and relational-separation terms of \cref{eq:dino-teacher-loss}, with their numerators aggregated using the corresponding $S_+$ and $S_-$ weight masses before forming the final average.

For a unit-range RGB image $x$ and reconstruction $\widehat x$,
\[
\operatorname{MSE}(x,\widehat x)
=\frac{1}{3HW}\|x-\widehat x\|_2^2.
\]
Reported MSE averages this squared pixel error over the evaluated images.
For unit-range images,
\[
\operatorname{PSNR}
=-10\log_{10}(\operatorname{MSE}),
\]
so higher PSNR corresponds to lower MSE.
MS-SSIM is the standard multiscale structural similarity diagnostic of \citet{wang2003multiscale}.

\section{Synthetic Study Details}
\label{app:synthetic-details}

The synthetic study uses one fixed finite dataset and evaluates every point pair exactly.
Two copies of the same encoder are initialized identically and optimized using the centroid and inverse-mass-weighted pairwise forms, respectively.
The separate decoder calculation holds the encoder assignments fixed.
These checks verify the finite-sample centroid--pairwise identity in \cref{eq:finite-soft-pairwise} and the decoder decomposition in \cref{eq:decoder-centroid-decomposition}.
If $g_{\mathrm{cent}}$ and $g_{\mathrm{pair}}$ are the flattened encoder gradients of the centroid and pairwise objectives, their reported gradient agreement uses cosine similarity and relative error
\[
\frac{\|g_{\mathrm{cent}}-g_{\mathrm{pair}}\|_2}
{\|g_{\mathrm{cent}}\|_2}.
\]

\begin{table}[htbp]
\centering\small
\begin{tabularx}{\linewidth}{Xr}
\toprule
Diagnostic & Observed value \\
\midrule
Maximum $|D_{\mathrm{cent}}-D_{\mathrm{pair}}|$, centroid-trained trajectory & $9.54\times10^{-7}$ \\
Maximum $|D_{\mathrm{cent}}-D_{\mathrm{pair}}|$, pair-trained trajectory & $9.54\times10^{-7}$ \\
Larger trajectory mean of $|D_{\mathrm{cent}}-D_{\mathrm{pair}}|$ & $4.20\times10^{-8}$ \\
Random encoder states: maximum $|D_{\mathrm{cent}}-D_{\mathrm{pair}}|$ & $1.91\times10^{-6}$ \\
Random encoder states: mean $|D_{\mathrm{cent}}-D_{\mathrm{pair}}|$ & $2.64\times10^{-7}$ \\
Initial gradient cosine / relative error & $1.000\,/\,1.54\times10^{-7}$ \\
Final gradient cosine / largest relative error across final models & $1.000\,/\,1.35\times10^{-6}$ \\
Final parameter RMS difference between the centroid- and pairwise-trained encoders & $3.56\times10^{-8}$ \\
Random states: mean $|D_{\mathrm{cent}}-KD_{\mathrm{raw}}|$ & $4.79$ \\
Random states: maximum $|D_{\mathrm{cent}}-KD_{\mathrm{raw}}|$ & $21.69$ \\
Decoder excess term in \cref{eq:decoder-centroid-decomposition}, steps $0\to200$ & $21.05\to8.15\times10^{-5}$ \\
Maximum decoder decomposition residual & $1.91\times10^{-6}$ \\
\bottomrule
\end{tabularx}
\caption{Synthetic numerical diagnostics. Values near $10^{-6}$ reflect single-precision arithmetic.}
\label{tab:synthetic-collision-centroid}
\end{table}

\begin{table}[htbp]
\centering\small
\begin{tabularx}{\linewidth}{>{\raggedright\arraybackslash}p{0.28\linewidth}X}
\toprule
Component & Setting \\
\midrule
Data & 768 points in $\R^2$ from a six-component Gaussian mixture \\
Mixture weights & $(0.30,0.22,0.17,0.13,0.10,0.08)$ \\
Realized component counts & $(230,169,131,100,77,61)$ \\
Encoder & MLP with two hidden layers, width 32, $\tanh$ nonlinearities \\
Code distribution & Exact enumeration of $2^3=8$ factorized Bernoulli words \\
Code temperature & 0.7 \\
Encoder optimization & SGD, learning rate 0.02, momentum 0.9, 300 steps \\
Random-state check & 128 random affine-logit maps; weight/bias scale log-uniform in $[0.25,4.0]$ \\
Decoder check & Learned $8\times2$ reproduction codebook, SGD learning rate 0.15, 200 steps \\
Decoder initialization & Gaussian reproduction vectors with scale 3.0 \\
Seed & 1337 \\
\bottomrule
\end{tabularx}
\caption{Settings for Study 1.}
\label{tab:synthetic-settings}
\end{table}

\section{Transductive Relational-Fidelity Study Details}
\label{app:transductive-relational-distortion-details}

The transductive study optimizes a separate logit matrix for each source graph, fidelity, and organization weight.
The two MalNet collections use the same 20 cleaned MalNet-Tiny test topologies.
The weighted variant assigns deterministic positive lognormal weights, with scale $\sigma=0.75$, mean-normalized within each graph, and global seed 91337.
COLLAB and PROTEINS each contribute 20 cleaned largest connected components from TUDataset.
All labels are removed or ignored.

The solver uses ten random initializations and one collapse-biased initialization, retaining the best target soft objective encountered in each optimization.
After these ordinary restarts, each target is warm-started from the other criteria's selected logits; if this improves any target, one closure pass is run from the updated solutions.
Final selection uses the target soft objective across all candidates.

\begin{table}[htbp]
\centering\small
\begin{tabularx}{\linewidth}{>{\raggedright\arraybackslash}p{0.28\linewidth}X}
\toprule
Component & Setting \\
\midrule
Collections & 20 MalNet-Tiny test LCCs; the same 20 weighted topologies; 20 COLLAB; 20 PROTEINS \\
Base graph-selection seed & 1337; deterministic collection-specific offsets for COLLAB and PROTEINS \\
Minimum source size & 512 nodes for MalNet; 16 nodes for TUDataset, after LCC extraction \\
Training fidelities & $D_E$, $D_F$, $D_C$, and $D_{H_2}$ where defined \\
Entropy-field exclusions & Omit $D_{H_2}$ when its source Dirichlet denominator is zero; two selected COLLAB graphs are excluded from this criterion \\
Representation & Free graph-local $n\times8$ categorical logits; no learned encoder \\
Assignment temperature & $\tau=1.0$ \\
Organization weights & $0$, $0.03$, $0.05$, $0.07$, $0.08$, $0.09$, $0.10$, $0.11$, $0.12$, $0.14$, $0.16$, $0.18$, $0.20$, $0.30$, $0.50$ \\
Objective & $D_\rho(q)+\lambda_{\mathrm{org}}D_2(\bar q_G\|U_8)$ \\
Optimizer & Adam, learning rate 0.05, 600 steps, no scheduler \\
Random initialization & Gaussian logits with standard deviation $10^{-2}$ \\
Random restart seeds & 1337, 2024, 31415, 2718, 1618, 9001, 42, 73, 101, 211 \\
Collapse-biased restart & Initial logit bias 4.0 toward one state \\
Warm starts & Cross-criterion pass from the other selected solutions; one closure pass if the first pass improves any target \\
Selection & Best target soft objective across ordinary restarts and cross-criterion candidates \\
Diagnostics & Hard/soft cross-evaluation, source-importance similarities, and 256 size-balanced random partitions per graph \\
Random-partition seed & $1337+\text{graph index}$ \\
\bottomrule
\end{tabularx}
\caption{Settings for Study 2.}
\label{tab:transductive-relational-distortion-settings}
\end{table}

\subsection{Matched occupancy and cross-distortion transfer}

Hard partitions are compared at target $H_2(\bar q_G)$ values from 0.25 to 2.0 nats.
Each selected solution must be within 0.10 nats of the target, and the compared pair must be within 0.10 nats of one another.
Each criterion selects its solution independently from the available sweep before this matching rule is applied.
At least five matched graphs are required to display a point in the agreement figures.
Consequently, different source/criterion comparisons can use different subsets, and missing targets reflect available coverage rather than an interpolation through unobserved solutions.

\Cref{tab:transductive-rate-matched-agreement} preserves the detailed comparison at 2.0 nats.
Let $c_A$ and $c_B$ denote the partitions trained under fidelities $A$ and $B$, respectively.
The directional transfer differences are
\[
\Delta_{A\to B}=D_B(c_A)-D_B(c_B),
\qquad
\Delta_{B\to A}=D_A(c_B)-D_A(c_A).
\]
Each difference is measured on the destination fidelity's scale.
Because matching is approximate and selection uses the soft training objective, these descriptive differences can be slightly negative.

\begin{table}[htbp]
\centering\small
\setlength{\tabcolsep}{4pt}
\begin{tabular}{llrrrr}
\toprule
Source & Pair & $n$ & Mean ARI & $\Delta_{A\to B}$ & $\Delta_{B\to A}$ \\
\midrule
MalNet unweighted & $D_F$ / $D_C$ & 20 & 0.781 & -0.0025 & 0.0060 \\
MalNet unweighted & $D_F$ / $D_{H_2}$ & 20 & 0.166 & 0.0919 & 0.0698 \\
MalNet unweighted & $D_C$ / $D_{H_2}$ & 20 & 0.196 & 0.1068 & 0.0744 \\
MalNet weighted & $D_F$ / $D_C$ & 20 & 0.714 & 0.0011 & 0.0063 \\
MalNet weighted & $D_F$ / $D_{H_2}$ & 20 & 0.116 & 0.0785 & 0.0638 \\
MalNet weighted & $D_C$ / $D_{H_2}$ & 20 & 0.157 & 0.0834 & 0.0864 \\
COLLAB & $D_F$ / $D_C$ & 5 & 0.714 & 0.0258 & 0.0234 \\
COLLAB & $D_F$ / $D_{H_2}$ & 5 & 0.398 & 0.1058 & 0.0922 \\
COLLAB & $D_C$ / $D_{H_2}$ & 6 & 0.426 & 0.0571 & 0.0288 \\
PROTEINS & $D_F$ / $D_C$ & 16 & 0.764 & 0.0046 & 0.0084 \\
PROTEINS & $D_F$ / $D_{H_2}$ & 14 & 0.570 & 0.1163 & 0.0506 \\
PROTEINS & $D_C$ / $D_{H_2}$ & 13 & 0.568 & 0.1169 & 0.0690 \\
\bottomrule
\end{tabular}
\caption{Hard-partition agreement and cross-distortion transfer at target collision occupancy $H_2(\bar q_G)=2.0$ nats. $n$ counts matched graphs. ARI is invariant to code-label permutation.}
\label{tab:transductive-rate-matched-agreement}
\end{table}

Each cell in the full transfer table averages the accepted graph--target matches across all occupancy targets for that training/evaluation pair.
Each column holds one evaluation fidelity fixed, each row supplies the partition trained using a particular fidelity, and the diagonal is zero by construction.
Coverage can therefore differ between cells.
Comparisons are made within columns because each column is expressed on its evaluation fidelity's scale.

\begin{table}[htbp]
\centering\small
\setlength{\tabcolsep}{4pt}
\begin{tabular}{llrrrr}
\toprule
Source & Training fidelity & Eval. $D_E$ & Eval. $D_F$ & Eval. $D_C$ & Eval. $D_{H_2}$ \\
\midrule
MalNet unweighted & $D_E$ & 0.0000 & -0.0017 & -0.0020 & 0.0732 \\
MalNet unweighted & $D_F$ & 0.0163 & 0.0000 & -0.0024 & 0.0911 \\
MalNet unweighted & $D_C$ & 0.0260 & 0.0058 & 0.0000 & 0.1057 \\
MalNet unweighted & $D_{H_2}$ & 0.0999 & 0.0690 & 0.0738 & 0.0000 \\
\midrule
MalNet weighted & $D_E$ & 0.0000 & 0.0033 & -0.0023 & 0.0658 \\
MalNet weighted & $D_F$ & 0.0131 & 0.0000 & 0.0011 & 0.0785 \\
MalNet weighted & $D_C$ & 0.0149 & 0.0063 & 0.0000 & 0.0834 \\
MalNet weighted & $D_{H_2}$ & 0.1010 & 0.0638 & 0.0864 & 0.0000 \\
\midrule
COLLAB & $D_E$ & 0.0000 & 0.0071 & 0.0509 & 0.2296 \\
COLLAB & $D_F$ & 0.0044 & 0.0000 & 0.0293 & 0.1946 \\
COLLAB & $D_C$ & 0.0381 & 0.0241 & 0.0000 & 0.0597 \\
COLLAB & $D_{H_2}$ & 0.1016 & 0.0746 & 0.0101 & 0.0000 \\
\midrule
PROTEINS & $D_E$ & 0.0000 & -0.0009 & 0.0042 & 0.0758 \\
PROTEINS & $D_F$ & 0.0036 & 0.0000 & 0.0048 & 0.0848 \\
PROTEINS & $D_C$ & 0.0102 & 0.0027 & 0.0000 & 0.0807 \\
PROTEINS & $D_{H_2}$ & 0.0594 & 0.0489 & 0.0577 & 0.0000 \\
\bottomrule
\end{tabular}
\caption{Cross-distortion transfer, averaged over pairwise hard-$H_2$-matched graph--occupancy comparisons. Each entry is excess hard distortion relative to the solution trained for that column's evaluation fidelity. Compare rows within a column.}
\label{tab:transductive-cross-distortion-transfer}
\end{table}

\begin{figure}[htbp]
\centering
\includegraphics[width=0.82\linewidth]{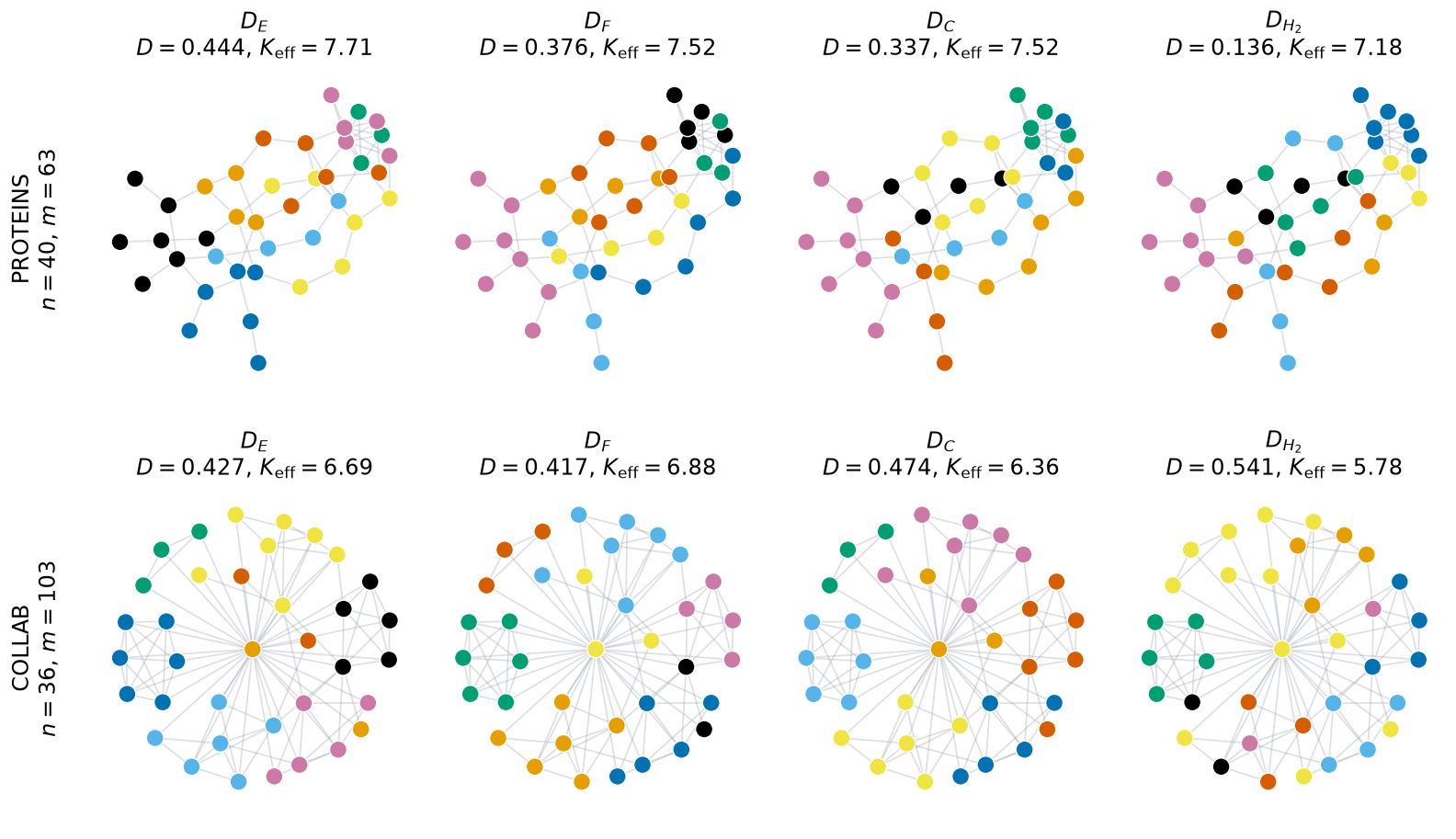}
\caption{Example partitions at marginal-organization weight 0.50. Rows show a selected PROTEINS graph and a selected COLLAB graph; columns change only the source fidelity optimized by the transductive solver. Positions are fixed within a row, while partition labels and palettes are independently permuted for display. Annotations give each criterion's own hard distortion and hard effective count.}
\label{fig:transductive-relational-distortion-partitions}
\end{figure}

\begin{figure}[htbp]
\centering
\includegraphics[width=0.76\linewidth]{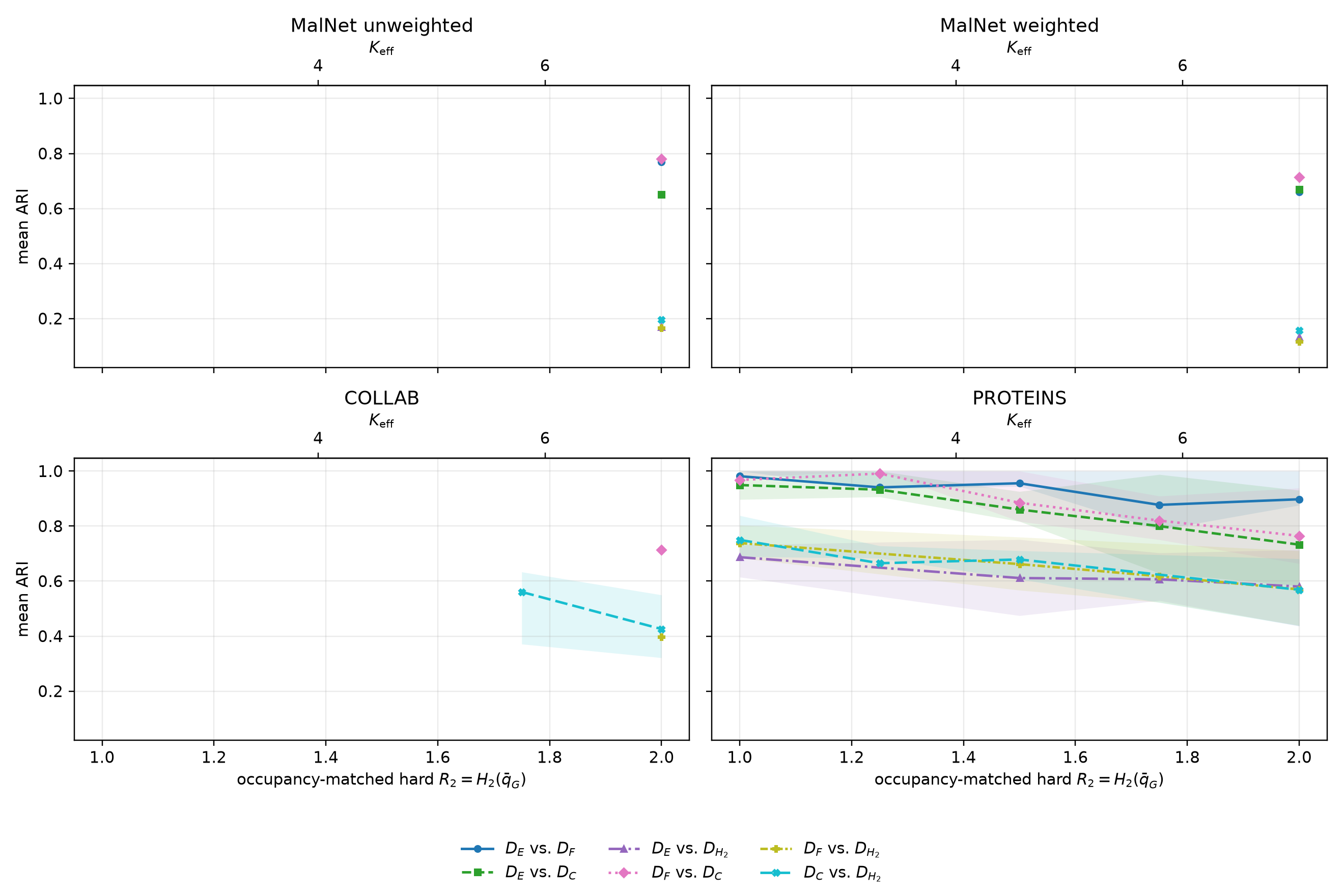}
\caption{All six pairwise ARI comparisons in Study 2. The main text highlights three pairs; this figure retains the complete set. Points require at least five graphs meeting the pairwise hard-entropy matching rule.}
\label{fig:transductive-full-rate-matched-ari}
\end{figure}

\begin{figure}[htbp]
\centering
\includegraphics[width=0.76\linewidth]{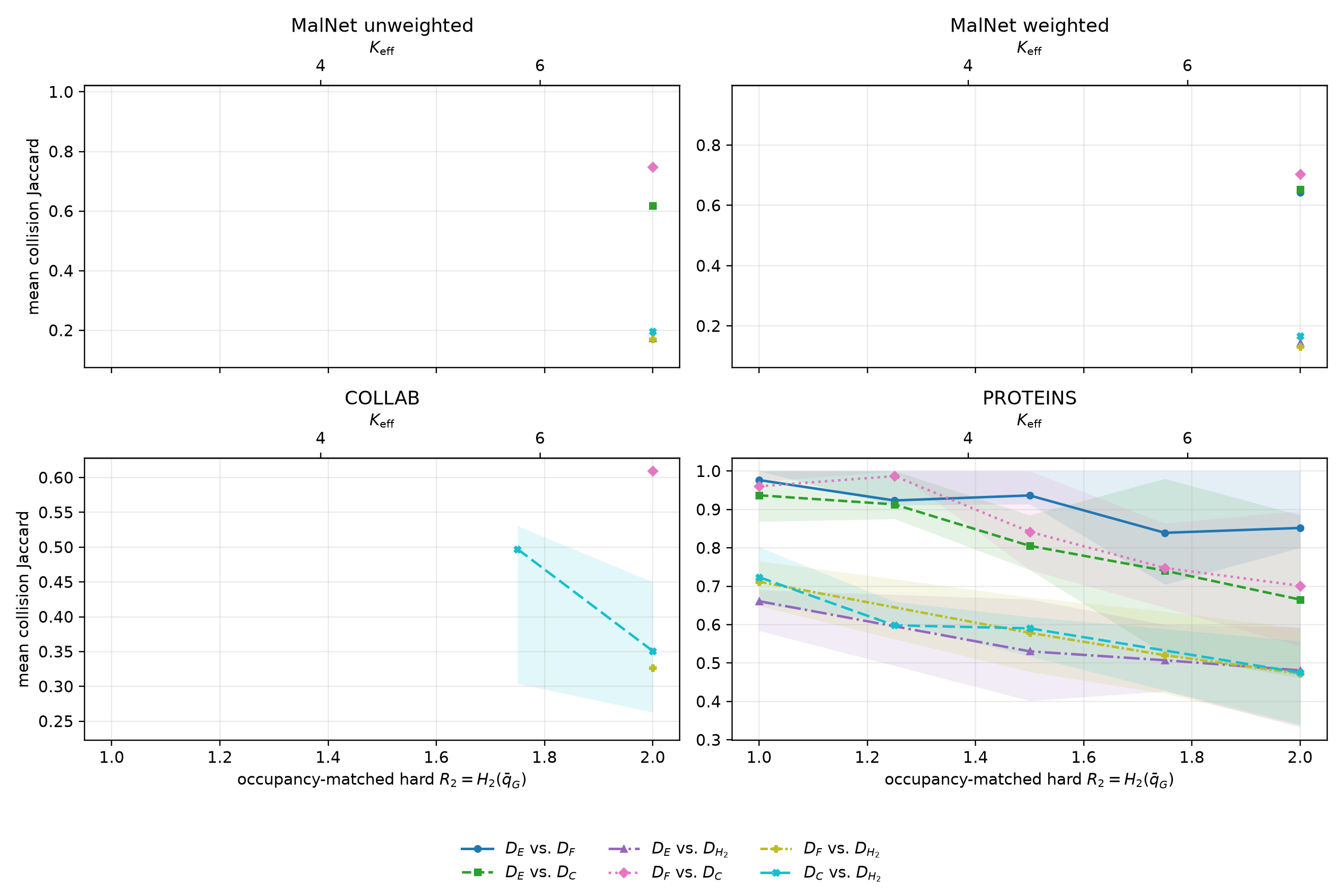}
\caption{All six pairwise collision-Jaccard comparisons. Jaccard compares the induced same-code equivalence relations directly, complementing chance-corrected ARI. Points require at least five matched graphs.}
\label{fig:transductive-full-rate-matched-jaccard}
\end{figure}

\section{Inductive Normalized-Cut Study Details}
\label{app:malnet-ncut-details}

MalNet-Tiny retains its official train/validation/test split before cleaning.
Graphs are converted to simple undirected graphs, reduced to their largest connected component, and retained only if that component contains at least 512 nodes.
\Cref{tab:malnet-preprocessing-summary} gives the resulting exclusions.

\begin{table}[htbp]
\centering\small
\begin{tabular}{lrrr}
\toprule
Split & Raw graphs & Retained graphs & Excluded below 512 LCC nodes \\
\midrule
Train & 3500 & 1995 & 1505 \\
Validation & 500 & 275 & 225 \\
Test & 1000 & 584 & 416 \\
\bottomrule
\end{tabular}
\caption{MalNet-Tiny source selection after cleaning.}
\label{tab:malnet-preprocessing-summary}
\end{table}

\begin{table}[htbp]
\centering\small
\begin{tabularx}{\linewidth}{>{\raggedright\arraybackslash}p{0.28\linewidth}X}
\toprule
Component & Setting \\
\midrule
Dataset & MalNet-Tiny, official source splits, labels unused \\
Cleaning & Undirected edge union, self-loop removal, coalesced duplicates, largest connected component \\
Minimum graph size & 512 vertices after LCC extraction \\
Node features & Constant and cleaned undirected degree features, original directed degree features and PageRank, deterministic random-walk return estimates \\
Random-walk estimates & 16 steps, 8 Rademacher probes, split/index-dependent deterministic seeds \\
Assignments & $K=8$ categorical states; $\tau=1.0$ \\
Encoder & Four GraphGPS-style layers, local GIN and global efficient attention, width 128, four heads \\
Additional encoder settings & GELU, no dropout, FFN multiplier 2, graph-context concatenation, linear assignment head \\
Organization weights $\lambda_{\mathrm{org}}$ & $0$, $0.03$, $0.10$, $0.20$, $0.50$ \\
Objective & Minibatch objective $\mathcal L_B$ from \cref{eq:malnet-ncut-loss} \\
Optimizer & AdamW, learning rate $5\times10^{-4}$, zero weight decay \\
Schedule & 50-step warmup from factor 0.1, cosine decay to $10^{-5}$ \\
Training & 250 epochs, batch size 80 \\
Seeds & 1337, 2024, 31415, 27182, 16180 \\
Checkpoint selection & Validation mean hard $\operatorname{Ncut}(G)$ under the fixed-$K$ convention of \cref{eq:hard-ncut} \\
Spectral reference & Sparse normalized-Laplacian embedding, eight smallest eigenvectors, row normalization, deterministic $K$-means discretization \\
Spectral eigensolver & SciPy \texttt{eigsh}, tolerance $10^{-5}$ \\
Spectral $K$-means & Seed 1337, $n_{\mathrm{init}}=8$, maximum 100 iterations, tolerance $10^{-5}$ \\
\bottomrule
\end{tabularx}
\caption{Settings for Study 3.}
\label{tab:malnet-ncut-settings}
\end{table}

The spectral embedding is computed with SciPy's sparse \texttt{eigsh} solver~\citep{virtanen2020scipy}, and its seeded $K$-means discretization uses scikit-learn's \texttt{KMeans} implementation~\citep{pedregosa2011scikit}.
Post-hoc $D_F$ is computed from the selected hard partitions using effective resistances of the cleaned source graph, as in \cref{eq:hard-partition-dirichlet-distortion}.
The spectral reference is computed separately for each retained validation and test graph.

\section{Reconstruction-Trained Flowers102 Study Details}
\label{app:flowers102-details}

All five image reconstruction settings use seed 1337, batch size 64, and 150 training epochs.
AdamW uses learning rate $5\times10^{-4}$, zero weight decay, 100-step warmup from factor 0.1, and cosine decay to $10^{-6}$.
Training applies random resized crops, horizontal flips, and mild color jitter; validation and test use resize followed by deterministic center crop.
The checkpoint criterion is hard-code validation MSE.

The nonstandard split uses the official Flowers102 test images for training, official validation images for validation, and official training images for final test evaluation.
The resulting sizes are 6149, 1020, and 1020, respectively.

\begin{table}[htbp]
\centering\small
\begin{tabularx}{\linewidth}{>{\raggedright\arraybackslash}p{0.34\linewidth}X}
\toprule
Component & Setting \\
\midrule
Input & $256\times256$ RGB, unit-range pixel values \\
Encoder width & 128 hidden channels \\
Residual stack & Two blocks, 64 residual hidden channels \\
Downsampling / upsampling & Three stride-2 analysis stages with GDN and three transposed-convolution synthesis stages, with IGDN after the first two synthesis stages, giving a $32\times32$ latent grid \\
Spatial collision unit & One $b$-bit token per latent position \\
Analysis output & $1\times1$ convolution producing $b$ logits per spatial position \\
Decoder output & Sigmoid RGB reconstruction; no encoder--decoder skips \\
Reconstruction loss & Mean squared error on $[0,1]$ pixels \\
Training objective & $\mathcal L$ from \cref{eq:flowers-objective} \\
Mean-group relaxation & Deterministic signs, separate positive and negative groups per channel, grouped over batch and spatial positions \\
Concentration weight & $\lambda_{\mathrm{conc}}=10$ \\
Threshold-scale EMA decay & 0.99 \\
Threshold-scale floor & $10^{-3}$ \\
Margin gain & $\gamma=3$ \\
Collision pair samples & 8192 minibatch spatial-token pairs \\
Augmentation & Random resized crop, horizontal flip, brightness/hue/saturation/contrast jitter of 0.05 \\
\bottomrule
\end{tabularx}
\caption{Shared settings for Study 4.}
\label{tab:flowers102-shared-settings}
\end{table}
Each collision pair is formed by two independent uniform draws, with replacement, from the minibatch's spatial tokens; self-pairs and same-image pairs are permitted.
These pairs form the minibatch estimate in \cref{eq:flowers-separation} using the threshold collision of \cref{eq:flowers-threshold-collision}.

The mean-group relaxation forms separate positive- and negative-sign groups within each channel over batch and spatial positions, subtracts each nonempty group's mean logit, and adds the resulting centered residual to the corresponding hard sign.
For minibatch size $B$, let $\tilde h_r(x_s,u)$ denote the relaxed latent corresponding to hard sign $h_r(x_s,u)$.
The concentration term is
\begin{equation}
\mathcal L_{\mathrm{conc}}
=\frac{1}{B b H_z W_z}
\sum_{s=1}^{B}\sum_{r=1}^{b}\sum_u
\bigl(\tilde h_r(x_s,u)-h_r(x_s,u)\bigr)^2.
\label{eq:concentration-reduction}
\end{equation}
Here $u$ ranges over the $H_zW_z$ latent spatial positions.
Reported reconstruction metrics use hard signs fed directly to the decoder.

\begin{table}[htbp]
\centering\small
\begin{tabular}{lrrrr}
\toprule
Setting & Bits & $\lambda_{\mathrm{org}}$ & Nominal bpp & Selected epoch \\
\midrule
No organization & 16 & $0$ & 0.25 & 117 \\
Weaker organization & 16 & $10^{-7}$ & 0.25 & 145 \\
Baseline & 16 & $10^{-6}$ & 0.25 & 148 \\
Stronger organization & 16 & $10^{-5}$ & 0.25 & 148 \\
Capacity ablation & 32 & $10^{-6}$ & 0.50 & 150 \\
\bottomrule
\end{tabular}
\caption{Ablations and validation-selected checkpoint epochs.}
\label{tab:flowers102-ablation-settings}
\end{table}

\begin{table}[htbp]
\centering\small
\begin{tabular}{lrrr}
\toprule
Setting & Test MSE & Test PSNR (dB) & Test MS-SSIM \\
\midrule
No organization & 0.01034 & 19.86 & 0.738 \\
Weaker organization & 0.00367 & 24.36 & 0.882 \\
Baseline & 0.00329 & 24.83 & 0.896 \\
Stronger organization & 0.00350 & 24.56 & 0.892 \\
Capacity ablation & 0.00258 & 25.88 & 0.920 \\
\bottomrule
\end{tabular}
\caption{Additional hard-code reconstruction metrics for the same checkpoints as \cref{tab:flowers102-results}. Each setting is a single-seed run.}
\label{tab:flowers102-reconstruction-metrics}
\end{table}

\begin{figure}[htbp]
\centering
\includegraphics[width=\linewidth]{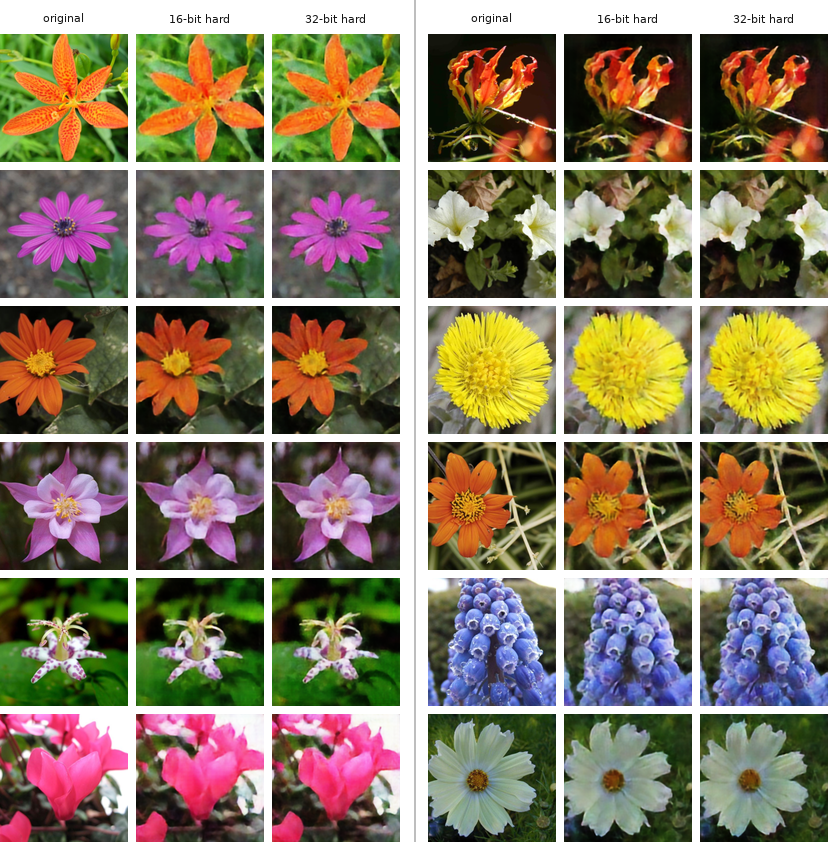}
\caption{Qualitative Flowers102 test reconstructions. Each triptych shows the original and hard-code reconstructions from the validation-selected 16-bit and 32-bit checkpoints. These examples use the hard sign representation evaluated quantitatively.}
\label{fig:flowers102-reconstruction-examples}
\end{figure}

\section{Teacher-Defined Flowers102 Study Details}
\label{app:flowers102-teacher-details}

The teacher-defined study uses the same source split, image size, and evaluation preprocessing as Study 4, with a single seed of 1337.
Representation learning uses only the baseline-relative teacher relation through favored and disfavored pair requirements.
The secondary synthesis stage begins after the representation checkpoint is selected and keeps the encoder frozen.

\begin{table}[htbp]
\centering\small
\begin{tabularx}{\linewidth}{>{\raggedright\arraybackslash}p{0.29\linewidth}X}
\toprule
Component & Setting \\
\midrule
Dataset split & 6149 train / 1020 validation / 1020 test, as in Study 4 \\
Input & $256\times256$ RGB \\
Teacher & Frozen DINO ViT-S/8, final spatial patch tokens, normalized before cosine evaluation \\
Teacher source & Official \texttt{facebookresearch/dino} \\
Teacher checkpoint & \texttt{dino\_deitsmall8\_pretrain.pth}, official release \\
Teacher grid & $32\times32$ patch-token embeddings \\
Student & The convolutional analysis transform used in Study 4 \\
Hard code & Sixteen zero-threshold sign bits per spatial token \\
Training token sample & 1024 spatial tokens sampled uniformly without replacement per minibatch \\
Evaluation token sample & 1024 spatial tokens sampled uniformly without replacement with fixed validation/test seeds \\
Candidate mask & Cross-image pairs among sampled tokens \\
Temperatures & Code $\tau_c=0.25$; teacher $\tau_T=0.1$ \\
Alignment weight & 1 \\
Representation objective & $\mathcal L_{\mathrm{teacher}}$ from \cref{eq:dino-teacher-loss} \\
Representation optimizer & AdamW, learning rate $5\times10^{-4}$, zero weight decay \\
Representation schedule & 100-step warmup from factor 0.1, cosine decay to $10^{-6}$ \\
Representation training & 150 epochs, batch size 32, seed 1337 \\
Representation selection & Validation teacher loss; selected epoch 146 \\
Decoder stage & Frozen encoder, hard signs only, MSE loss \\
Decoder architecture & The synthesis-transform form used in Study 4 \\
Decoder optimizer & AdamW, learning rate $5\times10^{-4}$, zero weight decay \\
Decoder schedule & 100-step warmup from factor 0.1, cosine decay to $10^{-6}$ \\
Decoder training/selection & 150 epochs, batch size 32; validation MSE selects epoch 140 \\
\bottomrule
\end{tabularx}
\caption{Settings for Study 5.}
\label{tab:flowers102-teacher-settings}
\end{table}

The DINO implementation uses commit \texttt{7c446df5b9f45747937fb0d72314eb9f7b66930a} of the official \texttt{facebookresearch/dino} repository.
The teacher receives the same augmented crop as the student after applying teacher-specific normalization.
The teacher relation and its favored and disfavored weights are defined in \cref{eq:teacher-row-relation,eq:teacher-signed-weights}.

For numerical stability, per-bit agreement probabilities are floored at the floating-point dtype's smallest positive normal value before their logarithms are summed, and the resulting whole-code probability is clipped to $[\varepsilon,1-\varepsilon]$, where $\varepsilon$ is machine epsilon, before evaluating the logit in \cref{eq:teacher-centered-margin}.
For the centered margin to retain a numerically representable region below the neutral collision level, the clipping floor must satisfy $\varepsilon<2^{-b}$; the reported 16-bit loss is evaluated in float32, for which this condition holds.
The clipped loss can differ from the ideal mathematical expression in saturated regions and can have zero gradients through the clipping operation there.
These operations affect only the smooth training loss; reported hard-collision metrics use hard code equality.

The reported favored-pair enrichment and disfavored-pair suppression use pooled empirical token collision as their baseline.
Reported teacher-collision, teacher-cosine, and reconstruction metrics use the conventions in \ref{app:metric-definitions}.

\begin{figure}[htbp]
\centering
\includegraphics[height=0.84\textheight,keepaspectratio]{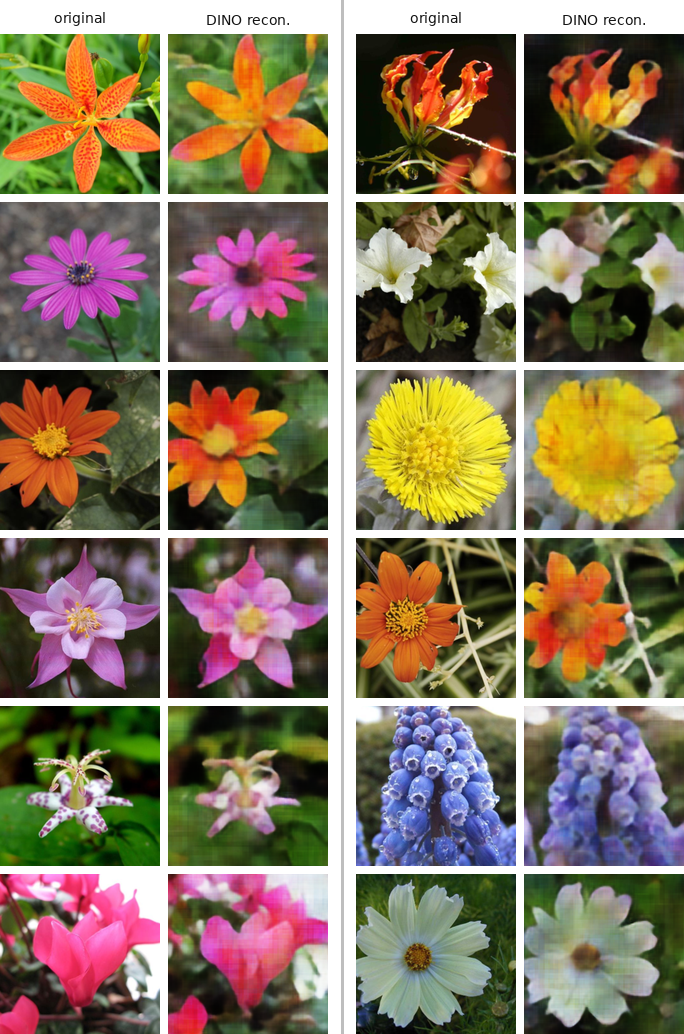}
\caption{Qualitative Flowers102 test examples from the secondary decoder trained on frozen teacher-guided hard codes. Each pair shows an original image and its reconstruction. Pixel reconstruction did not participate in learning the encoder.}
\label{fig:flowers102-teacher-reconstruction-examples}
\end{figure}

\end{document}